\documentclass[final,3p,times]{elsarticle}
\usepackage{amsmath,amssymb,amsfonts}
\usepackage{amsthm}
\usepackage{graphicx}
\usepackage{hyperref}
\usepackage{algc}
\newcommand{\Real}{\mathbb{R}}
\newcommand{\bfx}{\boldsymbol{x}}

\usepackage{xcolor}

\usepackage{amssymb,amsmath,graphicx}

\def\tr{{\raise0pt\hbox{$\scriptscriptstyle\top$}}}

\newtheorem{example}{Example}
\newtheorem{proposition}{Proposition}
\newtheorem{theorem}{Theorem}
\newtheorem{remark}{Remark}
\newtheorem{corollary}{Corollary}
\newtheorem{definition}{Definition}



\bibliographystyle{model1-num-names}








\definecolor{redcolor}{rgb}{0.7,0.3,0.3}

\begin{document}

\begin{frontmatter}

\title{Correction of AI systems by linear discriminants: Probabilistic foundations}

\author[LeicMath,NNU]{A.N. Gorban\corref{cor1}}
\ead{a.n.gorban@le.ac.uk}
\author[LETI]{A. Golubkov}
\author[LeicMath]{B. Grechuk}
\ead{bg83@le.ac.uk}
\author[LeicMath,NNU]{E.M. Mirkes}
\ead{em322@le.ac.uk}
\author[LeicMath,NNU]{I.Y. Tyukin}
\ead{i.tyukin@le.ac.uk}

\address[LeicMath]{Department of Mathematics, University of Leicester, Leicester, LE1 7RH, UK}
\address[NNU]{Lobachevsky University, Nizhni Novgorod, Russia}
\address[LETI]{Saint-Petersburg State Electrotechnical University, Saint-Petersburg,  Russia}

\cortext[cor1]{Corresponding author}

\begin{abstract}
{Artificial Intelligence (AI) systems  sometimes make errors and will make errors in the future,  from time to time. These errors are usually unexpected, and can lead to dramatic consequences. }Intensive development of AI and its practical applications makes the problem of errors more important. Total re-engineering of the systems can create new errors and is not always possible due to the resources involved. The important challenge is to develop  fast methods to correct errors without damaging existing skills. We formulated the technical requirements to the `ideal' correctors. Such correctors include binary classifiers, which separate the situations with high risk of errors from the situations where the AI systems work properly.   Surprisingly, for essentially high-dimensional data such methods are possible:  simple linear Fisher discriminant can separate the situations with errors from correctly solved tasks even for exponentially large samples. The paper presents  the  probabilistic basis for fast  non-destructive correction of AI systems.  A series of new stochastic separation theorems is proven.  { These theorems provide new instruments for fast non-iterative correction of errors  of legacy AI systems. The new approaches become efficient in high-dimensions, for correction of high-dimensional systems in high-dimensional world (i.e. for processing of essentially high-dimensional data by large systems).}

We prove that this separability property holds for a wide class of distributions including log-concave distributions and distributions with a special `SMeared Absolute Continuity' (SmAC)  property defined through relations between the volume and probability of sets of vanishing volume. These classes are much wider than the Gaussian distributions. The requirement of independence and identical distribution of data is significantly relaxed. The results are supported by computational analysis of empirical data sets.
\end{abstract}

\begin{keyword}
big data, non-iterative learning, error correction, measure concentration, blessing of dimensionality, linear discriminant
\end{keyword}

\end{frontmatter}

\section{Introduction}

\subsection{Errors and correctors of AI systems}

State-of-the art Artificial Intelligence (AI) systems for data mining consume huge and fast-growing collections of heterogeneous data. Multiple versions of these huge-size systems have been deployed to date on millions of computers and gadgets across many various platforms. Inherent uncertainties in data result in unavoidable mistakes (e.g. mislabelling, false alarms, misdetections, wrong predictions etc.) of the AI data mining systems, which require judicious use. These mistakes become gradually more important because of numerous and increasing number of real life AI applications in such sensitive areas as security, health care, autonomous vehicles and robots. Widely advertised success in testing of AIs in the laboratories often can  not be reproduced in realistic operational conditions. Just for example,  Metropolitan Police's facial recognition matches (`positives') are reported 98\% inaccurate (false positive), and South Wales Police's matches are reported 91\% inaccurate \cite{Foxx2018}. {Later on, experts in statistics mentioned that `figures showing inaccuracies of 98\% and 91\% are likely to be a misunderstanding of the statistics and are not verifiable from the data presented' \cite{face2018}. Nevertheless, the large number of false positive recognitions of `criminals' leads to serious concerns about security of AI use because people have to prove their innocence as police are wrongly identifying thousands of innocent citizens as criminals.}

The successful functioning of any AI system in realistic operational condition dictates that mistakes must be detected and corrected immediately and locally in the networks of collaborating  systems. Real-time correction of the mistakes by re-training is not always viable due to the resources involved. Moreover, the re-training could introduce new mistakes and damage existing skills.
All AI systems make errors. Correction of these errors is gradually becoming an increasingly important problem. 

The technical requirements to the `ideal' correctors can be formulated as follows \cite{GorTyukPhil2018}. Corrector should: (i)  be simple; (ii)  not damage the skills of the legacy system in the situations, where they are working successfully;  (iii) allow fast non-iterative learning; and (iv) allow correction of the new mistakes without destroying  the previous fixes.

\begin{figure}
\centering
\includegraphics[width=0.5\textwidth]{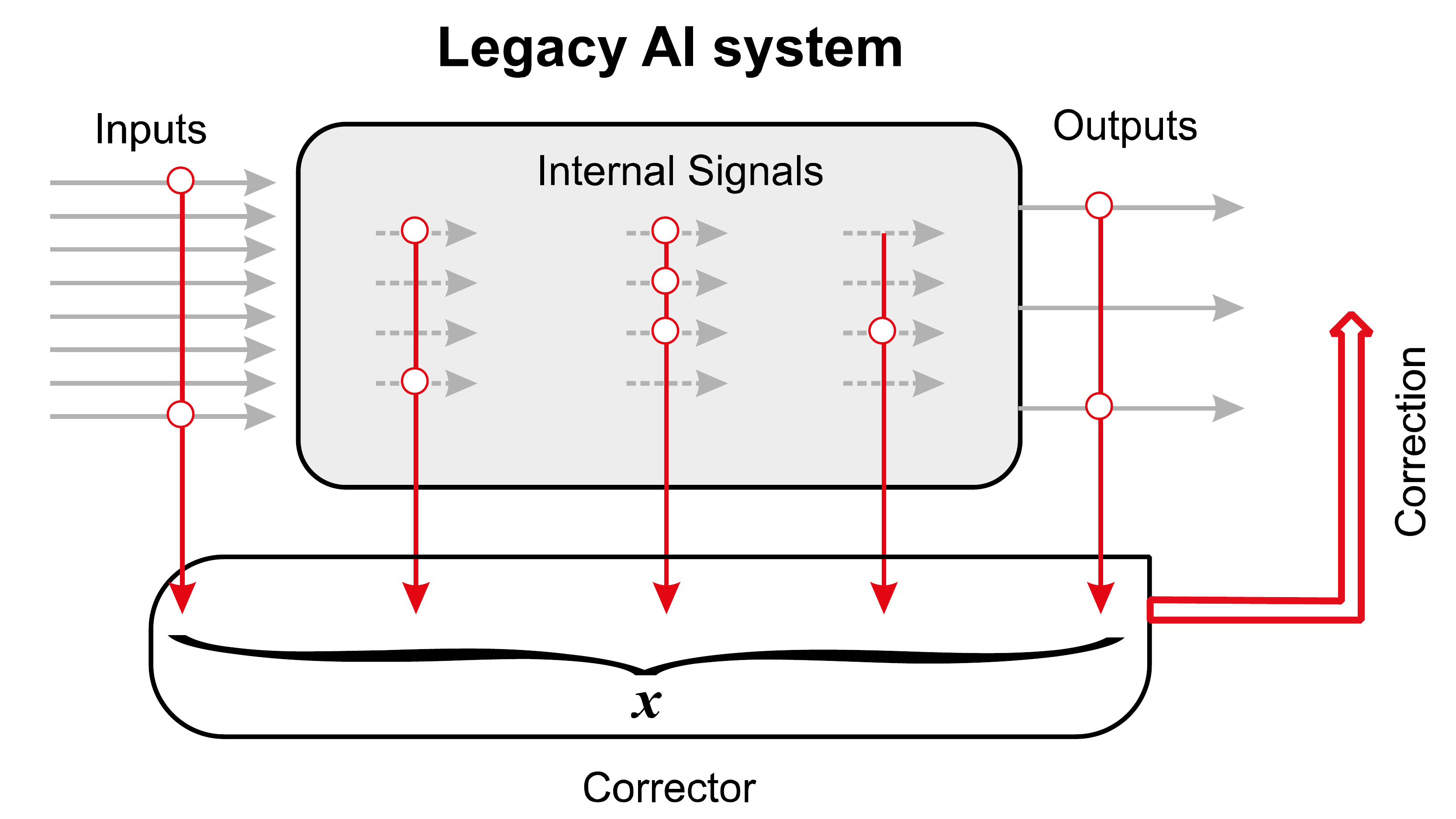}
\caption {Corrector of AI errors. Inputs for this corrector may include input signals, and any internal or output signal of the AI system (marked by circles).}
\label{Fig:Corrector}
\end{figure}

The recently proposed architecture of such a corrector is simple \cite{GorTyuRom2016}. It consists of two ideal devices:
\begin{itemize}
\item A binary classifier for separation of the situations with possible mistakes form the situations with correct functioning (or, more advanced, separation of  the situations with high risk of mistake from the situations with low risk of mistake);
\item A new decision rule for the situations with possible mistakes (or with high risk of mistakes).
\end{itemize}
A binary classifier is the main and universal part of the corrector for any AI system, independently of tasks it performs. The corrected decision rule is more specific.

Such   a corrector is an external system, and  the main legacy AI system remains unchanged (Fig.~\ref{Fig:Corrector}). One corrector can correct several errors (it is useful to cluster them before corrections). Cascades of correctors are employed for further correction of more errors \cite{GorTyukPhil2018}: the AI system with the first corrector is a new legacy AI system and can be corrected further (Fig.~\ref{Fig:AIcorrectorsCascade}).

\begin{figure}
\centering
\includegraphics[width=0.5\textwidth]{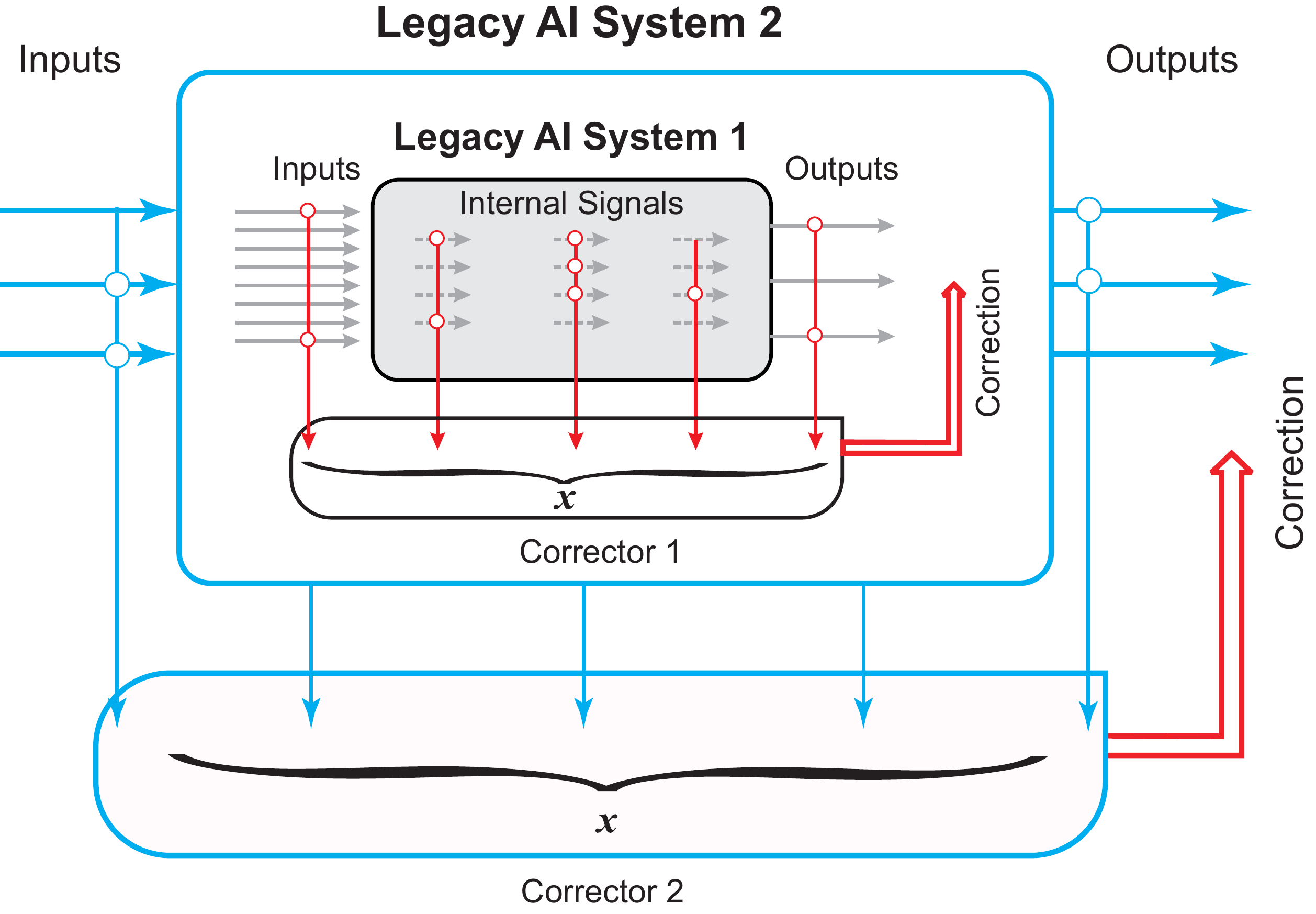}
\caption{Cascade of AI correctors. In this diagram, the original legacy AI system (shown as Legacy AI System 1) is supplied with a corrector altering its responses.  The combined new AI system can in turn be augmented by another corrector, leading to a cascade of AI correctors. }\label{Fig:AIcorrectorsCascade}
\end{figure}

\subsection{Correctors and blessing of dimensionality}

Surprisingly, if the dimension of the data is high enough, then the classification problem in corrector construction can be solved by simple linear Fisher's discriminants even if the data sets are exponentially large with respect to dimension. This phenomenon is the particular case of the {\em blessing of dimensionality}. This term was coined \cite{Kainen1997,Donoho2000} as an antonym of the `curse of dimensionality'  for the group of high-dimension geometric phenomena which simplify data mining in high dimensions.

Both curse and blessing of dimensionality are manifestations of the measure concentration phenomena, which were discovered in the foundation of statistical physics and analysed further in the context of geometry, functional analysis, and probability theory (reviewed by \cite{GiannopoulosMilman2000,GorTyukPhil2018,Ledoux2005}). The `sparsity' of high-dimensional spaces and  concentration of measure phenomena make some low-dimensional approaches impossible in high dimensions. This problem is widely known as the `curse of dimensionality' \cite{Trunk1979,Donoho2000,Pestov2013}.  The same phenomena can be efficiently employed for creation of new, high-dimensional methods, which seem to be much simpler in high dimensions  than the  low-dimensional approaches. This is the `blessing of dimensionality' \cite{Kainen1997,Donoho2000,AndersonEtAl2014,GorTyuRom2016,GorTyukPhil2018}.

Classical theorems about concentration of measure  state that random points in a high-dimensional data distribution are concentrated in a thin layer near an average or median level set of a Lipschitz function (for introduction into this area we refer to \cite{Ledoux2005}). The stochastic separation theorems \cite{GorbanRomBurtTyu2016,GorbTyu2017} revealed the fine structure of these thin layers: the random  points  are all linearly separable by simple Fisher's discriminants from the rest of the set even for exponentially large random sets. Of course, the probability distribution should be `genuinely' high-dimensional for all these concentration and separation theorems.

The correctors with higher abilities can be constructed on the basis of small neural networks with uncorrelated neurons \cite{GorbanRomBurtTyu2016} but already single-neuron correctors (Fisher's discriminants) can help in explanation of a wealth of empirical evidence related to in-vivo recordings of `Grandmother' cells and `concept' cells \cite{GorTyukPhil2018,TyukinBrain2017}.

{The theory and the concept have been tested in several case-studies of which the examples are provided in \cite{GorbanRomBurtTyu2016, Tyukin2017a}. In these works, the underlying use-case was the problem of improving performance of legacy AI systems built for pedestrian detection in video streams. In \cite{GorbanRomBurtTyu2016} we showed that spurious errors of a legacy Convolutional Neural Network with VGG-11 architecture could be learned away in a one-shot manner at a near-zero cost to exiting skills of the original AI \cite{PatentRomanenkoGorTyu}. In \cite{Tyukin2017a} we demonstrated that A) the process can be fully automated to allow an AI system (teacher AI) to teach another (student AI), and B) that the errors can be learned away in packages. The approach, i.e. Fisher discriminants, stays largely the same, albeit in the latter case cascaded pairs introduced in \cite{GorbanRomBurtTyu2016} were used instead of mere single hyperplanes.}

\subsection{The contents of this paper }

In this paper we aim to present  probabilistic foundations for correction of errors in AI systems in high dimensions. We develop a new series of stochastic separation theorems, prove some hypotheses formulated in our paper \cite{GorbTyu2017}, answer one general problem about stochastic separation published by Donoho and Tanner \cite{DonohoTanner2009}, and specify general families of probability distributions with the stochastic separation properties (the measures with `SMeared Absolute Continuity' property).

The first highly non-trivial question in the analysis of curse and blessing of dimensionality is: what is the dimensionality of data? What does it mean that `the dimension of the data is high enough'? Of course, this dimension does not coincide with the dimension of the data space and can be significantly lower. The appropriate definition of data dimension depends on the problem we need to solve. Here we would like to create linear or even Fisher's classifier for separation of mistakes from the areas, where the legacy AI system works properly. Therefore, the desirable evaluations of data dimension should characterise and account for the possibility to solve this problem with high accuracy and low probability of mistakes. We will return to the evaluation of this probability in all sections of the paper.

In Sec.~\ref{Sec:DiscrimPrelim} we define the notions of linear separability and Fisher's separability and   present a basic geometric construction whose further development allows us to prove Fisher's separability of high-dimensional data and to estimate the corresponding dimension of the data in the databases.

The standard assumption in machine learning is independence and identical distribution (i.i.d.) of data points \cite{Vapnik2000,Cucker2002}. On the other hand, in real operational conditions data are practically never i.i.d. Concept drift, non-identical distributions and various correlations with violation of independence are inevitable. In this paper we try to meet this non-i.i.d. challenge, partially, at least. In particular, the geometric constructions introduced in Sec.~\ref{Sec:DiscrimPrelim} and Theorem~\ref{Prop:ExclVol} do not use the i.i.d. hypothesis. This is an important difference from our previous results, where we assumed i.i.d. data.

In Sec.~\ref{Sec:GeneralSST} we find the general conditions for existence of linear correctors (not compulsory Fisher's discriminants). The essential idea is: the probability that a random data point will belong to a set with small volume (Lebesgue measure) should  also be small (with precise specification, what `small' means). Such a condition is a modified or `smeared' property of absolute continuity (which means that the probability of a set of zero volume is zero). This {\em general stochastic separation theorem} (Theorem~\ref{th:separation}) gives the answer to a closely related  question asked in \cite{DonohoTanner2009}.)

Existence of a linear classifier for corrector construction is a desirable property, and this classifier can be prepared using Support Vector Machine algorithms or Rosenblatt's Perceptron learning rules. Nevertheless, Fisher's linear discriminants seem to be more efficient because they are  non-iterative and robust. In Sections~\ref{Sec:Fisher}-\ref{Sec:QOFish} we analyse Fisher's separability in high dimension. In particular, we prove stochastic separability thorems for log-concave distributions (Sec.~\ref{Sec:log-concave}), and for non-i.i.d. distributions of data points (Sec.~\ref{Sec:QOFish}).

Stochastic separability of real databases is tested in Sec.~\ref{Sec:test}. Popular open access database is used. We calculate, in particular, the probability distribution of $p_y$ that is the probability that a randomly chosen data point $z$ cannot be separated by Fisher's discriminant  from a data point $y$. The value $p_y$ depends on a random data point $y$ and is  a random variable.

The probability $p_y$ that a randomly chosen data point $z$ cannot be separated by Fisher's discriminant  from a data point $y$ depends on a random data point $y$ and is a random variable. Probability distribution of this variable characterises the separability of the data set. We evaluate distribution and moments of $p_y$ and use them for the subsequent analysis of data. The comparison of the mean and variance of $p_y$ with these parameters, calculated for equidistributions in a $n$-dimensional ball or sphere, gives us the idea about what is the real dimension of data. There are many different approaches for evaluation of data dimension. Each definition is needed for specific problems. Here we introduce and use new approach to data dimension   from the Fisher separability analysis point of view.

\subsection{Historical context}

{The {\em curse of dimensionality} is a well-know idea introduced by Bellman in 1957 \cite{Bellman1957}. He considered the problem of multidimensional optimisation and noticed that `the effective analytic of a large number of even simple equations, for example, linear equations, is a difficult affair' and the determination of the maximum `is quite definitely not routine when the number of variables is large'. He used the term `curse'   because it `has hung over the head' for many years and `there is no need to feel discouraged about  the possibility of obtaining significant results despite it.' Many other effects were added to that idea during decades especially in data analysis \cite{Pestov2013}. The idea of `blessing of dimensionality' was expressed much later \cite{Kainen1997,Donoho2000,AndersonEtAl2014,ChenEtAl2013}.}

 {In 2009, Donoho and Tanner described  the blessing of dimensionality effects as {\em three surprises}. The main of them is linear separability of a random point from a large finite random set with high probability \cite{DonohoTanner2009}. They proved this effect for high-dimensional Gaussian i.i.d. samples. In  more general settings,  this effect was supported  by many numerical experiments. This separability was discussed as a very surprising property: `For humans stuck all their lives in three-dimensional space, such a situation is hard to visualize.' These effects have deep connections with the backgrounds of statistical physics and modern geometric functional analysis \cite{GorTyukPhil2018}.}

{In 2016, we added the {\em surprise number four}: this linear separation may be performed by linear Fisher's discriminant \cite{GorTyuRom2016}. This result allows us to decrease significantly  the complexity of the  separation problem: non-iterative (one-shot) approach avoids solution of `a large number of even simple' problems and provides a one more step from Bellman's curse  to the modern blessing of dimensionality. The first theorems were proved in simple settings: i.i.d. samples from the uniform distribution in a multidimensional ball \cite{GorTyuRom2016}. The next step was extension of these theorems onto  i.i.d. samples from the bounded product distributions \cite{GorbTyu2017}. The statements of these two theorems are cited below in Sec.~\ref{Sec:Fisher} (for proofs we refer to \cite{GorbTyu2017}). These results have been supported by various numerical experiments and applications. Nevertheless, the distributions of real life data are far from being i.i.d. samples. They are not distributed uniformly in a ball. The hypothesis of the product distribution is also unrealistic despite of its popularity (in data mining it is known as the `naive Bayes' assumption).}

{We formulated several questions and hypotheses in \cite{GorbTyu2017}.  First of all, we guessed that all {\em essentially high-dimensional} distributions have the same linear (Fisher's) separability properties as the equidistribution in a high-dimensional ball. The question was: how to characterise the class of these   essentially high-dimensional distributions? We also proposed several specific hypotheses about the classes of distributions with linear (Fisher's) separability property. The most important was the hypothesis about {\em log-concave } distributions.  In this paper, we answer the question for characherisation of essentially high dimension distributions both for Fisher's separability  (distributions with bounded support, which satisfy estimate (\ref{bounded}), Theorem~\ref{Theorem:ExclVol2}) and general linear separability (SmAC measures, Definition~\ref{Def:SmAC}, Theorem~\ref{th:separation}, this result answers also to the Donoho-Tanner question). The hypothesis about log-concave-distributions is  proved (Theorem~\ref{th:logconc}). We   try to avoid the i.i.d. hypothesis as far as it was possible (for example, in Theorem~\ref{Theorem:ExclVol2} and special Sec.~\ref{Sec:QOFish}). Some of these results were announced in preprint \cite{GorbanGrechukTyukin2018}.}

{Several important aspects of the problem of stochastic separation in machine learning remain outside the scope of this paper.  First of all, we did not discuss the modern development of the stochastic separability theorems for separation by simple non-linear classifies like small cascades of neurons with independent synaptic weights. Such separation could be orders more effective than the linear separation and still uses non-iterative one-shot algorithms. It was introduced in \cite{GorTyuRom2016}. The problem of separation of sets of data points (not only a point from a set) is  important for applications in machine learning and knowledge transfer between AI systems. The generalised stochastic separation theorems of data sets give the possibility to organise the knowledge transfer without iterations \cite{Tyukin2017a}.}

{An alternative between essentially high-dimensional data with thin shell concentrations,
stochastic separation theorems and efficient linear methods on the one hand, and essentially low-dimensional data with possibly efficient complex nonlinear methods on the other hand, was discussed in \cite{GorTyukPhil2018}.  These two cases could be joined: first, we can extract the most interesting low-dimensional structure and then analyse  the residuals as an essentially high-dimensional random set, which obeys stochastic separation theorems. }

{The trade-off between simple models (and their `interpretability') and more complex non-linear black-box models (and  their `fidelity') was discussed by many authors. Ribeiro at al. proposed to study trade-off between local linear and global nonlinear  classifiers  as a basic example \cite{Ribeiro2016}.  Results on stochastic separation convince that the simple linear discriminants are good global classifiers for high-dimensional data, and complex relations between linear and nonlinear models can reflect the relationships between low-dimensional and high-dimensional components in the variability of the data.  }

{Very recently (after this paper was submitted), a new approach for analysis of classification reliability by the `trust score' is proposed \cite{Jiang2018}. The  `trust score' is closely connected to a heuristic estimate of the dimension of the data cloud by identification of the `high-density-sets', the sets that have high density of points nearby. The samples  with low density nearby are filtered out. In the light of the stochastic separation theorems, this approach can be considered as the extraction  of the essentially low-dimensional  fraction, the points, which are concentrated near a low-dimensional object (a low-dimensional manifold, for example). If this fraction is large and the effective dimension is small, then the low-dimensional  methods can work efficiently. If the low-dimensional fraction is low (or effective dimension is high), then the low-dimensional methods can fail but we enjoy in these situations the blessing of dimensionality with Fisher's discriminants. }

\section{Linear discriminants in high dimension: preliminaries}\label{Sec:DiscrimPrelim}

Throughout the text,  $\Real^n$ is the $n$-dimensional linear real vector space. Unless stated otherwise, symbols $\boldsymbol{x} =(x_{1},\dots,x_{n})$ denote elements of $\Real^n$,  $(\boldsymbol{x},\boldsymbol{y})=\sum_{k} x_{k} y_{k}$ is the inner product of $\boldsymbol{x}$ and $\boldsymbol{y}$, and $\|\boldsymbol{x}\|=\sqrt{(\boldsymbol{x},\boldsymbol{x})}$ is the standard Euclidean norm  in $\Real^n$. Symbol $\mathbb{B}_n$ stands for the unit ball in $\Real^n$ centered at the origin: $\mathbb{B}_n=\{\boldsymbol{x}\in\Real^n| \ \left(\boldsymbol{x},\boldsymbol{x}\right)\leq 1\}$.  $V_n$ is the $n$-dimensional Lebesgue measure, and $V_n(\mathbb{B}_n)$ is the volume of unit ball.  $\mathbb{S}^{n-1}\subset \mathbb{R}^{n}$ is the unit sphere in $\mathbb{R}^{n}$. For a finite set $Y$, the number of points in $Y$ is $|Y|$.

\begin{definition}\label{Def:LinSep}A point $\boldsymbol{x}\in  \mathbb{R}^n$ is {\em linearly separable} from a set $Y \subset \mathbb{R}^n$, if there exists a linear functional $l$ such that $l(\boldsymbol{x})>l(\boldsymbol{y})$ for all $\boldsymbol{y}\in Y$.
\end{definition}

\begin{definition}\label{Def:FishSep}A set $S \subset \mathbb{R}^n$ is {\em linearly separable} or {\em 1-convex} \cite{convhull} if for each $\boldsymbol{x}\in S$ there exists a linear functional $l$ such that $l(\boldsymbol{x})>l(\boldsymbol{y})$ for all $\boldsymbol{y}\in S$, $\boldsymbol{y}\neq \boldsymbol{x}$.
\end{definition}

Recall that a point $\boldsymbol{x} \in K \subset  \mathbb{R}^n$ is an {\em extreme point} of a convex compact $K$ if there exist no points $\boldsymbol{y},\boldsymbol{z}\in K$, $\boldsymbol{y}\neq \boldsymbol{z}$ such that $\boldsymbol{x}=(\boldsymbol{y}+\boldsymbol{z})/2$. The basic examples of linearly separable sets are  extreme points of convex compacts: vertices of convex polyhedra or points  on the $n$-dimensional sphere. Nevertheless, the sets of extreme points of a compact may be not linearly separable  as is demonstrated by simple 2D examples \cite{Simon2011}.

If it is known that a point $\boldsymbol{x}$ is linearly separable from a   finite set $Y$ then the definition of the separating functional $l$ may be performed by linear Support Vector Machine (SVM) algorithms, the Rosenblatt perceptron algorithm or other methods for solving of linear inequalities. These computations may be rather costly and robustness of the result may not be guaranteed.

{With regards to computational complexity, the worst-case estimate for SVM is $O(M^3)$ \cite{bordes2005fast,chapelle2007training}, where $M$ is the number of elements in the dataset. In practice, however, complexity of the soft-margin quadratic support vector machine problem is between $O(M^2)$ and $O(M^3)$, depending on parameters and the problem at hand \cite{bordes2005fast}.  On the other hand, classical Fisher's discriminant requires $O(M)$ elementary operations to construct covariance matrices  followed by $o(n^3)$ operations needed for the $n\times n$ matrix inversion, where $n$ is data dimension. }

 Fisher's linear discriminant is computationally   cheap (after the standard pre-processing),  simple, and robust.

We use a convenient general scheme for creation of Fisher's linear discriminants \cite{Tyukin2017a,GorTyukPhil2018}. For separation of single  points from a data cloud it is necessary:
 \begin{enumerate}
 \item Centralise the cloud (subtract the mean point from all data vectors).
 \item Escape strong multicollinearity, for example, by principal component analysis and deleting minor components, which correspond to the small eigenvalues of empirical covariance matrix.
\item Perform whitening (or spheric transformation), that is a linear transformation, after that the covariance matrix becomes the identity matrix. In principal components, whitening is simply the normalisation of coordinates to unit variance.
\item The linear inequality for separation of a point $\boldsymbol{ x}$ from the cloud $Y $ in new coordinates is
\begin{equation}\label{discriminant}
(\boldsymbol{x},\boldsymbol{y})\leq \alpha (\boldsymbol{x},\boldsymbol{x}),
\end{equation}
 for all $\boldsymbol{y}\in Y$,  where $\alpha\in [0,1)$ is a threshold.
\end{enumerate}

In real-life problems, it could be difficult to perform the precise whitening but a rough approximation to this transformation could also create useful discriminants (\ref{discriminant}). We will call `Fisher's discriminants' all the discriminants created non-iteratively by inner products (\ref{discriminant}), with some extension of  meaning.

\begin{definition}A finite set $F \subset {\mathbb R}^n$ is \emph{Fisher-separable} with threshold $\alpha \in (0,1)$ if inequality (\ref{discriminant})
holds for all $\boldsymbol{ x}, \boldsymbol{ y} \in F$ such that $\boldsymbol{ x}\neq  \boldsymbol{ y}$. The set $F$ is called \emph{Fisher-separable} if there exists some $\alpha\in [0,1)$ such that $F$ is Fisher-separable with threshold $\alpha$. 
\end{definition}

 Fisher's separability implies linear separability but not vice versa.

Inequality (\ref{discriminant}) holds for vectors $\boldsymbol{x}$, $\boldsymbol{y}$ if and only if $\boldsymbol{x}$ does not belong to a ball (Fig.~\ref{Fig:Excluded}) given by the inequality:
\begin{equation}\label{excludedvolume}
\left\{\boldsymbol{z} \ \left| \ \left\|\boldsymbol{z}-\frac{\boldsymbol{y}}{2\alpha }\right\|< \frac{\|\boldsymbol{y}\|}{2\alpha} \right.  \right\}.
\end{equation}
The volume of such balls can be relatively small.

For example, if $Y$ is a subset of $\mathbb{B}_n$, then the volume of each ball (\ref{excludedvolume}) does not exceed $\frac{1}{(2\alpha)^n} V_n(\mathbb{B}_n)$. Point $\boldsymbol{x}$ is separable from a set $Y$ by Fisher's linear discriminant with threshold $\alpha$ if it does not belong to the union of these excluded balls. The volume of this union does not exceed
$$\frac{|Y|}{(2\alpha)^n} V_n(\mathbb{B}_n).$$
Assume that $\alpha> 1/2$.  If $|Y|<b^n$ with $1<b < 2 \alpha$ then the fraction of excluded volume in the unit ball decreases exponentially with dimension $n$ as $\left(\frac{b}{2\alpha}\right)^n$.

\begin{figure}
\centering
\includegraphics[width=0.3\textwidth]{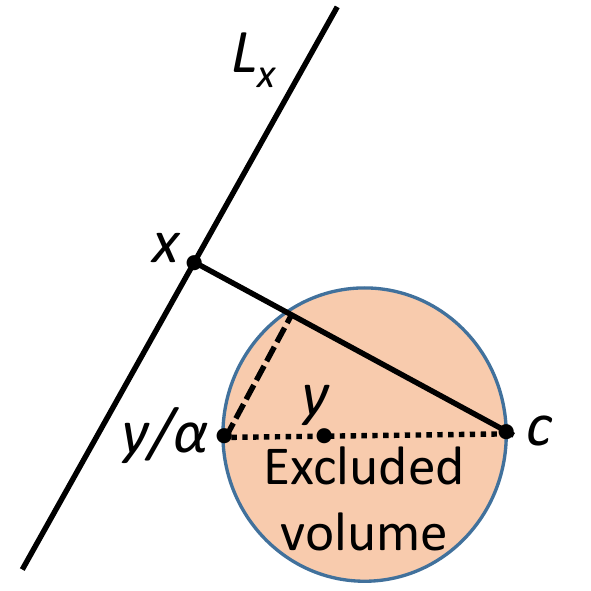}
\caption{Diameter of the filled ball (excluded volume) is the segment $[\boldsymbol{c},\boldsymbol{y}/\alpha]$. Point $\boldsymbol{x}$ should not belong to the excluded volume to be separable from $\boldsymbol{y}$ by the linear discriminant (\ref{discriminant}) with threshold $\alpha$. Here, $\boldsymbol{c}$ is the origin (the centre), and $L_x$ is the hyperplane such that $(\boldsymbol{x},\boldsymbol{z})=(\boldsymbol{x},\boldsymbol{x})$ for $\boldsymbol{z} \in L_x$.  A point $\boldsymbol{x}$ should not belong to the union of such balls for all $\boldsymbol{y} \in Y$ for separability from a set $Y$.}
\label{Fig:Excluded}
\end{figure}

\begin{proposition}\label{Prop:ExclVol}Let $0<\theta<1$, $Y\subset \mathbb{B}_n$ be a finite set, $|Y|<\theta (2\alpha)^n$, and $\boldsymbol{x}$ be a randomly chosen point from the equidistribution in the unit ball. Then with probability $p>1-\theta$ point $\boldsymbol{x}$ is Fisher-separable from $Y$ with threshold $\alpha$ (\ref{discriminant}).
\end{proposition}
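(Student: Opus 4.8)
The plan is a direct excluded-volume (union-bound) argument resting on the geometric characterisation already recorded just above the statement. First I would note that, by the equivalence between (\ref{discriminant}) and non-membership in the ball (\ref{excludedvolume}), for a fixed $\boldsymbol{y}$ the discriminant inequality $(\boldsymbol{x},\boldsymbol{y})\le\alpha(\boldsymbol{x},\boldsymbol{x})$ fails exactly when $\boldsymbol{x}$ lies in the open ball
$$B_{\boldsymbol{y}}=\left\{\boldsymbol{z}\ \left|\ \left\|\boldsymbol{z}-\frac{\boldsymbol{y}}{2\alpha}\right\|<\frac{\|\boldsymbol{y}\|}{2\alpha}\right.\right\}.$$
Hence $\boldsymbol{x}$ is Fisher-separable from $Y$ with threshold $\alpha$ if and only if $\boldsymbol{x}\notin\bigcup_{\boldsymbol{y}\in Y}B_{\boldsymbol{y}}$, and the event we must control is precisely $\{\boldsymbol{x}\in\bigcup_{\boldsymbol{y}\in Y}B_{\boldsymbol{y}}\}$.

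The second step is to bound the volume of this union. Since $Y\subset\mathbb{B}_n$, every $\boldsymbol{y}\in Y$ satisfies $\|\boldsymbol{y}\|\le1$, so each $B_{\boldsymbol{y}}$ has radius at most $1/(2\alpha)$ and therefore $V_n(B_{\boldsymbol{y}})\le(2\alpha)^{-n}V_n(\mathbb{B}_n)$. By subadditivity of Lebesgue measure and the hypothesis $|Y|<\theta(2\alpha)^n$,
\begin{equation*}
V_n\!\left(\bigcup_{\boldsymbol{y}\in Y}B_{\boldsymbol{y}}\right)\le\sum_{\boldsymbol{y}\in Y}V_n(B_{\boldsymbol{y}})\le\frac{|Y|}{(2\alpha)^n}\,V_n(\mathbb{B}_n)<\theta\,V_n(\mathbb{B}_n).
\end{equation*}
Finally, because $\boldsymbol{x}$ is drawn from the equidistribution on $\mathbb{B}_n$, the probability that $\boldsymbol{x}$ falls in the non-separating region equals $V_n\big(\mathbb{B}_n\cap\bigcup_{\boldsymbol{y}\in Y}B_{\boldsymbol{y}}\big)/V_n(\mathbb{B}_n)$, which is at most $V_n\big(\bigcup_{\boldsymbol{y}\in Y}B_{\boldsymbol{y}}\big)/V_n(\mathbb{B}_n)<\theta$; hence $\boldsymbol{x}$ is Fisher-separable from $Y$ with probability $p>1-\theta$.

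I do not expect any genuine obstacle here. The only points needing a moment of care are: the equivalence between (\ref{discriminant}) and non-membership in the excluded ball (Fig.~\ref{Fig:Excluded}), which is stated in the excerpt; the observation that the balls $B_{\boldsymbol{y}}$ may protrude outside $\mathbb{B}_n$ and may overlap, but both effects only weaken the crude union bound, so it still suffices; and the fact that the strict inequality $p>1-\theta$ is inherited directly from the strict hypothesis $|Y|<\theta(2\alpha)^n$. Note also that $\alpha>1/2$ is not needed for this proposition — it is only used later to make the excluded fraction decay exponentially — so no case analysis on $\alpha$ is required.
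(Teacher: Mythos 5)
Your proof is correct and is essentially the paper's own argument: the paper establishes this proposition through exactly the same excluded-ball union bound in the paragraphs immediately preceding the statement (each ball of radius at most $1/(2\alpha)$ has volume at most $(2\alpha)^{-n}V_n(\mathbb{B}_n)$, the union has volume less than $\theta V_n(\mathbb{B}_n)$, and the equidistribution turns this into the probability bound). Your added remarks on overlap, protrusion outside $\mathbb{B}_n$, and the dispensability of $\alpha>1/2$ are accurate but do not change the route.
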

This proposition is valid for any finite set  $Y\subset \mathbb{B}_n$ and without any hypotheses about its statistical nature. No underlying probability distribution is assumed. This the main difference from   \cite[Theorem 1]{GorbTyu2017}.

Let us make several remarks for the general distributions. The probability that  a randomly chosen point $\boldsymbol{x}$ is {\em not} separated a given data point from $y$ by the discriminant (\ref{discriminant}) (i.e. that the inequality (\ref{discriminant}) is false) is (Fig.~\ref{Fig:Excluded})
\begin{equation}\label{excluded}
p=p_y=\int_{\left\|\boldsymbol{z}-\frac{\boldsymbol{y}}{2\alpha }\right\|\leq \frac{\|\boldsymbol{y}\|}{2\alpha}} \rho(\boldsymbol{z}) \,d\boldsymbol{z} ,
\end{equation}
where $\rho(\boldsymbol{z}) d\boldsymbol{z}$ is the probability measure.
We need to evaluate the probability of  finding a random point outside the union of $N$ such excluded volumes. For example,  for the equidistribution in a ball $\mathbb{B}_n$, $p_y <\left(\frac{1}{2\alpha}\right)^n$, and arbitrary  $\boldsymbol{y}$ from the ball. The probability to select a point  {inside} the union of $N$ `excluded balls'  is less than $N\left(\frac{1}{2\alpha}\right)^n$ for any allocation of $N$ points $\boldsymbol{y}$ in $\mathbb{B}_n$.

Instead of equidistribution in a ball $\mathbb{B}_n$, we can consider probability distributions  with bounded density $\rho$ in a ball  $\mathbb{B}_n$
\begin{equation}\label{bounded}
\rho(y)<\frac{C}{r^n V_n(\mathbb{B}_n)},
 \end{equation}
where $C>0$ is an arbitrary constant, $V_n(\mathbb{B}_n)$ is the volume of the ball, and   $1>r>1/(2\alpha)$. This inequality guarantees that the probability (\ref{excluded}) for each ball with the radius less or equal than $1/(2\alpha)$ exponentially decays for $n\to \infty$. It should be stressed that the constant $C>0$ is arbitrary but  must not  {depend on $n$} in asymptotic analysis for large $n$.

According to condition (\ref{bounded}), the density $\rho$ is allowed to deviate significantly from the uniform distribution, of which the density is constant, $\rho^*=1/ V_n(\mathbb{B}_n)$. These deviations may increase with $n$ but not faster than the geometric progression with the common ratio $1/r>1$.

The separability properties of distributions, which satisfy the inequality (\ref{bounded}),
are similar to separability for the equidistributions. The proof, through the estimation of the probability to pick a point $\boldsymbol{x}$ in the excluded volume is straightforward.

\begin{theorem}
\label{Theorem:ExclVol2}Let $1\geq \alpha >1/2$, $1>r>1/(2\alpha)$,  $1>\theta>0$, $Y\subset \mathbb{B}_n$ be a finite set, $|Y|<\theta (2r\alpha)^n/C$,  and $\boldsymbol{x}$ be a randomly chosen point from a distribution in the unit ball with the bounded probability density $\rho(\boldsymbol{x})$. Assume that $\rho(\boldsymbol{x})$ satisfies inequality (\ref{bounded}). Then with probability $p>1-\theta$ point $\boldsymbol{x}$ is Fisher-separable from $Y$ with threshold $\alpha$ (\ref{discriminant}).
\end{theorem}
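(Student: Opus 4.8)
The plan is to run the exclusion-volume argument outlined in the paragraph preceding the statement: describe geometrically the set of points that fail (\ref{discriminant}) for a given $\boldsymbol{y}$, bound the probability of this set by means of the density estimate (\ref{bounded}), and then take a union bound over the finitely many points of $Y$. Crucially, this requires no assumption on the internal structure of $Y$, in the spirit of Proposition~\ref{Prop:ExclVol}. First I would fix $\boldsymbol{y}\in Y$. By the equivalence recorded just before (\ref{excludedvolume}), a point $\boldsymbol{x}$ violates (\ref{discriminant}) for this $\boldsymbol{y}$ precisely when $\boldsymbol{x}$ lies in the ball
$$
B_{\boldsymbol{y}}=\left\{\boldsymbol{z}\ \left|\ \left\|\boldsymbol{z}-\frac{\boldsymbol{y}}{2\alpha}\right\|\le\frac{\|\boldsymbol{y}\|}{2\alpha}\right.\right\}.
$$
Since $Y\subset\mathbb{B}_n$ we have $\|\boldsymbol{y}\|\le 1$, so $B_{\boldsymbol{y}}$ is a Euclidean ball of radius at most $1/(2\alpha)$, whence $V_n(B_{\boldsymbol{y}})\le(2\alpha)^{-n}V_n(\mathbb{B}_n)$ uniformly in $\boldsymbol{y}$.

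Next I would estimate the probability $p_{\boldsymbol{y}}$ that the random point $\boldsymbol{x}$ falls into $B_{\boldsymbol{y}}$. Using (\ref{excluded}) together with the bound (\ref{bounded}), extended by $0$ outside $\mathbb{B}_n$ (legitimate because the distribution is supported in $\mathbb{B}_n$),
$$
p_{\boldsymbol{y}}=\int_{B_{\boldsymbol{y}}}\rho(\boldsymbol{z})\,d\boldsymbol{z}\ \le\ \frac{C}{r^nV_n(\mathbb{B}_n)}\,V_n(B_{\boldsymbol{y}})\ \le\ \frac{C}{(2r\alpha)^n}.
$$
The hypothesis $r>1/(2\alpha)$ gives $2r\alpha>1$, so each $p_{\boldsymbol{y}}$ decays exponentially in $n$.

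Finally, the event that $\boldsymbol{x}$ is \emph{not} Fisher-separable from $Y$ with threshold $\alpha$ is exactly $\{\boldsymbol{x}\in\bigcup_{\boldsymbol{y}\in Y}B_{\boldsymbol{y}}\}$, and by subadditivity of the probability measure,
$$
\mathrm{Prob}\Bigl(\boldsymbol{x}\in\bigcup_{\boldsymbol{y}\in Y}B_{\boldsymbol{y}}\Bigr)\ \le\ \sum_{\boldsymbol{y}\in Y}p_{\boldsymbol{y}}\ \le\ |Y|\,\frac{C}{(2r\alpha)^n}\ <\ \frac{\theta(2r\alpha)^n}{C}\cdot\frac{C}{(2r\alpha)^n}\ =\ \theta,
$$
using $|Y|<\theta(2r\alpha)^n/C$. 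Hence $\boldsymbol{x}$ is Fisher-separable from $Y$ with threshold $\alpha$ with probability $p>1-\theta$, as claimed.

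I do not anticipate a genuine obstacle: the argument is a direct volume comparison followed by a union bound, and it deliberately avoids any i.i.d. or structural hypothesis on $Y$. The only points that deserve a line of care are (i) the uniform control $\|\boldsymbol{y}\|/(2\alpha)\le 1/(2\alpha)$ of the radii of all excluded balls, valid since $Y\subset\mathbb{B}_n$; and (ii) that an excluded ball $B_{\boldsymbol{y}}$ may protrude outside $\mathbb{B}_n$, which is harmless since $\rho$ vanishes there and the density bound may be read on all of $\Real^n$ by zero-extension. Proposition~\ref{Prop:ExclVol} is recovered as the limiting case $C=1$, $r\uparrow 1$.
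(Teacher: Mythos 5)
Your proof is correct and follows essentially the same route as the paper's: bound the volume of each excluded ball by $(2\alpha)^{-n}V_n(\mathbb{B}_n)$, multiply by the density bound (\ref{bounded}) to get $C(2r\alpha)^{-n}$ per ball, and conclude by a union bound over $|Y|$. The extra remarks on zero-extension of $\rho$ outside $\mathbb{B}_n$ and the recovery of Proposition~\ref{Prop:ExclVol} as a limiting case are sound but not needed beyond what the paper already does.
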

\begin{proof}
The volume of the ball (\ref{excludedvolume}) does not exceed $V=\left(\frac{1}{2\alpha}\right)^nV_n(\mathbb{B}_n)$ for each $\boldsymbol{y}\in Y$. The probability that point  $\boldsymbol{x}$ belongs to such a ball does not exceed
$$V \sup_{z\in \mathbb{B}_n }\rho(z)\leq C \left(\frac{1}{2r \alpha}\right)^n.$$
The probability that $\boldsymbol{x}$   belongs to the union of $|Y|$ such balls does not exceed $|Y| C\left(\frac{1}{2r\alpha}\right)^n$. For $|Y|<\theta (2r\alpha)^n/C$ this probability is smaller than $\theta$ and $p>1-\theta$.

\end{proof}

For many practically relevant cases, the number of points $N$ is large. We have to evaluate the sum $p_y$ for $N$ points $\boldsymbol{y}$.  {For} most estimates, evaluation of the expectations and variances  of  $p_y$ (\ref{excluded}) is sufficient due to the central limit theorem, if points $\boldsymbol{y}\in Y$ are independently chosen. They need not, however, be identically distributed. The only condition of Fisher's separability is that all distributions satisfy   inequality (\ref{bounded}).

For some simple examples, the moments of distribution of $p=p_y$ (\ref{excluded}) can be calculated explicitly. Select $\alpha = 1$ and consider $y$ uniformly distributed in the unit ball $\mathbb{B}_n$. Then, for a given $y$, $p_y=(\|y\|/2)^n$ and for $a<(1/2)^n$:
$$\mathbf{P}(p_y<a)=\mathbf{P}(\|y\|<2a^{1/n}) =2^n a,$$
and $\mathbf{P}(p_y<a)=1$ for $a\geq (1/2)^n$. This is the uniform distribution on the interval $[0,(1/2)^n]$. $\mathbb{E}(p_y)=(1/2)^{n+1}$.

For the equidistribution on the unit sphere $\mathbb{S}^{n-1}$ and   given threshold $\alpha$
\begin{equation}\label{p_ySphere}
p_y=\frac{A_{n-2}}{A_{n-1}}\int_0^{\phi_0} \sin^{n-2}\phi \,d \phi
\end{equation}
(it does not depend on $y$), where $\phi_0=\arccos \alpha$, $A_{m-1}=2\pi^{m/2}/\Gamma[m/2]$ is the area (i.e. the $m-1$-dimensional volume) of the $m-1$-dimensional unit sphere and $\Gamma$ is the gamma-function. Of course, $p_y=0$ for $\alpha =1$ for this distribution because  the sphere is obviously linearly separable. But the threshold matters and $p_y>0$ for $\alpha< 1$.

For large $m$, $$\frac{\Gamma[m/2]}{\Gamma[(m-1)/2]\sqrt{(m-1)/2}}\to 1.$$
Therefore, $$\frac{A_{n-2}\sqrt{\pi(n-2)/2}}{A_{n-1}} \to 1,$$ when $n\to \infty$.

The simple concentration arguments (`ensemble equivalence') allows us to approximate the integral (\ref{p_ySphere}) with the one in which  $\phi\in [\phi_0-\Delta, \phi_0]$. The relative error of this restriction is with exponentially small (in $n$)  for a given $\Delta\ll \phi_0$.
In the vicinity of $\phi_0$
 $$\sin^{n-2}\phi=\exp[(n-2)\ln (\sin \phi)]=\sin^{n-2}\phi_0 \exp[(n-2) (\phi-\phi_0+o(\phi-\phi_0))\cot \phi_0].$$
We apply the same concentration argument again to the simplified integral
$$\int_{\phi_0-\Delta}^{\phi_0}\exp[(n-2)(\phi-\phi_0)\cot \phi_0]  \,d \phi \approx \int_{-\infty}^{\phi_0}\exp[(n-2)(\phi-\phi_0)\cot \phi_0]  \,d \phi = \frac{1}{(n-2) \cot \phi_0 }$$
and derive a closed form estimate: for a given $\phi_0\in (\phi_0,\pi/2)$ and large $n$, $$\int_0^{\psi_0}\sin^{n-2}\phi \, d\phi \approx \frac{ \sin^{n-1}\phi_0}{(n-2)\cos \phi_0}. $$

Therefore, for the equidistribution on the unit sphere  $\mathbb{S}^{n-1}$,
\begin{equation}\label{p_y on sphere}
p_y\approx  \frac{ \sin^{n-1}\phi_0}{\cos \phi_0\sqrt{2\pi (n-2)}}= \frac{(1-\alpha^2)^{(n-1)/2}}{\alpha \sqrt{2\pi (n-2)}}.
\end{equation}
Here $f(n)\approx g(n)$ means (for strictly positive functions) that $f(n)/g(n)\to 1$ when $n\to \infty$.
The probability $p_y$ is the same for all $y \in \mathbb{S}^{n-1}$. It is exponentially small for large $n$.

If the distributions are unknown and exist just in the form of the samples of empirical points, then it is possible to evaluate ${\bf E}(p)$ and ${\rm var}( p)$  (\ref{excluded}) (and the highest moments) from the sample directly, without knowledge of theoretical probabilities. After that, it is possible to apply the central limit theorem and to evaluate the probability that a random point does not belong to the union of the balls (\ref{excludedvolume}) for independently chosen points $y$.

The `efficient dimension' of a data set can be evaluated by comparing $\mathbb{E}(p_y)$ for this data set to  the value of $\mathbb{E}(p_y)$ for the equidistributions on a ball, a sphere, or the Gaussian distribution. Comparison to the sphere is needed if data vectors are normalised to the unit length of each vector (a popular normalisation in some image analysis problems).

Theorem \ref{Prop:ExclVol}, the simple examples, and the previous results \cite{GorbTyu2017} allow us to hypotetise that for essentially high dimensional data it is possible to create correctors of AI systems using simple Fisher's discriminants for separation of areas with high risk of errors from the situations, where the systems work properly. The most important question is: what are the `essentially high dimensional' data distributions? In the following sections we try to answer this question. The leading idea is: the sets of `very' small volume should not have `too high' probability. Specification of these `very' is the main task.

In particular, we find that the hypothesis about stochastic separability for general log-concave distribution proposed in \cite{GorbTyu2017} is true. A general class of probability measures with linear stochastic separability is described (the SmAC distributions). This question was asked in 2009  \cite{DonohoTanner2009}. We demonstrate also how the traditional machine learning assumption about i.i.d. (independent identically distributed) data points can be significantly relaxed.

\section{General stochastic separation theorem}\label{Sec:GeneralSST}

B{\'a}r{\'a}ny and F\"{u}redi \cite{convhull} studied properties of high-dimensional polytopes deriving from uniform distribution in the  {$n$-dimensional} unit ball. They found that in the  envelope of $M$ random  {points} {\em  all } of the points are on the boundary of their convex hull and none belong to the interior (with probability greater than $1-c^2$, provided that $M \leq c 2^{n/2}$, where $c\in (0,1)$ in an arbitrary constant). They also show that the bound on $M$ is nearly tight, up to polynomial factor in $n$.  Donoho and Tanner \cite{DonohoTanner2009} derived a similar result for i.i.d.  points from the Gaussian distribution.
They also mentioned that in applications  it often seems that Gaussianity is not required and stated the problem of characterisation of   ensembles leading to the same qualitative effects (`phase transitions'), which are found  for Gaussian polytopes.

Recently, we noticed that these results could be proven for many other distributions, indeed, and one more important (and surprising) property is also typical: {\em  every point in  this $M$-point random set can be separated from  all other points of this set by the simplest linear Fisher discriminant}  \cite{GorTyuRom2016,GorbTyu2017}. This observation allowed us \cite{GorbanRomBurtTyu2016} to create the corrector technology for legacy AI systems. We  used the `thin shell' measure concentration inequalities to prove these results  \cite{GorbTyu2017,GorTyukPhil2018}. Separation by linear Fisher's discriminant is practically most important {\em Surprise 4} in addition to three surprises mentioned in \cite{DonohoTanner2009}.

The standard approach  {assumes that} the random set consists of independent identically distributed (i.i.d.) random vectors. The new stochastic separation theorem presented below does not assume that the points are identically distributed. It can be very important: in the real practice the new data points are not compulsory taken from the same distribution that the previous points. In that sense the typical situation with the real data flow is far from an i.i.d. sample (we are grateful to G. Hinton for this important remark). This new theorem gives also an answer to the {\em open problem} \cite{DonohoTanner2009}: it gives the general characterisation of the wide class of distributions with stochastic separation theorems (the SmAC condition below). Roughly speaking, this class consists of distributions without sharp peaks in sets with exponentially small volume (the precise formulation is below). We call this property ``SMeared Absolute Continuity'' (or SmAC for short) with respect to the Lebesgue measure: the absolute continuity means that the sets of zero measure have zero probability, and the SmAC condition  below requires that the sets with exponentially small volume should not have high probability.

Consider a  \emph{family} of distributions, one for each pair of positive integers $M$ and $n$. The general SmAC condition is
\begin{definition}\label{Def:SmAC}
The joint distribution of $\boldsymbol{x}_1, \boldsymbol{ x}_2, \dots, \boldsymbol{ x}_M$ has SmAC property if there are exist constants $A>0$, {$B\in(0,1)$}, and $C>0$, such that for every positive integer $n$, any convex set $S \in {\mathbb R}^n$ such that
$$
\frac{V_n(S)}{V_n({\mathbb{B}}_n)} \leq A^n,
$$
any index $i\in\{1,2,\dots,M\}$, and any points $\boldsymbol{ y}_1, \dots, \boldsymbol{ y}_{i-1}, \boldsymbol{ y}_{i+1}, \dots, \boldsymbol{ y}_M$ in ${\mathbb R}^n$,
we have
{\begin{equation}\label{eq:condstar}
{\mathbf{P}}(\boldsymbol{ x}_i \in {\mathbb{B}}_n \setminus S\, | \, \boldsymbol{ x}_j=\boldsymbol{ y}_j, \forall j \neq i) \geq 1-CB^n.
\end{equation}}
\end{definition}

 We remark that
\begin{itemize}
\item We do not require for SmAC condition to hold for \emph{all} $A<1$, just for \emph{some} $A>0$. However, constants $A$, $B$, and $C$ should be independent from $M$ and $n$.
\item  We do not require that $\boldsymbol{ x}_i$ are independent. If they are, \eqref{eq:condstar} simplifies to
$$
{\mathbf{ P}}(\boldsymbol{ x}_i \in {\mathbb{B}}_n \setminus S) \geq 1-CB^n.
$$
\item  We do not require that $\boldsymbol{ x}_i$ are identically distributed.
\item  The unit ball ${\mathbb{B}}_n$ in  SmAC condition can be replaced by an arbitrary ball, due to rescaling.
\item  We do not require the distribution to have a bounded support - points $\boldsymbol{ x}_i$ are allowed to be outside the ball, but with exponentially small probability.
\end{itemize}

The following proposition establishes a sufficient condition for SmAC condition to hold.

\begin{proposition}
Assume that $\boldsymbol{ x}_1, \boldsymbol{ x}_2, \dots, \boldsymbol{ x}_M$ are continuously distributed in ${\mathbb{B}}_n$ with conditional density satisfying
\begin{equation}\label{eq:condden}
\rho_n(\boldsymbol{ x}_i \,|\,\boldsymbol{ x}_j=\boldsymbol{ y}_j, \forall j \neq i) \leq \frac{C}{r^n V_n({\mathbb{B}}_n)}
\end{equation}
for any $n$, any index $i\in\{1,2,\dots,M\}$, and any points $\boldsymbol{ y}_1, \dots, \boldsymbol{ y}_{i-1}, \boldsymbol{ y}_{i+1}, \dots, \boldsymbol{ y}_M$ in ${\mathbb R}^n$, where $C>0$ and $r>0$ are some constants. Then SmAC condition holds with the same $C$, any $B \in (0,1)$, and $A=Br$.
\end{proposition}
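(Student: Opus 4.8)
The plan is to bound directly the conditional probability that $\boldsymbol{x}_i$ falls into the small convex set $S$, and then convert this into the lower bound on $\mathbf{P}(\boldsymbol{x}_i \in \mathbb{B}_n \setminus S \mid \boldsymbol{x}_j=\boldsymbol{y}_j, \forall j \neq i)$ required by \eqref{eq:condstar}.

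First I would fix $n$, an index $i$, points $\boldsymbol{y}_j$ for $j \neq i$, and a convex set $S \subset \mathbb{R}^n$ with $V_n(S)/V_n(\mathbb{B}_n) \leq A^n$, where $A = Br$. Conditionally on $\{\boldsymbol{x}_j = \boldsymbol{y}_j : j \neq i\}$, the vector $\boldsymbol{x}_i$ is continuously distributed in $\mathbb{B}_n$, so its conditional law is carried by $\mathbb{B}_n$; in particular $\mathbf{P}(\boldsymbol{x}_i \notin \mathbb{B}_n \mid \boldsymbol{x}_j=\boldsymbol{y}_j, \forall j \neq i) = 0$, hence the events $\{\boldsymbol{x}_i \notin \mathbb{B}_n \setminus S\}$ and $\{\boldsymbol{x}_i \in S \cap \mathbb{B}_n\}$ differ only by a conditionally null set.

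Next I would estimate, using the density bound \eqref{eq:condden},
\[
\mathbf{P}(\boldsymbol{x}_i \in S \mid \boldsymbol{x}_j=\boldsymbol{y}_j, \forall j \neq i)
= \int_{S \cap \mathbb{B}_n} \rho_n(\boldsymbol{x}_i \mid \boldsymbol{x}_j=\boldsymbol{y}_j, \forall j \neq i)\, d\boldsymbol{x}_i
\leq \frac{C}{r^n V_n(\mathbb{B}_n)}\, V_n(S \cap \mathbb{B}_n).
\]
Since $V_n(S \cap \mathbb{B}_n) \leq V_n(S) \leq A^n V_n(\mathbb{B}_n)$, the right-hand side is at most $C A^n / r^n = C(Br)^n/r^n = C B^n$. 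Combined with the previous paragraph, this gives $\mathbf{P}(\boldsymbol{x}_i \notin \mathbb{B}_n \setminus S \mid \boldsymbol{x}_j=\boldsymbol{y}_j, \forall j \neq i) \leq C B^n$, which is exactly \eqref{eq:condstar}.

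I do not expect any serious obstacle here: the argument reduces to a one-line integration estimate. The only points needing a little care are that $S$ is not assumed to lie in $\mathbb{B}_n$, so one must pass to $S \cap \mathbb{B}_n$ before invoking \eqref{eq:condden}, and that the passage from ``$\boldsymbol{x}_i \in S$'' to ``$\boldsymbol{x}_i \notin \mathbb{B}_n \setminus S$'' uses the support of the conditional law being in $\mathbb{B}_n$; convexity of $S$ is not actually used beyond measurability. Finally, the constants $C$ and $A = Br$ are manifestly independent of $M$ and $n$, and $B \in (0,1)$ is arbitrary, so all requirements of Definition~\ref{Def:SmAC} are satisfied.
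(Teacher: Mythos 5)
Your proposal is correct and follows essentially the same route as the paper: bound $\mathbf{P}(\boldsymbol{x}_i \in S \mid \boldsymbol{x}_j=\boldsymbol{y}_j, \forall j \neq i)$ by integrating the conditional density bound \eqref{eq:condden} over $S$, obtaining $V_n(S)\,C/(r^n V_n(\mathbb{B}_n)) \leq C(A/r)^n = CB^n$. Your extra remarks (restricting to $S \cap \mathbb{B}_n$ and using that the conditional law is carried by $\mathbb{B}_n$ to identify the complement event) are a slightly more careful rendering of the same one-line estimate.
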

\begin{proof}

\begin{equation*}
\begin{split}
{\mathbf{ P}}(\boldsymbol{ x}_i \in S\, |& \, \boldsymbol{ x}_j=\boldsymbol{ y}_j, \forall j \neq i) = \int\limits_S \rho_n(\boldsymbol{ x}_i \,|\,\boldsymbol{ x}_j=\boldsymbol{ y}_j, \forall j \neq i) dV \\& \leq \int\limits_S \frac{C}{r^n V_n({\mathbb{B}}_n)} dV = V_n(S)\frac{C}{r^n V_n({\mathbb{B}}_n)} \\ &\leq A^n V_n({\mathbb{B}}_n)\frac{C}{r^n V_n({\mathbb{B}}_n)} = CB^n.
\end{split}
\end{equation*}

\end{proof}

If $\boldsymbol{ x}_1, \boldsymbol{ x}_2, \dots, \boldsymbol{ x}_M$ are independent with $\boldsymbol{ x}_i$ having density $\rho_{i,n}:{\mathbb{B}}_n \to [0,\infty)$,  \eqref{eq:condden} simplifies to
\begin{equation}\label{eq:indden}
\rho_{i,n}(\boldsymbol{ x}) \leq \frac{C}{r^n V_n({\mathbb{B}}_n)}, \quad \forall n, \, \forall i, \, \forall \boldsymbol{x} \in {\mathbb{B}}_n,
\end{equation}
where $C>0$ and $r>0$ are some constants.

With $r=1$, \eqref{eq:indden} implies that SmAC condition holds for probability distributions whose density is bounded by a constant times density $\rho_n^{uni}:=\frac{1}{V_n({\mathbb{B}}_n)}$ of uniform distribution in the unit ball. With arbitrary $r>0$, \eqref{eq:indden} implies that SmAC condition holds whenever ration $\rho_{i,n}/\rho_n^{uni}$ grows at most exponentially in $n$. This condition is general enough to hold for many distributions of practical interest.

\begin{example}{(Unit ball)}\label{ex:ball}
If $\boldsymbol{ x}_1, \boldsymbol{ x}_2, \dots, \boldsymbol{ x}_M$ are i.i.d random points from the equidistribution in the unit ball, then \eqref{eq:indden} holds with $C=r=1$.
\end{example}
\begin{example}{({Randomly perturbed} data)}\label{ex:noisy}
Fix parameter $\epsilon \in (0,1)$ ({random perturbation} parameter).
Let $\boldsymbol{ y}_1, \boldsymbol{ y}_2, \dots, \boldsymbol{ y}_M$ be the set of $M$ arbitrary (non-random) points inside the ball with radius $1-\epsilon$ in ${\mathbb R}^n$. They might be clustered in arbitrary way, all belong to a subspace of very low dimension, etc. Let
$\boldsymbol{ x}_i, i=1,2,\dots,M$ be a point, selected uniformly at random from a ball with center $\boldsymbol{ y}_i$ and radius $\epsilon$. We think about $\boldsymbol{ x}_i$ as ``perturbed'' version of $\boldsymbol{ y}_i$.
In this model, \eqref{eq:indden} holds with $C=1$, $r=\epsilon$.
\end{example}

\begin{example}{(Uniform distribution in a cube)}\label{ex:cube}
Let $\boldsymbol{ x}_1, \boldsymbol{ x}_2, \dots, \boldsymbol{ x}_M$ be i.i.d random points from the equidistribution in the unit cube. Without loss of generality, we can scale the cube to have side length $s=\sqrt{4/n}$. Then \eqref{eq:indden} holds with $r<\sqrt{\frac{2}{\pi e}}$. 
\end{example}
\begin{remark}
In this case,
\begin{equation*}
\begin{split}
&V_n({\mathbb{B}}_n)\rho_{i,n}(\boldsymbol{ x}) = \frac{V_n({\mathbb{B}}_n)}{(\sqrt{4/n})^n} = \frac{\pi^{n/2}/\Gamma(n/2+1)}{(4/n)^{n/2}}\\& < \frac{(\pi/4)^{n/2}n^{n/2}}{\Gamma(n/2)} \approx \frac{(\pi/4)^{n/2}n^{n/2}}{\sqrt{4\pi/n}(n/2e)^{n/2}} \leq \frac{1}{2\sqrt{\pi}}\left(\sqrt{\frac{\pi e}{2}}\right)^n,
\end{split}
\end{equation*}
where $\approx$ means Stirling's approximation for gamma function $\Gamma$.
\end{remark}

\begin{example}{(Product distribution in unit cube)}\label{ex:product}
Let $\boldsymbol{ x}_1, \boldsymbol{ x}_2, \dots, \boldsymbol{ x}_M$ be independent random points from the product distribution in the unit cube, with component $j$ of point $\boldsymbol{ x}_i$ having a continuous distribution with density $\rho_{i,j}$. Assume that all $\rho_{i,j}$ are bounded from above by some absolute constant $K$. Then \eqref{eq:indden} holds with $r<\frac{1}{K}\sqrt{\frac{2}{\pi e}}$ (after appropriate scaling of the cube).
\end{example}

Below we prove the separation theorem for distributions satisfying SmAC condition. The proof is based on the following result from \cite{volume}.

\begin{proposition}\label{prop:volest}
Let
$$
V(n,M)=\frac{1}{V_n({\mathbb{B}}_n)}\max\limits_{\boldsymbol{ x}_1, \dots, \boldsymbol{x}_M \in {\mathbb{B}}_n} V_n(\text{conv}\{\boldsymbol{x}_1, \dots, \boldsymbol{ x}_M\}),
$$
where $\text{conv}$ denotes the convex hull.
Then
$$
V(n,c^n)^{1/n} < (2e \log c)^{1/2}(1+o(1)), \quad 1<c<1.05.
$$
\end{proposition}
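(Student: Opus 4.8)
The plan is to bound the volume of the convex hull of $M$ points in $\mathbb{B}_n$ by a covering/packing argument, which is the classical route for such estimates (and is essentially the content of the cited paper \cite{volume}). First I would observe that $V(n,M)$ is attained at some configuration $\boldsymbol{x}_1,\dots,\boldsymbol{x}_M \in \mathbb{B}_n$, and I may assume each $\boldsymbol{x}_i$ lies on the sphere $\mathbb{S}^{n-1}$ (moving a point radially outward only enlarges the convex hull, so the extremal configuration has all points on the boundary). The convex hull $K=\mathrm{conv}\{\boldsymbol{x}_1,\dots,\boldsymbol{x}_M\}$ is then a polytope with at most $M$ vertices inscribed in $\mathbb{B}_n$.

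The key step is an entropy bound on the volume of a polytope in terms of its number of vertices. I would express $K$ as a union of at most $\binom{M}{n}$ simplices (triangulating from one vertex, or simply bounding the hull by the union of all $n$-simplices spanned by $n$-subsets of the vertices together with the origin — each such simplex has volume at most $V_n(\mathbb{B}_n)/\binom{?}{?}$, more precisely at most $\tfrac{1}{n!}$ times a Gram determinant bounded by $1$). Thus
\begin{equation*}
V_n(K) \le \binom{M}{n}\cdot \frac{\omega}{n!},
\end{equation*}
where $\omega$ is a crude per-simplex bound; dividing by $V_n(\mathbb{B}_n)=\pi^{n/2}/\Gamma(n/2+1)$ and using $\binom{M}{n}\le (eM/n)^n$ together with Stirling gives, after taking $n$-th roots,
\begin{equation*}
V(n,M)^{1/n} \le \frac{c_1\, M}{n}\cdot\frac{1}{\sqrt{n}}\cdot(\text{bounded factors}),
\end{equation*}
which is far too weak when $M=c^n$. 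So the naive simplex count does not suffice, and the genuine argument must be sharper.

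The sharper route — and what I expect to be the main obstacle — is to get the constant $2e\log c$ exactly, which requires a more careful estimate than the union-of-simplices bound. I would instead fix a small $\delta>0$, take a maximal $\delta$-separated subset, and note that $K$ is contained in the $\delta$-neighbourhood of the union of faces; alternatively, use the probabilistic/averaging identity expressing $V_n(K)$ via the expected value of $\max_i (\boldsymbol{x}_i,\boldsymbol{u})^n$-type quantities over directions $\boldsymbol{u}$, and then invoke the tail bound for the maximum of $M=c^n$ values of $(\boldsymbol{x}_i,\boldsymbol{u})$ (each of order $1/\sqrt n$ with Gaussian-type tails on the sphere). The maximum of $c^n$ such variables concentrates around $\sqrt{2\log(c^n)}/\sqrt n = \sqrt{2n\log c}/\sqrt n$, and raising to the $n$-th power and taking $n$-th roots produces precisely the factor $(2e\log c)^{1/2}$, with the $e$ coming from the $\Gamma$-function asymptotics of $V_n(\mathbb{B}_n)$. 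The restriction $1<c<1.05$ enters because the concentration estimate for the spherical cap measure (and the replacement of $\max$ by a single tail term) is only valid in the regime where $2\log c$ is small compared to $1$, so that the relevant caps are genuinely in the Gaussian-approximation range. I would carry out this computation, being careful that all $o(1)$ terms are uniform in the configuration, and cite \cite{volume} for the technical cap estimates rather than reproving them.
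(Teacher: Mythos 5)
First, a point of comparison: the paper does not prove Proposition~\ref{prop:volest} at all --- it is imported verbatim from B{\'a}r{\'a}ny and F\"{u}redi \cite{volume} --- so any complete argument you give is necessarily ``different from the paper's''. Your opening observation, that decomposing the hull into $\binom{M}{n+1}$ simplices is hopelessly weak for $M=c^n$, is correct. The genuine gap is in the ``sharper route'' you then propose.

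The exact identity $V_n(K)=\frac{1}{n}\int_{\mathbb{S}^{n-1}}\rho_K(u)^n\,dS(u)$ involves the \emph{radial} function $\rho_K$, not the support function $h_P(u)=\max_i(\boldsymbol{x}_i,u)$; the only way to bring in ``the maximum of $M$ values of $(\boldsymbol{x}_i,u)$'' is to pass to $\rho_P\le (h_P)_+$, and that step already loses the theorem. Indeed, for a single unit vector one computes $\frac{1}{nV_n(\mathbb{B}_n)}\int_{\mathbb{S}^{n-1}}(\boldsymbol{x}_1,u)_+^n\,dS(u)=2^{-n}$ (this is exactly the relative volume of the ball with diameter $[0,\boldsymbol{x}_1]$), so once you place the vertices on the sphere your bound cannot drop below $2^{-n}=(1/4)^{n/2}$, whereas $(2e\log c)^{n/2}<(1/4)^{n/2}$ for all $1<c<e^{1/(8e)}\approx 1.047$, i.e.\ on essentially the whole stated range. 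In fact $\rho_P(u)\le\max_i(\boldsymbol{x}_i,u)_+$ is precisely the statement that $P$ lies in the union of the $M$ balls with diameters $[0,\boldsymbol{x}_i]$, so this route reproduces only the Elekes-type bound $V(n,M)\le M2^{-n}$ used in \cite{convhull}, which does not tend to $0$ as $c\to 1^+$. A second, related error is estimating $\mathbb{E}_u\bigl[\max_i(\boldsymbol{x}_i,u)^n\bigr]$ by the typical value $\sqrt{2\log M}/\sqrt{n}$ of the maximum raised to the $n$-th power: the $n$-th moment is dominated not by the median of the maximum but by the rare directions $u$ nearly aligned with a single vertex, where the maximum is close to $1$; this is exactly what produces the irreducible $2^{-n}$ term. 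The actual mechanism in \cite{volume} is different in kind: by approximate Carath\'eodory, every point of $P$ is within distance $1/\sqrt{k}$ of an average of $k$ of the vertices (with repetition), so $P$ is covered by at most $M^k$ balls of radius $1/\sqrt{k}$, giving $V(n,M)\le M^k k^{-n/2}$; with $M=c^n$ and the optimal choice $k\approx 1/(2\log c)$ this yields $c^k/\sqrt{k}=\sqrt{2e\log c}$. In particular the factor $e$ arises as $c^{1/(2\log c)}=\sqrt{e}$, not from the $\Gamma$-function asymptotics of $V_n(\mathbb{B}_n)$ as you guess, and the restriction $c<1.05$ keeps the optimal $k$ large enough for the rounding to an integer to be harmless.
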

Proposition \ref{prop:volest} implies that for every $c \in (1, 1.05)$, there exists a constant $N(c)$, such that
\begin{equation}\label{eq:volest}
V(n,c^n) < ( {3} \sqrt{\log c})^{n}, \quad n>N(c).
\end{equation}

\begin{theorem}\label{th:separation}
Let $\{\boldsymbol{x}_1, \ldots , \boldsymbol{x}_M\}$ be a set of i.i.d.  random points in ${\mathbb R}^n$ from distribution satisfying SmAC condition. Then $\{\boldsymbol{x}_1, \ldots , \boldsymbol{x}_M\}$ is linearly separable with probability greater than $1-\delta$, $\delta>0$, provided that
$$
M \leq a b^n,
$$
where $$b=\min\{1.05, 1/B, \exp((A/3)^2)\},\;a=\min\{1, \delta/(2C),b^{-N(b)}\}.$$

\end{theorem}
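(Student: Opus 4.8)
The plan is to show that, with probability at least $1-\delta$, no point $\boldsymbol{x}_i$ lies in the convex hull of the other $M-1$ points, since this is precisely equivalent to linear separability of the set $\{\boldsymbol{x}_1,\dots,\boldsymbol{x}_M\}$ (a point fails to be linearly separable from the rest exactly when it lies in their convex hull, by the separating hyperplane theorem). So I would fix an index $i$ and estimate $\mathbf{P}(\boldsymbol{x}_i \in \mathrm{conv}\{\boldsymbol{x}_j : j\neq i\})$, then finish by a union bound over the $M$ choices of $i$.

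The key steps, in order. First, condition on the positions $\boldsymbol{y}_j$ of all points except $\boldsymbol{x}_i$, and let $S = \mathrm{conv}\{\boldsymbol{y}_j : j \neq i\}$; this is a convex set with at most $M-1$ vertices. Second, bound its normalized volume: by Proposition~\ref{prop:volest} and its consequence \eqref{eq:volest}, if $M-1 < c^n$ with $c\in(1,1.05)$ and $n > N(c)$, then $V_n(S)/V_n(\mathbb{B}_n) < (3\sqrt{\log c})^n$ — provided $S \subseteq \mathbb{B}_n$, which need not hold, so I would instead intersect with the ball and argue that $\boldsymbol{x}_i \in S$ together with $\boldsymbol{x}_i \in \mathbb{B}_n$ forces $\boldsymbol{x}_i \in S \cap \mathbb{B}_n \subseteq \mathrm{conv}(\{\boldsymbol{y}_j\}\cap \text{something})$ — more cleanly, I would bound $\mathbf{P}(\boldsymbol{x}_i \in S) \le \mathbf{P}(\boldsymbol{x}_i \notin \mathbb{B}_n) + \mathbf{P}(\boldsymbol{x}_i \in S \cap \mathbb{B}_n)$, but the first term is not directly controlled by SmAC. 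The right move is: apply the SmAC inequality \eqref{eq:condstar} with the convex set $S$ itself (SmAC does not require $S\subseteq \mathbb{B}_n$, only the volume bound), so $\mathbf{P}(\boldsymbol{x}_i \in \mathbb{B}_n \setminus S \mid \cdot) \ge 1 - CB^n$, hence $\mathbf{P}(\boldsymbol{x}_i \in S \mid \cdot) \le \mathbf{P}(\boldsymbol{x}_i \notin \mathbb{B}_n \mid \cdot) + CB^n \le CB^n + \mathbf{P}(\boldsymbol{x}_i \notin \mathbb{B}_n)$; but SmAC actually gives $\mathbf{P}(\boldsymbol{x}_i\in\mathbb{B}_n\setminus S)\ge 1-CB^n$ directly, and since $\{\boldsymbol{x}_i \in S\} \subseteq \{\boldsymbol{x}_i \notin \mathbb{B}_n\} \cup \{\boldsymbol{x}_i \in \mathbb{B}_n \setminus (\mathbb{B}_n\setminus S)\}$... cleaner still: $\mathbf{P}(\boldsymbol x_i\in S)\le \mathbf P(\boldsymbol x_i\notin \mathbb B_n\setminus S')$ for a slightly enlarged convex $S'\supseteq S$, and take $S'=S$; then we need the volume hypothesis $V_n(S)/V_n(\mathbb B_n)\le A^n$, which is exactly \eqref{eq:volest} with $3\sqrt{\log c}\le A$, i.e. $c \le \exp((A/3)^2)$. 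Third, choose $c = b = \min\{1.05, 1/B, \exp((A/3)^2)\}$; then simultaneously $V_n(S)/V_n(\mathbb B_n)<b^n\le A^n$ (so SmAC applies), $b\le 1/B$ so $B^n\le b^{-n}$, and $b<1.05$ so Proposition~\ref{prop:volest} applies. Fourth, assemble: $\mathbf P(\boldsymbol x_i\in \mathrm{conv}\{\boldsymbol x_j:j\neq i\})\le CB^n$ for $n>N(b)$, uniformly over the conditioning, hence unconditionally. Fifth, union bound: the probability of non-separability is at most $M\cdot CB^n \le a b^n \cdot C B^n \le aC b^n b^{-n} = aC \le \delta/2 \cdot \frac{1}{C}\cdot C$... here I use $a\le \delta/(2C)$ to get $\le \delta/2 < \delta$, and $a\le b^{-N(b)}$ together with $M\le ab^n$ to push $M-1<b^n$ and $n>N(b)$ into the valid range (when $n\le N(b)$ the bound $M\le ab^n\le b^{-N(b)}b^n\le 1$ makes the statement vacuous as $M\ge1$, so only $M=1$, trivially separable).

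Finally, I would write up the counting of constants carefully: the condition $M \le ab^n$ with $a\le b^{-N(b)}$ guarantees $M-1 < b^n$ only after checking $n>N(b)$ — and for $n\le N(b)$, $ab^n \le b^{-N(b)}b^n \le 1$ forces $M=1$. The condition $a\le \delta/(2C)$ and the bound $M\cdot CB^n\le (ab^n)C B^n\le aC(bB)^n\le aC\le \delta/2$ (using $bB\le 1$) closes the union bound.

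\textbf{Main obstacle.} The delicate point is matching the SmAC set-volume hypothesis to the convex hull volume estimate while handling the fact that the convex hull $S$ of the conditioning points need \emph{not} lie inside $\mathbb{B}_n$; one must invoke \eqref{eq:condstar} for $S$ directly (SmAC only constrains the normalized volume of $S$, not its location) and absorb the event $\{\boldsymbol{x}_i\notin\mathbb{B}_n\}$ into the $CB^n$ slack — this is exactly what the ``$\mathbb{B}_n\setminus S$'' formulation of SmAC is designed to permit. The secondary bookkeeping obstacle is the interplay of the three constants defining $b$ and the two defining $a$, which must be tracked so that all three hypotheses (volume bound for SmAC, applicability of Proposition~\ref{prop:volest}, and $n>N(b)$) hold simultaneously on the relevant range of $n$.
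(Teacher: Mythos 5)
Your overall strategy --- union bound over $i$, the volume estimate of Proposition~\ref{prop:volest} for the convex hull, SmAC applied to that hull, and the bookkeeping with $a$ and $b$ --- is the same as the paper's, and your treatment of the degenerate range $n\le N(b)$ and of the constants defining $b$ is fine. But there is a genuine gap in the middle step, and you half-see it and then resolve the wrong difficulty. The problem is not where $\boldsymbol{x}_i$ is (the ``$\mathbb{B}_n\setminus S$'' form of \eqref{eq:condstar} indeed absorbs $\{\boldsymbol{x}_i\notin\mathbb{B}_n\}$ for free, exactly as you say); the problem is where the \emph{conditioning points} $\boldsymbol{y}_j$, $j\neq i$, are. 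Proposition~\ref{prop:volest} bounds $V_n(\text{conv}\{\boldsymbol{y}_j\})/V_n(\mathbb{B}_n)$ only for points $\boldsymbol{y}_j\in\mathbb{B}_n$; SmAC permits each $\boldsymbol{x}_j$ to fall outside the ball with probability up to $CB^n$, and if even one $\boldsymbol{y}_j$ is far outside, $\text{conv}\{\boldsymbol{y}_j\}$ can have arbitrarily large volume --- it can even contain all of $\mathbb{B}_n$, in which case $\mathbf{P}(\boldsymbol{x}_i\in S\mid\cdot)\ge\mathbf{P}(\boldsymbol{x}_i\in\mathbb{B}_n\mid\cdot)\ge 1-CB^n$ is close to $1$, not to $0$. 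So your claim that $\mathbf{P}(\boldsymbol{x}_i\in\text{conv}\{\boldsymbol{x}_j:j\neq i\})\le CB^n$ holds ``uniformly over the conditioning'' is false as stated: for those conditionings the SmAC hypothesis $V_n(S)/V_n(\mathbb{B}_n)\le A^n$ is simply not verified.

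The fix --- and this is what the paper does --- is to first bound $\mathbf{P}(F\subset\mathbb{B}_n)\ge 1-MCB^n$ (apply \eqref{eq:condstar} with $S$ of zero volume; contrary to your remark, $\mathbf{P}(\boldsymbol{x}_i\notin\mathbb{B}_n)\le CB^n$ \emph{is} directly controlled by SmAC in this way), and only then estimate the probability of $\boldsymbol{x}_i\in\text{conv}(F\setminus\{\boldsymbol{x}_i\})$ on the event that all points lie in the ball, where the volume bound is legitimate. This costs an extra contribution of $MCB^n$ (the paper phrases it multiplicatively: $\mathbf{P}(F\ \text{lin.\ sep.})\ge(1-MCB^n)^2\ge 1-2MCB^n$), and the resulting factor $2$ is precisely why $a\le\delta/(2C)$ appears in the statement. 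Your final count $M\cdot CB^n\le\delta/2$ omits this second term; once you add it, $2MCB^n\le 2aC(bB)^n\le 2aC\le\delta$ and the constants close exactly.
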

\begin{proof}
If $n<N(b)$, then $M \leq a b^n \leq b^{-N(b)} b^n < 1$, a contradiction. Let $n \geq N(b)$, and let $F=\{\boldsymbol{ x}_1, \boldsymbol{x}_2, \dots, \boldsymbol{ x}_M\}$. Then
\begin{equation*}
{\mathbf{ P}}(F \subset {\mathbb{B}}_n) \geq 1 - \sum_{i=1}^M {\mathbf{ P}}(\boldsymbol{x}_i \not \in {\mathbb{B}}_n)\geq 1 - \sum_{i=1}^M CB^n = 1-MCB^n,
\end{equation*}
where the second inequality follows from \eqref{eq:condstar}.
Next,
\begin{equation*}
{\mathbf{ P}}(F \, \text{is linearly separable}\,|\,F \subset {\mathbb{B}}_n)\geq 1-\sum_{i=1}^M {\mathbf{ P}}(\boldsymbol{ x}_i \in \text{conv}(F\setminus \{\boldsymbol{ x}_i\})\,|\,F \subset {\mathbb{B}}_n).
\end{equation*}
For set $S=\text{conv}(F\setminus \{\boldsymbol{ x}_i\})$
\begin{equation*}
\frac{V_n(S)}{V_n({\mathbb{B}}_n)} \leq  V(n,M-1) \leq V(n,b^n)  < \left(3\sqrt{\log (b)}\right)^n  \leq A^n,
\end{equation*}
where we have used \eqref{eq:volest} and inequalities $a \leq 1$,  $b \leq \exp((A/3)^2)$. Then SmAC condition implies that
\begin{equation*}
{\mathbf{ P}}(\boldsymbol{ x}_i \in \text{conv}(F\setminus \{\boldsymbol{ x}_i\})\,|\,F \subset {\mathbb{B}}_n)
= {\mathbf{ P}}(\boldsymbol{ x}_i \in S\,|\,F \subset {\mathbb{B}}_n) \leq CB^n.
\end{equation*}
Hence,
$$
{\mathbf{ P}}(F \, \text{is linearly separable}\,|\,F \subset {\mathbb{B}}_n) \geq 1 - MCB^n,
$$
and
\begin{equation*}
{\mathbf P}(F \, \text{is linearly separable}) \geq (1 - MCB^n)^2 \geq 1-2MCB^n \geq 1-2ab^{n}CB^n \geq 1-\delta,
\end{equation*}
where the last inequality follows from $a \leq \delta/(2C)$,  $b\leq 1/B$.
\end{proof}

\section{Stochastic separation by Fisher's linear discriminant}\label{Sec:Fisher}

According to the  general stochastic separation theorems there {\em exist}  linear functionals, which  separate points in a random set (with high probability and under some conditions). Such a linear functional can be found by various iterative methods. This possibility is nice but the non-iterative learning is  more beneficial for applications. It would be very desirable to have an explicit expression for separating functionals.

Theorem \ref{Theorem:ExclVol2} and two following theorems  \cite{GorbTyu2017} demonstrate that Fisher's  discriminants are powerful in high dimensions.

\begin{theorem}[Equidistribution in $\mathbb{B}_n$ \cite{GorbTyu2017}]\label{ball1point}
Let $\{\boldsymbol{x}_1, \ldots , \boldsymbol{x}_M\}$ be a set of $M$  i.i.d. random points  from the equidustribution in the unit ball $\mathbb{B}_n$. Let $0<r<1$, and $\rho=\sqrt{1-r^2}$. Then
\begin{equation}
\begin{split}\label{Eq:ball1}
\mathbf{P}&\left(\|\boldsymbol{x}_M\|>r \mbox{ and }
\left(\boldsymbol{x}_i,\frac{\boldsymbol{x}_M}{\| \boldsymbol{x}_M\| } \right)
<r \mbox{ for all } i\neq M \right) \\& \geq 1-r^n-0.5(M-1) \rho^{n};
\end{split}
\end{equation}
\begin{equation}
\begin{split}\label{Eq:ballM}
\mathbf{P}&\left(\|\boldsymbol{x}_j\|>r  \mbox{ and } \left(\boldsymbol{x}_i,\frac{\boldsymbol{x}_j}{\| \boldsymbol{x}_j\|}\right)<r \mbox{ for all } i,j, \, i\neq j\right) \\& \geq  1-Mr^n-0.5M(M-1)\rho^{n};
\end{split}
\end{equation}
\begin{equation}
\begin{split}\label{Eq:ballMangle}
\mathbf{P}&\left(\|\boldsymbol{x}_j\|>r  \mbox{ and } \left(\frac{\boldsymbol{x}_i}{\| \boldsymbol{x}_i\|},\frac{\boldsymbol{x}_j}{\| \boldsymbol{x}_j\|}\right)<r \mbox{ for all } i,j, \,i\neq j\right)\\&  \geq  1-Mr^n-M(M-1)\rho^{n}.
\end{split}
\end{equation}
\end{theorem}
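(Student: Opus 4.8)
The three inequalities share a common structure, so the plan is to treat them by one scheme: bound the probability of the complementary ``bad'' event by a union bound over the relevant indices, and for each index estimate the offending probability as the relative Lebesgue volume of an explicitly described excluded region inside $\mathbb{B}_n$. Throughout I would use that the uniform law on $\mathbb{B}_n$ scales volumes, so $\mathbf{P}(\|\boldsymbol{x}\|\le r)=r^n$, and that, conditionally on one point $\boldsymbol{x}_j$, an independent uniform $\boldsymbol{x}_i$ lands in a given measurable $S\subseteq\mathbb{B}_n$ with probability $V_n(S)/V_n(\mathbb{B}_n)$.

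For \eqref{Eq:ball1} I would condition on $\boldsymbol{x}_M$. The complement of the asserted event is contained in $\{\|\boldsymbol{x}_M\|\le r\}\cup\bigcup_{i\ne M}\{\boldsymbol{x}_i\in C(\boldsymbol{x}_M)\}$, where $C(\boldsymbol{e})=\{\boldsymbol{z}\in\mathbb{B}_n:(\boldsymbol{z},\boldsymbol{e}/\|\boldsymbol{e}\|)\ge r\}$ is the spherical cap of the unit ball cut off at height $r$ in the direction $\boldsymbol{e}$. By rotational symmetry $V_n(C(\boldsymbol{e}))/V_n(\mathbb{B}_n)$ does not depend on $\boldsymbol{e}$, and the key geometric fact is $V_n(C(\boldsymbol{e}))\le\tfrac12\rho^n V_n(\mathbb{B}_n)$. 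This follows from a containment: if $\|\boldsymbol{z}\|\le1$ and $(\boldsymbol{z},\boldsymbol{e})\ge r$ with $\|\boldsymbol{e}\|=1$, then $\|\boldsymbol{z}-r\boldsymbol{e}\|^2=\|\boldsymbol{z}\|^2-2r(\boldsymbol{z},\boldsymbol{e})+r^2\le 1-r^2=\rho^2$, so $C(\boldsymbol{e})$ lies in the ball of radius $\rho$ centred at $r\boldsymbol{e}$; and since the hyperplane $(\boldsymbol{z},\boldsymbol{e})=r$ passes through that centre, $C(\boldsymbol{e})$ occupies only one of the two half-balls, which supplies the factor $\tfrac12$. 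A union bound over $i\ne M$ then gives \eqref{Eq:ball1}. Inequality \eqref{Eq:ballM} follows by applying this bound for each $j\in\{1,\dots,M\}$ and taking a further union bound over $j$, which multiplies $r^n+\tfrac12(M-1)\rho^n$ by $M$.

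For \eqref{Eq:ballMangle} the scheme is identical, but normalising $\boldsymbol{x}_i$ replaces $C(\boldsymbol{e})$ by the larger \emph{radial} (conical) cap $\widehat{C}(\boldsymbol{e})=\{\boldsymbol{z}\in\mathbb{B}_n:(\boldsymbol{z}/\|\boldsymbol{z}\|,\boldsymbol{e}/\|\boldsymbol{e}\|)\ge r\}\supseteq C(\boldsymbol{e})$. Because its defining condition depends only on the direction of $\boldsymbol{z}$, in polar coordinates $V_n(\widehat{C}(\boldsymbol{e}))/V_n(\mathbb{B}_n)$ equals the relative $(n-1)$-dimensional area $\sigma_n(r)$ of the cap $\{\boldsymbol{u}\in\mathbb{S}^{n-1}:(\boldsymbol{u},\boldsymbol{e})\ge r\}$. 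Granting $\sigma_n(r)\le\rho^n$, a union bound over all $M(M-1)$ ordered pairs $i\ne j$, together with $\sum_{j}\mathbf{P}(\|\boldsymbol{x}_j\|\le r)=Mr^n$, yields \eqref{Eq:ballMangle}; here the excluded region is a full cone rather than half of a ball, which is precisely why the factor $\tfrac12$ is absent.

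The only genuine work is the cap-measure bound $\sigma_n(r)\le\rho^n$ for \eqref{Eq:ballMangle}, since the half-ball trick no longer applies. For $r\le1/\sqrt2$ a containment argument still works: writing a point of $\widehat{C}(\boldsymbol{e})$ as $s\boldsymbol{u}$ with $s\in[0,1]$, $\|\boldsymbol{u}\|=1$, $(\boldsymbol{u},\boldsymbol{e})\ge r$, one has $\|s\boldsymbol{u}-r\boldsymbol{e}\|^2=s^2-2sr(\boldsymbol{u},\boldsymbol{e})+r^2\le\max(r^2,\rho^2)=\rho^2$, so $\widehat{C}(\boldsymbol{e})$ again lies in the ball of radius $\rho$ about $r\boldsymbol{e}$ and $\sigma_n(r)\le\rho^n$. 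For $r>1/\sqrt2$ the best such ball has radius $1/(2r)>\rho$, so one must estimate the cap area honestly: projecting the cap orthogonally onto $\boldsymbol{e}^{\perp}$ maps it onto the $(n-1)$-disc of radius $\rho$ with area-expansion factor $1/(\boldsymbol{u},\boldsymbol{e})\le 1/r$, giving $\sigma_n(r)\le \rho^{n-1}V_{n-1}(\mathbb{B}_{n-1})/\big(r\,nV_n(\mathbb{B}_n)\big)$, and one finishes by checking $V_{n-1}(\mathbb{B}_{n-1})/\big(nV_n(\mathbb{B}_n)\big)\le r\rho$ in the only range of $(n,r)$ in which the claimed probability lower bound is not already non-positive. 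I expect this cap bookkeeping — and the identification of that vacuous regime — to be the main obstacle; everything else is the union bound plus the elementary ball containments above.
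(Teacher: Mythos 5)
Your treatment of \eqref{Eq:ball1} and \eqref{Eq:ballM} is correct and complete: the containment of the cap $C(\boldsymbol{e})$ in the half-ball $\{\boldsymbol{z}:\|\boldsymbol{z}-r\boldsymbol{e}\|\le\rho,\ (\boldsymbol{z},\boldsymbol{e})\ge r\}$ gives exactly $\tfrac12\rho^n$ per point, and the union bounds account for the stated constants. (Note that this paper does not contain its own proof of the theorem; it is quoted from \cite{GorbTyu2017}, so the comparison is with the argument there, which for the first two parts is the same volume-of-a-cap computation you give.)

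For \eqref{Eq:ballMangle}, however, there is a genuine gap, and you have located it yourself without closing it. Your route needs the cap-measure bound $\sigma_n(r)\le\rho^n$, which is false in general (for fixed $n$ and $r\to1$ the cap area scales like a constant times $\rho^{n-1}$, which dominates $\rho^n$), and your proposed rescue --- that the bound holds whenever the right-hand side of \eqref{Eq:ballMangle} is positive --- is exactly the step you leave unverified. It can be pushed through (using $M\ge2$, hence $\max(r,\rho)^n\le\tfrac12$, plus a Gamma-function estimate for $V_{n-1}(\mathbb{B}_{n-1})/(nV_n(\mathbb{B}_n))$), but it is delicate bookkeeping, and it is unnecessary: you discarded the radial constraint too early. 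The complement of the event in \eqref{Eq:ballMangle} is contained in $\bigcup_j\{\|\boldsymbol{x}_j\|\le r\}\cup\bigcup_{i\ne j}\bigl(\{\|\boldsymbol{x}_i\|>r\}\cap\{(\boldsymbol{x}_i/\|\boldsymbol{x}_i\|,\boldsymbol{x}_j/\|\boldsymbol{x}_j\|)\ge r\}\bigr)$, so the region to be measured for each ordered pair is the cone intersected with the shell $r<\|\boldsymbol{z}\|\le1$, not the full cone. Writing $\boldsymbol{z}=s\boldsymbol{u}$ with $s\in[r,1]$, $\|\boldsymbol{u}\|=1$, $(\boldsymbol{u},\boldsymbol{e})\ge r$, one gets $\|\boldsymbol{z}-r\boldsymbol{e}\|^2\le s^2-2r^2s+r^2$, a convex function of $s$ whose endpoint values are $2r^2(1-r)\le(1-r)(1+r)=\rho^2$ and $\rho^2$; hence this region lies in the ball of radius $\rho$ centred at $r\boldsymbol{e}$ for \emph{every} $r\in(0,1)$, its relative volume is at most $\rho^n$, and the union bound over the $M(M-1)$ ordered pairs yields \eqref{Eq:ballMangle} with no cap-area estimate and no vacuous-regime analysis. (The factor $\tfrac12$ is absent here because this region is not confined to the half-space $(\boldsymbol{z},\boldsymbol{e})\ge r$, not because the cone is ``larger'' than the cap.)
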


According to Theorem \ref{ball1point}, the probability that a single element $\bfx_M$ from the sample $\mathcal{S}=\{\bfx_1,\dots,\bfx_{M}\}$ is linearly separated from the set $\mathcal{S}\setminus \{\bfx_M\}$ by the hyperplane $l(x)=r$ is at least
\[
1-r^n-0.5(M-1)\left(1-r^2\right)^{\frac{n}{2}}.
\]
This probability estimate depends on both $M=|\mathcal{S}|$ and dimensionality $n$.  An interesting consequence of the theorem is that if one picks a probability value, say $1-\vartheta$, then the maximal possible values of $M$ for which the set $\mathcal{S}$ remains linearly separable with  probability that is no less than $1-\vartheta$ grows at least exponentially with $n$. In particular, the following holds

Inequalities (\ref{Eq:ball1}), (\ref{Eq:ballM}), and (\ref{Eq:ballMangle}) are also closely related to  Proposition~\ref{Prop:ExclVol}.

 \begin{corollary}\label{cor:exponential}
 Let $\{\boldsymbol{x}_1, \ldots , \boldsymbol{x}_M\}$ be a set of $M$ i.i.d. random points  from the equidustribution in the unit ball $\mathbb{B}_n$. Let $0<r,\vartheta<1$, and $\rho=\sqrt{1-r^2}$. If
\begin{equation}\label{EstimateMball}
M<2({\vartheta-r^n})/{\rho^{n}},
 \end{equation}
 then
 $
 \mathbf{P}((\boldsymbol{x}_i,\boldsymbol{x}_M{)}<r\|\boldsymbol{x}_M\| \mbox{ for all } i=1,\ldots, M-1)>1-\vartheta.
$
 If
 \begin{equation}\label{EstimateM2ball}
M<({r}/{\rho})^n\left(-1+\sqrt{1+{2 \vartheta \rho^n}/{r^{2n}}}\right),
 \end{equation}
  then $\mathbf{P}((\boldsymbol{x}_i,\boldsymbol{x}_j)<r\|\boldsymbol{x}_i\| \mbox{ for all } i,j=1,\ldots, M, \, i\neq j)\geq 1-\vartheta.$

  In particular, if inequality (\ref{EstimateM2ball}) holds then the set $\{\boldsymbol{x}_1, \ldots , \boldsymbol{x}_M\}$ is  Fisher-separable  with probability $p>1-\vartheta$.
 \end{corollary}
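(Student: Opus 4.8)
The plan is to derive Corollary~\ref{cor:exponential} directly from Theorem~\ref{ball1point} by choosing the free parameters in the probability bounds so that the total failure probability is below $\vartheta$. First I would note that the three events described in the Corollary are implied by (or identical to) the events whose probabilities are bounded in (\ref{Eq:ball1})--(\ref{Eq:ballMangle}): for the first statement, the event $\{(\boldsymbol{x}_i,\boldsymbol{x}_M) < r\|\boldsymbol{x}_M\| \text{ for all } i<M\}$ is precisely the event that $\boldsymbol{x}_i \cdot \boldsymbol{x}_M/\|\boldsymbol{x}_M\| < r$ for all $i \neq M$, which is one of the two conjuncts in (\ref{Eq:ball1}); dropping the conjunct $\|\boldsymbol{x}_M\|>r$ only increases the probability, so the bound $1 - r^n - 0.5(M-1)\rho^n$ still applies. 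Hence it suffices to require $r^n + 0.5(M-1)\rho^n < \vartheta$, i.e. $M-1 < 2(\vartheta - r^n)/\rho^n$, which is slightly weaker than (\ref{EstimateMball}); since (\ref{EstimateMball}) is stated as $M < 2(\vartheta-r^n)/\rho^n$ I would simply observe that this implies $M-1 < 2(\vartheta-r^n)/\rho^n$ and conclude.

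For the second statement, I would use (\ref{Eq:ballM}): the event $\{(\boldsymbol{x}_i,\boldsymbol{x}_j) < r\|\boldsymbol{x}_i\| \text{ for all } i \neq j\}$ contains the event in (\ref{Eq:ballM}) (again after discarding the norm conjuncts, which only helps), so its probability is at least $1 - Mr^n - 0.5M(M-1)\rho^n \geq 1 - Mr^n - 0.5M^2\rho^n$. I then want this quantity to be $\geq 1-\vartheta$, i.e. $0.5\rho^n M^2 + r^n M - \vartheta \leq 0$. Solving this quadratic in $M$ and keeping the positive root gives exactly the threshold
$$
M \leq \frac{-r^n + \sqrt{r^{2n} + 2\vartheta\rho^n}}{\rho^n} = \left(\frac{r}{\rho}\right)^n\left(-1 + \sqrt{1 + \frac{2\vartheta\rho^n}{r^{2n}}}\right),
$$
after factoring $r^n$ out of the square root, which matches (\ref{EstimateM2ball}). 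So condition (\ref{EstimateM2ball}) guarantees $\mathbf{P}(\cdots) \geq 1-\vartheta$.

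Finally, for the ``in particular'' clause I would invoke the definition of Fisher-separability: the set $\{\boldsymbol{x}_1,\dots,\boldsymbol{x}_M\}$ is Fisher-separable with threshold $\alpha$ precisely when $(\boldsymbol{x},\boldsymbol{y}) \leq \alpha(\boldsymbol{x},\boldsymbol{x})$ for all distinct $\boldsymbol{x},\boldsymbol{y}$ in the set (Definition following (\ref{discriminant})). Taking $\alpha = r$, the inequality $(\boldsymbol{x}_i,\boldsymbol{x}_j) < r\|\boldsymbol{x}_i\|$ is not literally $(\boldsymbol{x}_j,\boldsymbol{x}_i) \leq r\|\boldsymbol{x}_j\|^2$, so the cleanest route is to observe that on the ball $\|\boldsymbol{x}_i\| \leq 1$, and more usefully that the relevant normalization comes from the geometric picture: I would instead read (\ref{Eq:ballM}) as giving $(\boldsymbol{x}_i, \boldsymbol{x}_j/\|\boldsymbol{x}_j\|) < r$ for all $i \neq j$ simultaneously, hence $(\boldsymbol{x}_i,\boldsymbol{x}_j) < r\|\boldsymbol{x}_j\| \leq r\|\boldsymbol{x}_j\|^2/\|\boldsymbol{x}_j\|$; to get the Fisher inequality $(\boldsymbol{x}_i,\boldsymbol{x}_j)\leq r(\boldsymbol{x}_j,\boldsymbol{x}_j)$ I need $r\|\boldsymbol{x}_j\| \leq r\|\boldsymbol{x}_j\|^2$, which fails for $\|\boldsymbol{x}_j\|<1$ — so the actual argument must instead use the conjunct $\|\boldsymbol{x}_j\|>r$ that I discarded earlier. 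Thus the correct plan is: do \emph{not} discard the norm conjunct in (\ref{Eq:ballM}); on the event there, $\|\boldsymbol{x}_j\|>r$ for all $j$ and $(\boldsymbol{x}_i,\boldsymbol{x}_j) < r\|\boldsymbol{x}_j\| < \|\boldsymbol{x}_j\|^2 \cdot (r/\|\boldsymbol{x}_j\|)$... the point I expect to be the main obstacle is precisely pinning down which threshold $\alpha\in[0,1)$ makes (\ref{discriminant}) hold: the honest conclusion is that on the event in (\ref{Eq:ballM}) one has $(\boldsymbol{x}_i,\boldsymbol{x}_j) < r\|\boldsymbol{x}_j\|$ and $\|\boldsymbol{x}_j\| > r$, hence $(\boldsymbol{x}_i,\boldsymbol{x}_j)/\|\boldsymbol{x}_j\|^2 < r/\|\boldsymbol{x}_j\| < r/r = 1$, so some threshold $\alpha = \max_{i\neq j}(\boldsymbol{x}_i,\boldsymbol{x}_j)/\|\boldsymbol{x}_j\|^2 < 1$ works, giving Fisher-separability (with a data-dependent $\alpha$, which is all the definition requires). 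This last normalization bookkeeping is the only delicate step; everything else is substitution into Theorem~\ref{ball1point} and solving a quadratic.
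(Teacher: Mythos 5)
Your proposal is correct and follows essentially the same route as the paper, which presents Corollary~\ref{cor:exponential} as a direct consequence of Theorem~\ref{ball1point}: bound the failure probability by $r^n+0.5(M-1)\rho^n$ (resp. $Mr^n+0.5M(M-1)\rho^n\leq Mr^n+0.5M^2\rho^n$), require it to be below $\vartheta$, and solve the resulting linear (resp. quadratic) inequality for $M$. Your self-correction on the final clause is exactly the right point: the conjunct $\|\boldsymbol{x}_j\|>r$ from (\ref{Eq:ballM}) must be retained so that $(\boldsymbol{x}_j,\boldsymbol{x}_i)/\|\boldsymbol{x}_j\|^2<r/\|\boldsymbol{x}_j\|<1$, which yields Fisher-separability with some threshold $\alpha\in[0,1)$ as the definition requires.
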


{Note that (\ref{Eq:ballMangle}) implies that elements of the set $\{\boldsymbol{x}_1, \ldots , \boldsymbol{x}_M\}$ are pair-wise almost or $\varepsilon$-orthogonal, i.e. $|\cos(\bfx_i,\bfx_j)|\leq \varepsilon$ for all $i\neq j$, $1\leq i,j\leq M$,  with probability larger or equal than  $1-2Mr^n-2M(M-1)\rho^{n}$. Similar to Corollary \ref{cor:exponential}, one  can conclude that the cardinality $M$ of samples with such properties grows at least exponentially with $n$. Existence of the phenomenon has been demonstrated in \cite{Kurkova1993}. Theorem \ref{ball1point},  Eq. (\ref{Eq:ballMangle}), shows that the phenomenon is typical in some sense (cf.  \cite{bases}, \cite{Kurkova:2017}).}

{The linear separability property of finite but exponentially large samples of random i.i.d. elements can be proved for various `essentially multidimensional' distributions. It holds for equidistributions in ellipsoids as well as for the Gaussian distributions \cite{GorbanRomBurtTyu2016}. It was generalized in \cite{GorbTyu2017} to product distributions with bounded support. Let the coordinates of the vectors $\bfx=(X_1,\dots,X_n)$ in the set $\mathcal{S}$ be independent random variables $X_i$, $i=1,\dots,n$ with expectations $\overline{X}_i$ and variances $\sigma_i^2>\sigma_0^2>0$. Let $0\leq X_i\leq 1$ for all $i=1,\dots,n$.} 

\begin{theorem}[Product distribution in a cube \cite{GorbTyu2017}]\label{cube} Let  $\{\boldsymbol{x}_1, \ldots , \boldsymbol{x}_M\}$ be i.i.d. random points from the product distribution in a unit cube. Let
\[
R_0^2=\sum_i \sigma_i^2\geq n\sigma_0^2.
\]
Assume that data are centralised and  $0< \delta <2/3$. Then
\begin{equation}\label{Eq:cube1}
\begin{split}
\mathbf{P}&\left(1-\delta  \leq \frac{\|\boldsymbol{x}_j\|^2}{R^2_0}\leq 1+\delta \mbox{ and }
\frac{(\boldsymbol{x}_i,\boldsymbol{x}_M)}{R_0\| \boldsymbol{x}_M\| }<\sqrt{1-\delta}   \mbox{ for all } i,j, \, i\neq M \right) \\ &\geq 1- 2M\exp \left(-2\delta^2 R_0^4/n \right) -(M-1)\exp \left(-2R_0^4(2-3 \delta)^2/n\right);
\end{split}
\end{equation}
\begin{equation}\label{Eq:cube2}
\begin{split}
\mathbf{P}&\left(1-\delta  \leq \frac{\|\boldsymbol{x}_j\|^2}{R^2_0}\leq 1+\delta \mbox{ and }
\frac{(\boldsymbol{x}_i,\boldsymbol{x}_j)}{R_0\| \boldsymbol{x}_j\| }<\sqrt{1-\delta} \mbox{ for all } i,j, \, i\neq j \right) \\& \geq 1- 2M\exp \left(-2\delta^2 R_0^4/n \right) -M(M-1)\exp \left(-2R_0^4(2-3 \delta)^2/n\right).
\end{split}
\end{equation}
\end{theorem}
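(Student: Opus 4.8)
The plan is to reduce the two displayed probability bounds to elementary tail estimates for sums of independent bounded random variables, exactly as in the proof of Theorem \ref{ball1point} but with Hoeffding's inequality replacing the volumetric argument. The starting point is the basic geometric picture behind \eqref{discriminant}: for centralised data, a point $\boldsymbol{x}_i$ fails to be separated from $\boldsymbol{x}_j$ precisely when $(\boldsymbol{x}_i,\boldsymbol{x}_j)$ is too large relative to $\|\boldsymbol{x}_j\|^2$, and the event in \eqref{Eq:cube1} is engineered so that controlling $\|\boldsymbol{x}_j\|^2/R_0^2$ near $1$ and controlling the normalised inner product below $\sqrt{1-\delta}$ together give Fisher separability. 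So first I would fix $j$ (take $j=M$ for \eqref{Eq:cube1}) and write the two ``bad'' events whose complements appear in the statement: $E_1=\{\,|\|\boldsymbol{x}_j\|^2/R_0^2-1|>\delta\,\}$ and, for each $i\neq M$, $E_{2,i}=\{(\boldsymbol{x}_i,\boldsymbol{x}_M)\geq R_0\|\boldsymbol{x}_M\|\sqrt{1-\delta}\}$. The union bound then reduces everything to estimating $\mathbf{P}(E_1)$ and $\mathbf{P}(E_{2,i})$, after which \eqref{Eq:cube1} follows by summing and \eqref{Eq:cube2} follows by an extra union over $j$ (which turns the $2M$ into $2M$ for the norm events — one per point — and the $(M-1)$ into $M(M-1)$ for the inner-product events — one per ordered pair).

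For the norm event: since the coordinates $X_k$ of $\boldsymbol{x}_j$ are independent, centralised, and lie in an interval of length at most $1$, the quantity $\|\boldsymbol{x}_j\|^2=\sum_k X_k^2$ is a sum of $n$ independent random variables each taking values in an interval of length at most $1$, with $\mathbb{E}\|\boldsymbol{x}_j\|^2=\sum_k\sigma_k^2=R_0^2$. Hoeffding's inequality gives $\mathbf{P}(|\|\boldsymbol{x}_j\|^2-R_0^2|>\delta R_0^2)\leq 2\exp(-2\delta^2R_0^4/n)$, which is exactly the first term in \eqref{Eq:cube1}. For the inner-product event, I would condition on $\boldsymbol{x}_M$: given $\boldsymbol{x}_M=\boldsymbol{w}$ with $\boldsymbol{w}=(w_1,\dots,w_n)$, the inner product $(\boldsymbol{x}_i,\boldsymbol{w})=\sum_k X_k^{(i)}w_k$ is a sum of independent zero-mean terms, the $k$-th living in an interval of length at most $|w_k|$, so $\sum_k w_k^2=\|\boldsymbol{w}\|^2$ bounds the sum of squared ranges; Hoeffding gives $\mathbf{P}((\boldsymbol{x}_i,\boldsymbol{w})\geq t)\leq\exp(-2t^2/\|\boldsymbol{w}\|^2)$. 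Taking $t=R_0\|\boldsymbol{w}\|\sqrt{1-\delta}$ yields $\exp(-2R_0^2(1-\delta))$, but this is not yet the target bound, so I would intersect with the good norm event $\|\boldsymbol{w}\|^2\leq (1+\delta)R_0^2$: on that event one checks $R_0^2(1-\delta)\cdot\|\boldsymbol{w}\|^2/\|\boldsymbol{w}\|^2$ must be compared more carefully — in fact the clean route is to bound, on $\{\|\boldsymbol{x}_M\|^2\le(1+\delta)R_0^2\}$, the threshold $R_0\|\boldsymbol{x}_M\|\sqrt{1-\delta}\ge$ (something) and extract the exponent $2R_0^4(2-3\delta)^2/n$ from the algebra $R_0^2(1-\delta)^2/(1+\delta)\ge R_0^2(2-3\delta)$-type inequality valid for $0<\delta<2/3$, then divide by $n$ using $\|\boldsymbol{x}_M\|^2\le(1+\delta)R_0^2\le n$ (since $R_0^2\le n$ as each $\sigma_k^2\le1/4<1$... one must be slightly careful here and may instead bound $\|\boldsymbol{w}\|^2\le n$ directly since coordinates lie in $[-1,1]$ after centralisation, giving $\|\boldsymbol{w}\|^2/n\le1$ is false in general, so the correct denominator bookkeeping uses $\|\boldsymbol{w}\|^2\le n\cdot\max_k(\text{range})^2/4$).

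The main obstacle, and the step I expect to require the most care, is exactly this last bookkeeping: matching the Hoeffding exponent $2t^2/\|\boldsymbol{x}_M\|^2$ to the claimed $2R_0^4(2-3\delta)^2/n$ uniformly over the allowed range of $\|\boldsymbol{x}_M\|$, which forces the restriction $\delta<2/3$ (so that the relevant difference $2-3\delta$ is positive) and requires combining the inner-product tail bound with the good-norm event before taking the union bound, i.e. writing $\mathbf{P}(E_{2,i})\le\mathbf{P}(E_1)+\mathbf{P}(E_{2,i}\cap E_1^c)$ and then controlling the second term by conditioning on $\boldsymbol{x}_M$ in the good range. Once the two exponents $2\delta^2R_0^4/n$ and $2R_0^4(2-3\delta)^2/n$ are correctly extracted, assembling \eqref{Eq:cube1} is just a union bound over the single norm event for $\boldsymbol{x}_M$, the $M-1$ norm events that are actually folded into the first term's factor (hence the $2M$), and the $M-1$ inner-product events; \eqref{Eq:cube2} then follows by replacing ``for $i\neq M$'' with ``for all $i\neq j$'' and running the union bound over all $M$ choices of $j$, which multiplies the second exponential's prefactor by $M$ and (counting norm events for every point) gives the stated $2M$ and $M(M-1)$.
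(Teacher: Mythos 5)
Your overall strategy is the right one and matches what the paper itself says about this result: the paper does not reprove Theorem~\ref{cube} but defers to \cite{GorbTyu2017}, noting only that the proof uses concentration inequalities in product spaces together with the thin-shell observation at the end of Sec.~\ref{Sec:Fisher}. Your treatment of the norm events is complete and correct: $\|\boldsymbol{x}_j\|^2=\sum_k X_k^2$ is a sum of independent variables each confined to an interval of length at most $1$, with mean $R_0^2$, so the two-sided Hoeffding bound gives exactly $2\exp(-2\delta^2R_0^4/n)$ per point, and the union-bound bookkeeping ($M$ norm events, $M-1$ resp. $M(M-1)$ inner-product events) reproduces the stated prefactors.

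The genuine gap is the second exponential term: you never actually derive $\exp\left(-2R_0^4(2-3\delta)^2/n\right)$, and the algebra you sketch for it does not work. The inequality you propose, $R_0^2(1-\delta)^2/(1+\delta)\ge R_0^2(2-3\delta)$, is false already at $\delta=0$ (it reads $1\ge 2$), and the subsequent remarks about bounding $\|\boldsymbol{w}\|^2$ by $n$ contradict each other (in fact $\|\boldsymbol{w}\|^2\le n$ \emph{does} hold, since each centralised coordinate has absolute value at most $1$). The irony is that your own first computation already finishes the proof and you discarded it prematurely. Conditioning on $\boldsymbol{x}_M=\boldsymbol{w}$ and applying Hoeffding to $\sum_k X_k^{(i)}w_k$ with threshold $t=\sqrt{1-\delta}\,R_0\|\boldsymbol{w}\|$ gives
$\mathbf{P}\left((\boldsymbol{x}_i,\boldsymbol{w})\ge \sqrt{1-\delta}\,R_0\|\boldsymbol{w}\|\right)\le\exp\left(-2(1-\delta)R_0^2\right)$,
\emph{uniformly} in $\boldsymbol{w}$, so no intersection with the good norm event is needed for this term at all. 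It remains to check that this bound is at least as strong as the one claimed: since each coordinate takes values in $[0,1]$, Popoviciu's inequality gives $\sigma_k^2\le 1/4$, hence $R_0^2\le n/4$; and $(2-3\delta)^2\le 4(1-\delta)$ holds iff $9\delta^2\le 8\delta$, i.e. for all $0<\delta\le 8/9$, which covers $0<\delta<2/3$. Therefore $2R_0^4(2-3\delta)^2/n\le 2R_0^2\cdot\tfrac{1}{4}(2-3\delta)^2\le 2(1-\delta)R_0^2$, so $\exp(-2(1-\delta)R_0^2)\le\exp(-2R_0^4(2-3\delta)^2/n)$ and the stated estimate follows. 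With this replacement for your muddled middle step, the proposal becomes a correct proof; as written, the key exponent is unproved.
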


In particular, under the conditions of Theorem \ref{cube}, set $\{\boldsymbol{x}_1, \ldots , \boldsymbol{x}_M\}$ is Fisher-separable  with probability $p>1-\vartheta$, provided that $M \leq ab^n$, where $a>0$ and $b>1$ are some constants depending only on $\vartheta$ and $\sigma_0$.

{Concentration inequalities in product spaces \cite{Talagrand1995} were employed for the proof  of  Theorem~\ref{cube}.} 

We can see from Theorem \ref{cube} that the discriminant (\ref{discriminant}) works without precise whitening. Just the absence of strong degeneration is required: the support of the distribution contains in the unit cube (that is bounding from above) and, at the same time, the variance of each coordinate is bounded from below by $\sigma_0>0$.

{Linear separability single elements of a set from the rest by Fisher's discriminants is a simple inherent property of high-dimensional data sets. The stochastic separation theorems were generalized further  to account for $m$-tuples, $m>1$ too \cite{Tyukin2017a,GorTyukPhil2018}.}

Bound \eqref{bounded} is the special case of \eqref{eq:indden} with $r>1/2$, and it is  more restrictive: in Examples \ref{ex:noisy}, \ref{ex:cube}, and \ref{ex:product}, the distributions satisfy  \eqref{eq:indden} with some $r<1$, but fail to satisfy \eqref{bounded}  {if $r<1/2$}. Such distributions has the SmAC property and the corresponding set of points $\{\boldsymbol{x}_1, \ldots , \boldsymbol{x}_M\}$ is linearly separable by Theorem \ref{th:separation}, but different technique is needed to establish its Fisher-separability.  {One option is to} estimate the distribution of $p$  {in} (\ref{excluded}).
Another technique is based on concentration inequalities. For some distributions, one can prove that, with exponentially high probability, random point $\boldsymbol{x}$ satisfies
\begin{equation}\label{eq:thinshell}
r_1(n) \leq \|\boldsymbol{x}\| \leq r_2(n),
\end{equation}
where  $r_1(n)$ and $r_2(n)$ are some lower and upper bounds, depending on $n$. If $r_2(n)-r_1(n)$ is small comparing to $r_1(n)$, it means that the distribution is concentrated in a thin shell between the two spheres. If $\boldsymbol{x}$ and $\boldsymbol{y}$ satisfy \eqref{eq:thinshell}, inequality (\ref{discriminant}) may fail only if $\boldsymbol{y}$ belongs to a ball with radius $R=\sqrt{r^2_2(n)-r^2_1(n)}$. If $R$ is much lower than $r_1(n)/2$, this method may provide much better probability estimate than \eqref{excluded}. This is how Theorem \ref{cube} was proved in \cite{GorbTyu2017}.

\section{Separation theorem for log-concave distributions}\label{Sec:log-concave}
\subsection{Log-concave distributions}

Several possible generalisations of Theorems~\ref{ball1point}, \ref{cube} were proposed in \cite{GorbTyu2017}. One of them is the hypothesis that for the uniformly log-concave distributions the similar result can be formulated and proved. Below we demonstrate that this hypothesis is true, formulate and prove the stochastic separation theorems for several classes of log-concave distributions. Additionally, we prove the comparison (domination) Theorem~\ref{prop:domin} that allows to extend the proven theorems to wider classes of distributions.

In this subsection, we introduce several classes of log-concave distributions and prove some useful properties of these distributions.

Let ${\cal P}=\{{\mathbf{ P}}_n, \, n=1,2,\dots\}$ be a family of probability measures with densities $\rho_n:{\mathbb R}^n \to [0,\infty), \, n=1,2,\dots$. Below, $\boldsymbol{x}$ is a random variable (r.v) with density $\rho_n$, and ${\mathbb E}_n[f(\boldsymbol{x})]:=\int_{{\mathbb R}^n} f(z) \rho_n(z) dz$ is the expectation of $f(\boldsymbol{x})$.

We say that density $\rho_n:{\mathbb R}^n \to [0,\infty)$ (and the corresponding probability measure ${\mathbf{ P}}_n$):
\begin{itemize}
\item is whitened, or \emph{isotropic}, if ${\mathbb E}_n[\boldsymbol{x}]=0$, and
\begin{equation}\label{eq:isot}
{\mathbb E}_n[(\boldsymbol{x},\theta)^2)]=1\quad\quad \forall \theta \in \mathbb{S}^{n-1}.
\end{equation}
 The last condition is equivalent to the fact that the covariance matrix of the components of $\boldsymbol{x}$ is the identity matrix  \cite{Lovasz}.
\item is \emph{log-concave}, if set $D_n=\{z\in{\mathbb R}^n \,|\, \rho_n(z)>0\}$ is convex and $g(z)=-\log(\rho_n(z))$ is a convex function on $D_n$.
\item  is \emph{strongly log-concave} (SLC), if $g(z)=-\log(\rho_n(z))$ is strongly convex, that is, there exists a constant $c>0$ such that
$$
\frac{g(u)+g(v)}{2} - g\left(\frac{u+v}{2}\right) \geq c ||u-v||^2, \quad\quad \forall u,v \in D_n.
$$
For example, density $\rho_G(z)=\frac{1}{\sqrt{(2\pi)^n}}\exp\left(-\frac{1}{2}||z||^2\right)$ of $n$-dimensional standard normal distribution is strongly log-concave with $c=\frac{1}{8}$.
\item has sub-Gaussian decay for the norm (SGDN), if there exists a constant $\epsilon>0$ such that
\begin{equation}\label{eq:sgdn}
{\mathbb E}_n\left[\exp\left(\epsilon ||\boldsymbol{x}||^2\right)\right] < +\infty.
\end{equation}
In particular, \eqref{eq:sgdn} holds for $\rho_G$ with any $\epsilon<\frac{1}{2}$.
However, unlike SLC, \eqref{eq:sgdn} is an asymptotic property, and is not affected by local modifications of the underlying density. For example, density $\rho(z)= \frac{1}{C}\exp(-g(||z||)), \, z \in {\mathbb R}^n$, where $g(t)=\frac{1}{2}\max\{1,t^2\}, \, t \in {\mathbb R}$ and $C=\int_{{\mathbb R}^n}\exp(-g(||z||))dz$ has SGDN with any $\epsilon<\frac{1}{2}$, but it is not strongly log-concave.
\item  has sub-Gaussian decay in every direction (SGDD), if there exists a constant $B>0$ such that inequality
\begin{equation}\label{eq:SGDD}
{\mathbf{ P}}_n[(\boldsymbol{x},\theta) \geq t] \leq 2 \exp \left( -\frac{t}{B}\right)^2
\end{equation}
holds for every $\theta \in S^{n-1}$ and $t > 0$.
\item  is $\psi_\alpha$ with constant $B_\alpha > 0$, $\alpha\in[1,2]$, if
\begin{equation}\label{eq:psialpha}
\left({\mathbb E}_n|(\boldsymbol{x},\theta)|^p\right)^{1/p} \leq B_\alpha p^{1/\alpha}\left({\mathbb E}_n|(\boldsymbol{x},\theta)|^2\right)^{1/2}
\end{equation}
holds for every $\theta \in S^{n-1}$ and all $p \geq \alpha$.
\end{itemize}

\begin{proposition}\label{prop:inclusions}
Let $\rho_n:{\mathbb R}^n \to [0,\infty)$ be an isotropic log-concave density, and let $\alpha\in[1,2]$. The following implications hold.
\begin{equation*}
\begin{split}
&\boxed{\rho_n \, \text{is SLC}}
\Rightarrow\boxed{\rho_n \, \text{has SGDN}}
\Rightarrow\boxed{\rho_n \, \text{has SGDD}}\Leftrightarrow \\
&\Leftrightarrow\boxed{\rho_n \, \text{is} \, \psi_2}
\Rightarrow\boxed{\rho_n \, \text{is} \, \psi_\alpha}
\Rightarrow\boxed{\rho_n \, \text{is} \, \psi_1}
\Leftrightarrow\boxed{\text{ALL}}\,\,,
\end{split}
\end{equation*}
where the last $\Leftrightarrow$ means the class of isotropic log-concave densities which are $\psi_1$ actually coincides with the class of \emph{all} isotropic log-concave densities.
\end{proposition}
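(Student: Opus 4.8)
The plan is to walk the chain one arrow at a time, from the ``strong'' end (SLC) towards the ``weak'' end (ALL). The two ends are cheap: the arrows $\psi_2\Rightarrow\psi_\alpha\Rightarrow\psi_1$ are pure monotonicity in the exponent $1/\alpha$, and $\psi_1\Leftrightarrow{}$ALL splits into a vacuous inclusion and one classical theorem. The substance lies in the three middle links SLC ${}\Rightarrow{}$ SGDN ${}\Rightarrow{}$ SGDD ${}\Leftrightarrow{}$ $\psi_2$, each of which is a standard but not entirely trivial computation, with isotropy used to normalise $\mathbb{E}_n|(\boldsymbol x,\theta)|^2=1$.

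For SLC ${}\Rightarrow{}$ SGDN I would start from $g=-\log\rho_n$ and observe that strong convexity with constant $c$ makes $h(z):=g(z)-c\|z\|^2$ still midpoint-convex; since $g$ is convex, hence continuous on the interior of its domain $D_n$, $h$ is convex, so $h(z)\ge h(z_0)+(\xi,z-z_0)$ for a subgradient $\xi$ at an interior point $z_0$. This gives a Gaussian-type pointwise majorant $\rho_n(z)\le\mathrm{const}\cdot\exp(-c\|z\|^2-(\xi,z))$, and $\int\exp(\epsilon\|z\|^2)\rho_n(z)\,dz$ converges for every $\epsilon<c$, which is SGDN. For SGDN ${}\Rightarrow{}$ SGDD, Markov applied to $\exp(\epsilon\|\boldsymbol x\|^2)$ yields $\mathbf P_n(\|\boldsymbol x\|\ge t)\le K e^{-\epsilon t^2}$ with $K=\mathbb{E}_n[\exp(\epsilon\|\boldsymbol x\|^2)]<\infty$, and since $(\boldsymbol x,\theta)\le\|\boldsymbol x\|$ for $\theta\in\mathbb S^{n-1}$ the same bound controls the one-sided tail of $(\boldsymbol x,\theta)$; absorbing $K$ into the exponent at the price of slightly shrinking $\epsilon$, and using $\mathbf P\le1$ for small $t$, one rewrites this in the required form $2\exp(-(t/B)^2)$ for a suitable $B$.

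The equivalence SGDD ${}\Leftrightarrow{}$ $\psi_2$ is the standard dictionary between sub-Gaussian tails and $\sqrt p$-growth of $L^p$-norms. For ``$\Rightarrow$'' I would apply the tail bound to $\pm\theta$ to get a two-sided estimate and integrate, $\mathbb{E}_n|(\boldsymbol x,\theta)|^p=\int_0^\infty p t^{p-1}\mathbf P_n(|(\boldsymbol x,\theta)|\ge t)\,dt\le 2pB^p\Gamma(p/2)$, then use Stirling to bound $\Gamma(p/2)^{1/p}$ by a constant times $\sqrt p$, giving $(\mathbb{E}_n|(\boldsymbol x,\theta)|^p)^{1/p}\le B_2\sqrt p$. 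For ``$\Leftarrow$'', Markov with the near-optimal choice of $p$ of order $(t/B_2)^2$ turns the moment bound into $\exp(-c t^2)$, with $\mathbf P\le1$ handling small $t$. Then $\psi_2\Rightarrow\psi_\alpha\Rightarrow\psi_1$ because $p^{1/\alpha}\ge p^{1/2}$ for $p\ge1$, $\alpha\le2$, so a $\psi_2$ bound is a fortiori $\psi_\alpha$ and $\psi_1$ on the range $p\ge2$, while on the leftover range the elementary $\|(\boldsymbol x,\theta)\|_p\le\|(\boldsymbol x,\theta)\|_2$ (valid for $p\le2$) makes the claim automatic once the constant is taken $\ge1$. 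Finally, for ALL ${}\Rightarrow{}$ $\psi_1$: every one-dimensional marginal $(\boldsymbol x,\theta)$ of a log-concave density is itself a log-concave random variable on $\mathbb R$ (Pr\'ekopa--Leindler), and the classical reverse-H\"older (Khinchine-type, Borell) inequality for one-dimensional log-concave measures gives $\|(\boldsymbol x,\theta)\|_p\le c\,p\,\|(\boldsymbol x,\theta)\|_2$ with a universal $c$; by isotropy this is exactly $\psi_1$ with $B_1=c$, and the converse $\psi_1\Rightarrow{}$ALL is trivial.

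I expect the only genuinely hard ingredient to be ALL ${}\Rightarrow{}$ $\psi_1$, which rests on the classical exponential concentration of one-dimensional marginals of log-concave measures rather than on anything proved here; the rest is bookkeeping. Within that bookkeeping the points needing some care are: extracting a true quadratic lower bound for $g$ from the midpoint form of strong convexity when the support $D_n$ is a proper convex subset of $\mathbb R^n$ (and making sure the subgradient argument is applied at an interior point); tracking the constants $\epsilon$, $B$, $B_2$, $B_\alpha$ explicitly enough to be reusable in the separation theorems that follow; and the routine conversions between one- and two-sided tails and between tail and moment formulations in the SGDD ${}\Leftrightarrow{}$ $\psi_2$ step.
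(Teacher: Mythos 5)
Your proof is correct as a proof of the proposition as literally stated, but it follows a genuinely different and more self-contained route than the paper. The paper's argument is essentially a chain of citations: strong log-concavity gives the logarithmic Sobolev inequality \eqref{eq:sobolev} via Bobkov--Ledoux, \eqref{eq:sobolev} is equivalent to SGDN by a theorem of Bobkov, \eqref{eq:sobolev} implies $\psi_2$ by Stavrakakis--Valettas, the equivalence of SGDD with $\psi_2$ and the monotonicity in $\alpha$ come from the tail characterisation \eqref{eq:SGDDgen} of $\psi_\alpha$ densities in Brazitikos et al., and the final equivalence with ALL is their reverse H\"older inequality for seminorms (the same Borell-type fact you invoke for one-dimensional marginals). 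You replace the middle of this chain by elementary arguments: a pointwise Gaussian majorant for SLC $\Rightarrow$ SGDN (your $h=g-c\|z\|^2$ is indeed midpoint convex, since the parallelogram identity costs only $c\|u-v\|^2/4$ of the available $c\|u-v\|^2$), Markov's inequality applied to $\exp(\epsilon\|\boldsymbol{x}\|^2)$ for SGDN $\Rightarrow$ SGDD, and the standard tail/moment dictionary for SGDD $\Leftrightarrow$ $\psi_2$. What your route loses is quantitative: for an isotropic measure in ${\mathbb R}^n$ one has ${\mathbb E}_n\|\boldsymbol{x}\|^2=n$, so the constant $K={\mathbb E}_n[\exp(\epsilon\|\boldsymbol{x}\|^2)]$ in your Markov step is at least of order $e^{\epsilon n}$, and the resulting $B$ in SGDD comes out of order $\sqrt{n}$ rather than an absolute constant. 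The paper's detour through the log-Sobolev inequality is precisely what keeps the constants dimension-free ($B_2=d/\sqrt{c_1}$ with $c_1=8c$), and this is what Corollary \ref{cor:strconc} and Theorem \ref{th:logconc} actually consume; your version of the chain proves the stated implications for each fixed $n$ but would not, as written, yield exponential Fisher separability for a family of strongly log-concave measures with a common constant $c$. So the constant-tracking you flag at the end as routine bookkeeping is the one place where your approach genuinely buys less than the paper's: on your route the constants fail to be dimension-free at the SGDN step.
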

\begin{proof}
Proposition 3.1 from \cite{Bobkov} states that if there exists $c_1>0$ such that
$g(\boldsymbol{x})=-\log(\rho_n(\boldsymbol{x}))$ satisfies
\begin{equation}\label{eq:allts}
t g(u)+s g(v) - g\left(t u+s v\right) \geq \frac{c_1ts}{2} ||u-v||^2, \;  \forall u,v \in D_n.
\end{equation}
for all $t,s>0$ such that $t+s=1$, then inequality
\begin{equation}\label{eq:sobolev}
{\mathbb E}_n[f^2(\boldsymbol{x})\log f^2(\boldsymbol{x})] - {\mathbb E}_n[f^2(\boldsymbol{x})]{\mathbb E}_n[\log f^2(\boldsymbol{x})]  \leq \frac{2}{c_1} {\mathbb E}_n[||\nabla f(\boldsymbol{x})||^2]
\end{equation}
holds for every smooth function $f$ on ${\mathbb R}^n$. As remarked in \cite[p. 1035]{Bobkov}, ``it is actually enough that \eqref{eq:allts} holds for some $t, s > 0, t+s = 1$''. With $t=s=1/2$, this implies that \eqref{eq:sobolev} holds for every strongly log-concave distribution, with $c_1=8c$. According to \cite[Theorem 3.1]{Bobkov2}, \eqref{eq:sobolev} holds for $\rho_n$ if and only if it has has sub-Gaussian decay for the norm, and the implication $\boxed{\rho_n \, \text{is SLC}}\Rightarrow\boxed{\rho_n \, \text{has SGDN}}$ follows.

According to \citet[Theorem 1(i)]{Stavrakakis}, if \eqref{eq:sobolev} holds for $\rho_n$, then it is $\psi_2$ with constant $B_2=d/\sqrt{c_1}$, where $d$ is a universal constant, hence $\boxed{\rho_n \, \text{has SGDN}}\Rightarrow \boxed{\rho_n \, \text{is} \, \psi_2}\,$.

Lemma 2.4.4 from \cite{Brazitikos} implies that if log-concave $\rho_n$ is $\psi_\alpha$ with constant $B_\alpha$ then inequality
\begin{equation}\label{eq:SGDDgen}
{\mathbf{ P}}_n[(\boldsymbol{x},\theta) \geq t ({\mathbb E}_n[(\boldsymbol{x},\theta)^2)])^{1/2}] \leq 2 \exp \left( -\frac{t}{B}\right)^\alpha
\end{equation}
holds for all $\theta \in S^{n-1}$ and all $t>0$, with constant $B = B_\alpha$. Conversely, if \eqref{eq:SGDDgen} holds for all $\theta \in S^{n-1}$ and all $t>0$, then $\rho_n$ is $\psi_\alpha$ with constant $B_\alpha=C B$, where $C$ is a universal constant.
For isotropic $\rho_n$, \eqref{eq:isot} implies that \eqref{eq:SGDDgen} with $\alpha=2$ simplifies to \eqref{eq:SGDD}, and the equivalence $\boxed{\rho_n \, \text{has SGDD}}\Leftrightarrow\boxed{\rho_n \, \text{is} \, \psi_2}$ follows.

The implications $\boxed{\rho_n \, \text{is} \, \psi_2} \Rightarrow\boxed{\rho_n \, \text{is} \, \psi_\alpha} \Rightarrow\boxed{\rho_n \, \text{is} \, \psi_1}$ follow from \eqref{eq:SGDDgen}.

Finally, according to \cite[Theorem 2.4.6]{Brazitikos}, inequalities
$$
\left({\mathbb E}_n|f|^q\right)^{1/q} \leq \left({\mathbb E}_n|f|^p\right)^{1/p} \leq B_1\frac{p}{q}\left({\mathbb E}_n|f|^q\right)^{1/q}
$$
hold for any seminorm $f:{\mathbb R}^n \to {\mathbb R}$, and any $p>q \geq 1$, where $B_1$ is a universal constant. Because $f(\boldsymbol{x})=|(\boldsymbol{x},\theta)|$ is a seminorm for every $\theta \in S^{n-1}$, this implies that
$$
\left({\mathbb E}_n|(\boldsymbol{x},\theta)|^p\right)^{1/p} \leq B_1 p {\mathbb E}_n|(\boldsymbol{x},\theta)| \leq  B_1 p\left({\mathbb E}_n|(\boldsymbol{x},\theta)|^2\right)^{1/2}
$$
which means that every log-concave density $\rho_n$ is $\psi_1$ with some universal constant.
 \end{proof}

\subsection{Fisher-separability for log-concave distributions}

Below we prove Fisher-separability for i.i.d samples from isotropic log-concave $\psi_\alpha$ distributions, using the  technique based on concentration inequalities.
\begin{theorem}\label{th:logconc}
Let $\alpha \in [1,2]$, and let
${\cal P}=\{{\mathbf{ P}}_n, \, n=1,2,\dots\}$ be a family of isotropic log-concave probability measures with densities $\rho_n:{\mathbb R}^n \to [0,\infty), \, n=1,2,\dots$,
which are $\psi_\alpha$ with constant $B_\alpha > 0$, independent from $n$.
Let $\{\boldsymbol{x}_1, \ldots , \boldsymbol{x}_M\}$ be a set of $M$ i.i.d. random points from $\rho_n$. Then there exist constants $a>0$ and $b>0$, which depends only on $\alpha$ and $B_\alpha$, such that, for any $i,j \in \{1,2,\dots,M\}$, inequality
$$
(\boldsymbol{ x_i},\boldsymbol{ x_i}) > (\boldsymbol{ x_i},\boldsymbol{ x_j})
$$
holds with probability at least $1-a\exp(-b n^{\alpha/2})$. Hence, for any $\delta>0$, set $\{\boldsymbol{x}_1, \ldots , \boldsymbol{x}_M\}$ is Fisher-separable with probability greater than $1-\delta$, provided that
\begin{equation}\label{eq:Mbound}
M \leq \sqrt{\frac{2\delta}{a}}\exp\left(\frac{b}{2}n^{\alpha/2}\right).
\end{equation}
\end{theorem}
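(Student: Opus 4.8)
The plan is to reduce the statement to a single ``bad pair'' estimate, bound the probability of a bad pair by combining the one-dimensional $\psi_\alpha$ tail bound with a lower bound on $\|\boldsymbol{x}_i\|$, and then take a union bound over pairs to obtain \eqref{eq:Mbound}.

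\emph{Reduction to a pairwise bound.} For a continuous density, $\boldsymbol{x}_i \neq 0$ for all $i$ almost surely, and if in addition $(\boldsymbol{x}_i,\boldsymbol{x}_i) > (\boldsymbol{x}_i,\boldsymbol{x}_j)$ holds for every ordered pair $i \neq j$, then $\alpha^{\ast} := \max_{i\neq j}(\boldsymbol{x}_i,\boldsymbol{x}_j)/(\boldsymbol{x}_i,\boldsymbol{x}_i) < 1$, so $\{\boldsymbol{x}_1,\dots,\boldsymbol{x}_M\}$ satisfies \eqref{discriminant} with threshold $\max\{\alpha^{\ast},0\} \in [0,1)$ and hence is Fisher-separable. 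It therefore suffices to show that, for a fixed pair $i \neq j$, $\mathbf{P}\big((\boldsymbol{x}_i,\boldsymbol{x}_i) \leq (\boldsymbol{x}_i,\boldsymbol{x}_j)\big) \leq a'\exp(-bn^{\alpha/2})$ with constants depending only on $\alpha$ and $B_\alpha$: a union bound over the $M(M-1)$ ordered pairs, all having the same failure probability by identical distribution, then yields Fisher-separability with probability at least $1 - M(M-1)a'\exp(-bn^{\alpha/2}) \geq 1-\delta$ whenever $M$ obeys \eqref{eq:Mbound} (after an immaterial adjustment of constants).

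\emph{The bad-pair probability.} Condition on $\boldsymbol{x}_i = x$ with $x \neq 0$; by independence the conditional law of $\boldsymbol{x}_j$ is still $\rho_n$. Writing $\theta = x/\|x\| \in \mathbb{S}^{n-1}$, the event $(\boldsymbol{x}_i,\boldsymbol{x}_i) \leq (\boldsymbol{x}_i,\boldsymbol{x}_j)$ is exactly $(\boldsymbol{x}_j,\theta) \geq \|x\|$. Since $\rho_n$ is isotropic, \eqref{eq:isot} gives $\mathbb{E}_n[(\boldsymbol{x}_j,\theta)^2] = 1$, and since $\rho_n$ is $\psi_\alpha$ with constant $B_\alpha$, the tail bound \eqref{eq:SGDDgen} (established in the proof of Proposition~\ref{prop:inclusions}) specialises to $\mathbf{P}_n[(\boldsymbol{x}_j,\theta) \geq t] \leq 2\exp(-(t/B_\alpha)^\alpha)$ for all $t>0$. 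Taking $t = \|x\|$ and integrating over $\boldsymbol{x}_i$ gives
\[
\mathbf{P}\big((\boldsymbol{x}_i,\boldsymbol{x}_i) \leq (\boldsymbol{x}_i,\boldsymbol{x}_j)\big) \;\leq\; 2\,\mathbb{E}\big[\exp(-(\|\boldsymbol{x}_i\|/B_\alpha)^\alpha)\big].
\]
Splitting this expectation over $\{\|\boldsymbol{x}_i\| \geq \lambda\sqrt n\}$ and its complement, for a small universal constant $\lambda > 0$, bounds the first contribution by $\exp(-(\lambda/B_\alpha)^\alpha n^{\alpha/2})$ and the second by $\mathbf{P}(\|\boldsymbol{x}_i\| < \lambda\sqrt n)$.

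\emph{Lower bound on the norm — the main obstacle.} What remains is to control $\mathbf{P}(\|\boldsymbol{x}_i\| < \lambda\sqrt n)$, with constants uniform in $n$ and in the family $\mathcal P$. I would use the classical fact that an isotropic log-concave density satisfies $\|\rho_n\|_\infty^{1/n} \leq C_0$ for a universal constant $C_0$ \cite{Lovasz,Brazitikos}, together with $V_n(\lambda\sqrt n\,\mathbb{B}_n) = (\lambda\sqrt n)^n V_n(\mathbb{B}_n)$ and $V_n(\lambda\sqrt n\,\mathbb{B}_n)^{1/n} \to \lambda\sqrt{2\pi e}$, to get
\[
\mathbf{P}(\|\boldsymbol{x}_i\| < \lambda\sqrt n) \;\leq\; \|\rho_n\|_\infty\, V_n(\lambda\sqrt n\,\mathbb{B}_n) \;\leq\; \big(C_0\lambda\sqrt{2\pi e} + o(1)\big)^n,
\]
which is at most $\exp(-c_0 n)$ for large $n$ as soon as $\lambda < (C_0\sqrt{2\pi e})^{-1}$; small $n$ is harmless, the target bound being vacuous there once $a'$ is enlarged. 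As $\alpha \leq 2$ we have $n \geq n^{\alpha/2}$, so both contributions are $O(\exp(-bn^{\alpha/2}))$ with $b = \min\{(\lambda/B_\alpha)^\alpha, c_0\}$ depending only on $\alpha$ and $B_\alpha$; combined with the reduction and the union bound, this gives \eqref{eq:Mbound}. (Alternatively one may bypass the density bound and invoke directly the known concentration of the Euclidean norm for $\psi_\alpha$ isotropic log-concave measures, $\mathbf{P}(|\|\boldsymbol{x}_i\| - \sqrt n| \geq t\sqrt n) \leq C\exp(-c(t/B_\alpha)^\alpha n^{\alpha/2})$, which produces the lower-tail estimate immediately.) I expect the genuinely delicate points to be exactly these: ensuring that every auxiliary constant is independent of $n$ and of the particular measure in $\mathcal P$, and bookkeeping them through the union bound so that the final $a$ and $b$ in \eqref{eq:Mbound} depend only on $\alpha$ and $B_\alpha$.
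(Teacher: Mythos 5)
Your reduction to a single ordered-pair estimate and the union bound are fine, and your pairwise step is a genuinely different (and arguably more direct) route than the paper's: you rewrite the bad event as a one-dimensional tail $(\boldsymbol{x}_j,\theta)\geq\|\boldsymbol{x}_i\|$ and apply the $\psi_\alpha$ marginal bound \eqref{eq:SGDDgen}, whereas the paper places both points in a thin spherical shell via the Gu\'edon--Milman estimate \cite{Guedon} and then observes that failure forces one point into a ball of radius $\sqrt{r_2^2(n)-r_1^2(n)}=\sqrt{4tn}$, which is excluded by a small-ball estimate of Paouris \cite{Paouris}. Both routes leave the same residual task: a lower-tail bound of the form $\mathbf{P}(\|\boldsymbol{x}_i\|<\lambda\sqrt{n})\leq a'\exp(-b\,n^{\alpha/2})$ with constants depending only on $\alpha$ and $B_\alpha$.

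It is exactly there that your main argument has a genuine gap. The inequality $\|\rho_n\|_\infty^{1/n}\leq C_0$ with a universal constant $C_0$, uniformly over all isotropic log-concave densities, is not a classical fact and is not contained in \cite{Lovasz} or \cite{Brazitikos}: for an isotropic density the quantity $\|\rho_n\|_\infty^{1/n}$ is precisely the isotropic constant $L_{\rho_n}$, and its uniform boundedness is the hyperplane (slicing) conjecture, which was open when this paper was written. The bounds actually available in the cited literature are of order $n^{1/4}$, and with $\|\rho_n\|_\infty^{1/n}\leq C n^{1/4}$ your estimate becomes $\mathbf{P}(\|\boldsymbol{x}_i\|<\lambda\sqrt n)\leq \left(C\lambda\sqrt{2\pi e}\,n^{1/4}+o(1)\right)^n$, which diverges rather than decays; so the main route does not close. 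Your parenthetical fallback is the correct repair: the small-ball/lower-tail estimate for the Euclidean norm of an isotropic log-concave $\psi_\alpha$ vector, $\mathbf{P}(\|\boldsymbol{x}_i\|\leq\epsilon\sqrt n)\leq\epsilon^{Cn^{\alpha/2}}$ for $\epsilon<\epsilon_0$, is a theorem of Paouris \cite{Paouris} that does not rely on the slicing problem, and it is precisely the ingredient the paper invokes. With that substitution, and your bookkeeping of constants through the union bound, your proof is correct.
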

\begin{proof}
Let $\boldsymbol{x}$ and $\boldsymbol{y}$ be two points, selected independently at random from the distribution with density $\rho_n$.

\cite[Theorem 1.1]{Guedon}, (applied with $A=I_n$, where $I_n$ is $n \times n$ identity matrix), states that for any $t\in(0,1)$,
\eqref{eq:thinshell} holds with $r_1(n)=(1-t)\sqrt{n}$, $r_2(n)=(1+t)\sqrt{n}$, and with probability at least $1-A\exp(-Bt^{2+\alpha}n^{\alpha/2})$, where $A,B>0$ are constants depending only on $\alpha$. If \eqref{eq:thinshell} holds for $\boldsymbol{x}$ and $\boldsymbol{y}$, inequality $(\boldsymbol{x},\boldsymbol{y})\leq(\boldsymbol{x},\boldsymbol{x})$ may fail only if $\boldsymbol{y}$ belongs to a ball with radius $R_n=\sqrt{r^2_2(n)-r^2_1(n)}=\sqrt{4tn}$.

\cite[Theorem 6.2]{Paouris}, applied with $A=I_n$,
states that, for any $\epsilon\in(0,\epsilon_0)$, $\boldsymbol{y}$ does \emph{not} belong to a ball with any center and radius $\epsilon\sqrt{n}$, with probability at least $1-\epsilon^{Cn^{\alpha/2}}$ for some constants $\epsilon_0>0$ and $C>0$. By selecting $t=\epsilon_0^2/8$, and $\epsilon=\sqrt{4t}=\epsilon_0/2$, we conclude that (\ref{discriminant}) holds with probability at least $1-2A\exp(-Bt^{2+\alpha}n^{\alpha/2})-(\sqrt{4t})^{Cn^{\alpha/2}}$. This is greater than $1-a\exp(-b n^{\alpha/2})$ for some constants $a>0$ and $b>0$. Hence, $\boldsymbol{ x}_1, \boldsymbol{ x}_2, \dots,\boldsymbol{x}_M$ are Fisher-separable with probability greater than $1-\frac{M(M-1)}{2}a\exp(-b n^{\alpha/2})$. This is greater than $1-\delta$ provided that $M$ satisfies \eqref{eq:Mbound}.
\end{proof}

\begin{corollary}\label{cor:sqrtn}
Let $\{\boldsymbol{x}_1, \ldots , \boldsymbol{x}_M\}$ be a set of $M$ i.i.d. random points from an isotropic log-concave distribution in ${\mathbb R}^n$. Then set $\{\boldsymbol{x}_1, \ldots , \boldsymbol{x}_M\}$ is Fisher-separable with probability greater than $1-\delta$, $\delta>0$, provided that
$$
M \leq a c^{\sqrt{n}},
$$
where $a>0$ and $c>1$ are constants, depending only on $\delta$.
\end{corollary}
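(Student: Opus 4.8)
The plan is to derive the corollary directly from Theorem~\ref{th:logconc} by specialising to $\alpha = 1$ and observing that the hypotheses of that theorem are automatically met by \emph{every} isotropic log-concave family, with constants that do not depend on $n$.

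First I would invoke the bottom line of Proposition~\ref{prop:inclusions}: the class of isotropic log-concave densities that are $\psi_1$ coincides with the class of \emph{all} isotropic log-concave densities, and the relevant $\psi_1$ constant $B_1$ is a \emph{universal} constant (it originates from Theorem~2.4.6 of \cite{Brazitikos} and depends neither on the density nor on $n$). Consequently, for any family ${\cal P}=\{{\mathbf P}_n,\, n=1,2,\dots\}$ of isotropic log-concave probability measures, every $\rho_n$ is $\psi_\alpha$ with $\alpha = 1$ and constant $B_\alpha = B_1$ independent of $n$, so the hypotheses of Theorem~\ref{th:logconc} are satisfied with $\alpha = 1$.

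Next I would apply Theorem~\ref{th:logconc} with $\alpha = 1$. It produces constants $a_0 > 0$ and $b_0 > 0$ that depend only on $\alpha = 1$ and $B_1$ — hence \emph{absolute} constants — such that $\{\boldsymbol{x}_1,\dots,\boldsymbol{x}_M\}$ is Fisher-separable with probability greater than $1-\delta$ whenever
$$
M \leq \sqrt{\frac{2\delta}{a_0}}\,\exp\!\left(\frac{b_0}{2}\sqrt{n}\right).
$$
Setting $c = \exp(b_0/2) > 1$ (an absolute constant, and in particular one depending only on $\delta$) and $a = \sqrt{2\delta/a_0}$ (which depends only on $\delta$), the above is precisely the bound $M \leq a c^{\sqrt{n}}$ asserted in the corollary, completing the argument.

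I expect no genuine obstacle here: the substantive work has already been carried out in Proposition~\ref{prop:inclusions} and Theorem~\ref{th:logconc}, and the corollary is a matter of bookkeeping. The only point requiring care is to confirm that the $\psi_1$ constant supplied by Proposition~\ref{prop:inclusions} is truly universal, so that the constants $a_0, b_0$ emitted by Theorem~\ref{th:logconc} carry no hidden dependence on $n$ (and thus $c$ can be taken uniform); this is immediate from the statement of Theorem~2.4.6 in \cite{Brazitikos}, where $B_1$ is explicitly a universal constant.
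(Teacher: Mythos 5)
Your proposal is correct and is essentially identical to the paper's own proof, which likewise invokes Theorem~\ref{th:logconc} with $\alpha=1$ together with the fact from Proposition~\ref{prop:inclusions} that every isotropic log-concave density is $\psi_1$ with a universal constant. Your additional bookkeeping (identifying $c=\exp(b_0/2)$ and $a=\sqrt{2\delta/a_0}$) just makes explicit what the paper leaves implicit.
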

\begin{proof}
This follows from Theorem \ref{th:logconc} with $\alpha=1$ and the fact that all log-concave densities are $\psi_1$ with some universal constant, see Proposition \ref{prop:inclusions}.
\end{proof}

We say that family ${\cal P}=\{{\mathbf{ P}}_n, \, n=1,2,\dots\}$ of probability measure has \emph{exponential Fisher separability} if there exist constants $a>0$ and $b\in(0,1)$ such that, for all $n$, inequality \eqref{discriminant} holds with probability at least $1-ab^n$, where $\boldsymbol{ x}$ and $\boldsymbol{ y}$ are i.i.d vectors in ${\mathbb R}^n$ selected with respect to ${\mathbf{ P}}_n$. In this case, for any $\delta>0$, $M$ i.i.d vectors $\{\boldsymbol{x}_1, \ldots , \boldsymbol{x}_M\}$ are Fisher-separable with probability at least $1-\delta$ provided that
$$
M \leq \sqrt{\frac{2\delta}{a}}\left(\frac{1}{\sqrt{b}}\right)^n.
$$

\begin{corollary}\label{cor:psi2}
Let ${\cal P}=\{{\mathbf{ P}}_n, \, n=1,2,\dots\}$ be a family of isotropic log-concave probability measures which are all $\psi_2$ with the same constant $B_2>0$.
Then ${\cal P}$ has exponential Fisher separability.
\end{corollary}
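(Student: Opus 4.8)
The plan is to derive Corollary~\ref{cor:psi2} directly from Theorem~\ref{th:logconc}, applied at the exponent $\alpha=2$. First I would observe that the hypothesis of the corollary --- a family $\{\mathbf{P}_n\}$ of isotropic log-concave probability measures that are all $\psi_2$ with one and the same constant $B_2>0$ --- is exactly the hypothesis of Theorem~\ref{th:logconc} with $\alpha=2$ and $B_\alpha=B_2$, since ``$\psi_2$ with the same constant $B_2$'' is merely a restatement of ``$\psi_\alpha$ with constant $B_\alpha>0$ independent of $n$''. Theorem~\ref{th:logconc} then supplies constants $a>0$ and $b>0$, depending only on $B_2$, such that for i.i.d.\ vectors $\boldsymbol{x},\boldsymbol{y}$ drawn from $\mathbf{P}_n$ the discriminant inequality \eqref{discriminant} holds with probability at least $1-a\exp(-b\,n^{\alpha/2})$.

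The only remaining step is arithmetic: for $\alpha=2$ we have $n^{\alpha/2}=n$, hence $\exp(-b\,n^{\alpha/2})=(e^{-b})^{n}$. Setting $\tilde b:=e^{-b}$ and noting $\tilde b\in(0,1)$ because $b>0$, the bound becomes $1-a\tilde b^{\,n}$ with $a>0$ and $\tilde b\in(0,1)$ both fixed (independent of $n$), which is precisely the definition of \emph{exponential Fisher separability}. The accompanying sample-size estimate $M\le\sqrt{2\delta/a}\,(1/\sqrt{\tilde b})^{\,n}$ then needs no further work: it is the union-bound consequence recorded immediately before the statement, with $b$ there replaced by $\tilde b$.

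I do not anticipate any real difficulty; the corollary is a bookkeeping specialisation. The one point worth highlighting is \emph{why} the $\psi_2$ assumption is essential and $\psi_1$ (which by Proposition~\ref{prop:inclusions} holds for every isotropic log-concave density) will not do. For $\alpha<2$, Theorem~\ref{th:logconc} only yields the stretched-exponential bound $\exp(-b\,n^{\alpha/2})$, which decays more slowly than any geometric progression $\tilde b^{\,n}$; correspondingly, under $\psi_1$ alone one gets only the weaker threshold $M\le a\,c^{\sqrt n}$ of Corollary~\ref{cor:sqrtn}. Thus the substance of Corollary~\ref{cor:psi2} is the observation that the borderline exponent $\alpha=2$ is exactly the one that upgrades the tail decay --- and hence the admissible sample size --- from sub-exponential ($c^{\sqrt n}$) to genuinely exponential ($b^{\,n}$).
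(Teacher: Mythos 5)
Your proposal is correct and follows exactly the paper's own route: the published proof is the one-line observation ``this follows from Theorem~\ref{th:logconc} with $\alpha=2$'', and your rewriting of $\exp(-bn^{\alpha/2})=(e^{-b})^n$ is precisely the bookkeeping that makes this specialisation match the definition of exponential Fisher separability. The additional remarks on why $\alpha=2$ is the borderline exponent are accurate but not part of the paper's argument.
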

\begin{proof}
This follows from Theorem \ref{th:logconc} with $\alpha=2$ .
\end{proof}

\begin{corollary}\label{cor:strconc}
Let ${\cal P}=\{{\mathbf{ P}}_n, \, n=1,2,\dots\}$ be a family of isotropic probability measures
which are all strongly log-concave with the same constant $c>0$. Then ${\cal P}$ has exponential Fisher separability.
\end{corollary}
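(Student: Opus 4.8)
The plan is to deduce Corollary \ref{cor:strconc} from Corollary \ref{cor:psi2} by showing that a family of isotropic strongly log-concave densities sharing a common constant $c>0$ is automatically a family of isotropic log-concave $\psi_2$ densities sharing a common constant $B_2>0$. Every implication needed here has already been assembled inside the proof of Proposition \ref{prop:inclusions}; the only thing requiring care is bookkeeping of constants, namely checking that no constant produced along the way secretly depends on the dimension $n$.

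First I would unwind the definition: strong log-concavity of $\rho_n$ with constant $c$ says that $g_n(z)=-\log\rho_n(z)$ satisfies the midpoint strong-convexity inequality with constant $c$. By the remark of Bobkov quoted in the proof of Proposition \ref{prop:inclusions} — that it is enough to verify \eqref{eq:allts} for the single choice $t=s=1/2$ — this yields the logarithmic Sobolev inequality \eqref{eq:sobolev} for $\rho_n$ with $c_1=8c$. Since $c$ is common to the whole family ${\cal P}$, so is $c_1$.

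Next I would invoke \citet[Theorem 1(i)]{Stavrakakis}: whenever \eqref{eq:sobolev} holds for a density, that density is $\psi_2$ with constant $B_2=d/\sqrt{c_1}=d/\sqrt{8c}$, where $d$ is a universal constant. As $c_1=8c$ is independent of $n$, the constant $B_2$ is the same for every member of ${\cal P}$. Moreover each $\rho_n$ is isotropic by hypothesis and log-concave (strong convexity of $g_n$ implies convexity of $g_n$ on the convex set $D_n$), so ${\cal P}$ is precisely a family of isotropic log-concave $\psi_2$ measures with a common constant $B_2>0$. Applying Corollary \ref{cor:psi2} to this family then gives exponential Fisher separability, which is the assertion.

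I do not expect a genuine obstacle here: the whole argument is a concatenation of cited results, and the one substantive point — that $c_1$, $d$, and hence $B_2$ are dimension-free — is immediate from the statements of Bobkov's and Stavrakakis's theorems, since the strong-convexity constant $c$ is fixed across the family by hypothesis. Thus the proof reduces to a single line: strong log-concavity with uniform constant $\Rightarrow$ $\psi_2$ with uniform constant $\Rightarrow$ Corollary \ref{cor:psi2}.
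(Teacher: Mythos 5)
Your proposal is correct and follows essentially the same route as the paper's own proof: deduce from the argument of Proposition \ref{prop:inclusions} that the family is uniformly $\psi_2$ (with $B_2=d/\sqrt{8c}$, which matches the paper's $d/\sqrt{c}$ up to renaming the universal constant) and then invoke Corollary \ref{cor:psi2}. Your extra bookkeeping of the dimension-independence of the constants is the right point to check and is handled correctly.
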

\begin{proof}
The proof of Proposition \ref{prop:inclusions} implies that ${\mathbf{ P}}_n$ are all $\psi_2$ with the same constant  $B_2=d/\sqrt{c}$, where $d$ is a universal constant. The statement then follows from Corollary \ref{cor:psi2}.
\end{proof}

\begin{example}\label{cor:standnorm}
Because standard normal distribution in $\mathbb R_n$ is strongly log-concave with $c=\frac{1}{8}$, Corollary \ref{cor:strconc} implies that the family of standard normal distributions has exponential Fisher separability.
\end{example}

\subsection{Domination}

We say that family ${\cal P}'=\{{\mathbf{ P}}'_n, \, n=1,2,\dots\}$ dominates family ${\cal P}=\{{\mathbf{ P}}_n, \, n=1,2,\dots\}$ if there exists a constant $C$ such that
\begin{equation}\label{eq:domin}
{\mathbf{ P}}_n (S) \leq C \cdot {\mathbf{ P}}'_n (S)
\end{equation}
holds for all $n$ and all measurable subsets $S \subset {\mathbb R}^n$. In particular, if ${\mathbf{ P}}'_n$ and ${\mathbf{ P}}_n$ have densities $\rho'_n:{\mathbb R}^n \to [0,\infty)$ and $\rho_n:{\mathbb R}^n \to [0,\infty)$, respectively, then \eqref{eq:domin} is equivalent to
\begin{equation}\label{eq:domindens}
\rho_n (\boldsymbol{ x}) \leq C \cdot \rho'_n (\boldsymbol{ x}), \quad \forall \boldsymbol{ x} \in {\mathbb R}^n.
\end{equation}

\begin{theorem}\label{prop:domin}
If family ${\cal P}'$ has exponential Fisher separability, and ${\cal P}'$ dominates ${\cal P}$, then ${\cal P}$ has exponential Fisher separability.
\end{theorem}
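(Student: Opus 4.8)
The plan is to transfer the two‑point failure probability from $\mathcal{P}'$ to $\mathcal{P}$ through the domination inequality, after first upgrading domination from the marginals to the product measures. Unwinding the definitions: exponential Fisher separability of $\mathcal{P}'$ provides constants $a'>0$ and $b'\in(0,1)$ such that, writing $T_n\subseteq{\mathbb R}^n\times{\mathbb R}^n$ for the (Borel) set of pairs $(\boldsymbol{x},\boldsymbol{y})$ for which inequality \eqref{discriminant} fails, one has $(\mathbf{P}'_n\times\mathbf{P}'_n)(T_n)\le a'(b')^n$ for every $n$; and domination provides a constant $C$ with $\mathbf{P}_n(S)\le C\,\mathbf{P}'_n(S)$ for all $n$ and all measurable $S\subseteq{\mathbb R}^n$. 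What we must produce is a bound $(\mathbf{P}_n\times\mathbf{P}_n)(T_n)\le a\,b^n$ with $a>0$ and $b\in(0,1)$ independent of $n$; the $M$‑point conclusion then follows from the union bound already recorded after the definition of exponential Fisher separability.

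The single auxiliary fact needed is that $\mathbf{P}_n\times\mathbf{P}_n$ is dominated by $\mathbf{P}'_n\times\mathbf{P}'_n$ with constant $C^2$. When the measures have densities this is immediate from \eqref{eq:domindens}, since then $\rho_n(\boldsymbol{x})\rho_n(\boldsymbol{y})\le C^2\rho'_n(\boldsymbol{x})\rho'_n(\boldsymbol{y})$ pointwise. In general I would argue as follows. First, $\mathbf{P}_n(S)\le C\,\mathbf{P}'_n(S)$ for all measurable $S$ upgrades, by approximating a nonnegative measurable $g$ from below by simple functions and applying monotone convergence, to $\int g\,d\mathbf{P}_n\le C\int g\,d\mathbf{P}'_n$. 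Then, for measurable $T\subseteq{\mathbb R}^n\times{\mathbb R}^n$ with $\boldsymbol{x}$‑sections $T_{\boldsymbol{x}}=\{\boldsymbol{y}\in{\mathbb R}^n:(\boldsymbol{x},\boldsymbol{y})\in T\}$, Tonelli's theorem gives
\[
(\mathbf{P}_n\times\mathbf{P}_n)(T)=\int\mathbf{P}_n(T_{\boldsymbol{x}})\,d\mathbf{P}_n(\boldsymbol{x})\le C\int\mathbf{P}'_n(T_{\boldsymbol{x}})\,d\mathbf{P}_n(\boldsymbol{x})\le C^2\int\mathbf{P}'_n(T_{\boldsymbol{x}})\,d\mathbf{P}'_n(\boldsymbol{x})=C^2\,(\mathbf{P}'_n\times\mathbf{P}'_n)(T),
\]
where the first inequality applies domination to the fixed section $T_{\boldsymbol{x}}$ and the second applies the upgraded integral inequality to the nonnegative measurable function $\boldsymbol{x}\mapsto\mathbf{P}'_n(T_{\boldsymbol{x}})$.

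Applying this with $T=T_n$ yields $(\mathbf{P}_n\times\mathbf{P}_n)(T_n)\le C^2(\mathbf{P}'_n\times\mathbf{P}'_n)(T_n)\le C^2a'(b')^n$, so inequality \eqref{discriminant} holds for i.i.d.\ $\boldsymbol{x},\boldsymbol{y}$ drawn from $\mathbf{P}_n$ with probability at least $1-(C^2a')(b')^n$. Setting $a=C^2a'$ and $b=b'\in(0,1)$ shows that $\mathcal{P}$ has exponential Fisher separability, completing the proof. I do not expect any real obstacle: the only step requiring care is the measure‑theoretic passage from set/pointwise domination of the marginals to domination of the product measure (simple‑function approximation plus Tonelli), and everything else is bookkeeping with the two constants; in particular, if one is content to state the result for distributions with densities, the product bound is a one‑line pointwise estimate.
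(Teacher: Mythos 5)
Your proof is correct and follows essentially the same route as the paper: pass to the product measure, show it is dominated with constant $C^2$, and apply this to the set of pairs where inequality (\ref{discriminant}) fails. The only difference is that you supply the measure-theoretic justification (simple-function approximation plus Tonelli) for the product-domination step, which the paper asserts without proof.
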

\begin{proof}
For every $\boldsymbol{ x}=(x_1, \dots, x_n) \in {\mathbb R}^{n}$ and $\boldsymbol{ y}= (y_1, \dots, y_n)\in {\mathbb R}^{n}$, let $\boldsymbol{ x} \times \boldsymbol{ y}$ be a point in ${\mathbb R}^{2n}$ with coordinates $(x_1, \dots, x_n, y_1, \dots, y_n)$.
Let ${\mathbf{ Q}}_n$ be the product measure of ${\mathbf{ P}}_n$ with itself, that is, for every measurable set $S \subset {\mathbb R}^{2n}$, ${\mathbf{ Q}}_n (S)$ denotes the probability that $\boldsymbol{ x} \times \boldsymbol{ y}$ belongs to $S$, where vectors $\boldsymbol{ x}$ and $\boldsymbol{ y}$ are i.i.d vectors selected with respect to ${\mathbf{ P}}_n$.
Similarly, let ${\mathbf{ Q}}'_n$ be the product measure of ${\mathbf{ P}}'_n$ with itself. Inequality \eqref{eq:domin} implies that
$$
{\mathbf{ Q}}_n (S) \leq C^2 \cdot {\mathbf{ Q}}'_n (S), \quad \forall S \subset {\mathbb R}^{2n}.
$$
Let $A_n \subset {\mathbb R}^{2n}$ be the set of all $\boldsymbol{ x} \times \boldsymbol{ y}$ such that $(\boldsymbol{ x},\boldsymbol{ x}) \leq (\boldsymbol{ x},\boldsymbol{ y})$. Because ${\cal P}'$ has exponential Fisher separability, ${\mathbf{ Q}}'_n (A_n) \leq ab^n$ for some $a>0$, $b\in(0,1)$. Hence,
$$
{\mathbf{ Q}}_n (A_n) \leq C^2 \cdot {\mathbf{ Q}}'_n (A_n) \leq (aC^2)b^n,
$$
and exponential Fisher separability of ${\cal P}$ follows.
\end{proof}

\begin{corollary}\label{cor:domnorm}
Let ${\cal P}=\{{\mathbf{ P}}_n, \, n=1,2,\dots\}$ be a family of distributions which is dominated by a family of (possibly scaled) standard normal distributions. Then ${\cal P}$ has exponential Fisher separability.
\end{corollary}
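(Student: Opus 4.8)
The plan is to combine two results already established: Example \ref{cor:standnorm}, which gives exponential Fisher separability for the family of standard normal distributions, and Theorem \ref{prop:domin}, the domination theorem. The only point needing care is that the dominating family here consists of \emph{scaled} standard normals, with a scale factor $\lambda_n$ that may depend on $n$; if $\lambda_n \neq 1$ such a measure is not isotropic, and if $\lambda_n$ is unbounded it is not even strongly log-concave with a common constant, so Corollary \ref{cor:strconc} cannot be applied to it directly. Instead I would exploit the scale-invariance of inequality \eqref{discriminant}.

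First I would observe that \eqref{discriminant} is invariant under the simultaneous rescaling $\boldsymbol{x}\mapsto\lambda\boldsymbol{x}$, $\boldsymbol{y}\mapsto\lambda\boldsymbol{y}$ for any $\lambda>0$, since both sides are homogeneous of degree two: $(\lambda\boldsymbol{x},\lambda\boldsymbol{y})=\lambda^2(\boldsymbol{x},\boldsymbol{y})$ and $(\lambda\boldsymbol{x},\lambda\boldsymbol{x})=\lambda^2(\boldsymbol{x},\boldsymbol{x})$. Consequently, if $\boldsymbol{x}',\boldsymbol{y}'$ are i.i.d.\ from the standard normal distribution in ${\mathbb R}^n$ and we put $\boldsymbol{x}=\lambda_n\boldsymbol{x}'$, $\boldsymbol{y}=\lambda_n\boldsymbol{y}'$ (so that $\boldsymbol{x},\boldsymbol{y}$ are i.i.d.\ from the $\lambda_n$-scaled normal), then \eqref{discriminant} holds for the pair $(\boldsymbol{x},\boldsymbol{y})$ with exactly the probability with which it holds for $(\boldsymbol{x}',\boldsymbol{y}')$. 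By Example \ref{cor:standnorm} the latter probability is at least $1-ab^n$ for constants $a>0$, $b\in(0,1)$ that are independent of $n$ and, being unaffected by the rescaling, independent of the scale factors $\lambda_n$. Hence any family of (possibly $n$-dependently) scaled standard normal distributions has exponential Fisher separability.

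It then remains to invoke Theorem \ref{prop:domin} with ${\cal P}'$ taken to be the dominating family of scaled normals: ${\cal P}'$ has exponential Fisher separability by the previous step, and ${\cal P}'$ dominates ${\cal P}$ by hypothesis, so ${\cal P}$ has exponential Fisher separability. I do not expect a genuine obstacle here; the argument is a one-line homogeneity observation followed by a direct citation of the domination theorem, and the only subtlety worth flagging is the uniformity of the constants $a,b$ over the scale factors, which holds simply because the rescaling does not change the relevant probabilities at all.
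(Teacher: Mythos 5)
Your proposal is correct and follows exactly the paper's own route: the paper's proof is the one-line citation of Example \ref{cor:standnorm}, Theorem \ref{prop:domin}, and the fact that scaling does not change Fisher separability, which is precisely the homogeneity observation you spell out. Your only addition is making the degree-two homogeneity of inequality (\ref{discriminant}) and the resulting uniformity of the constants over the scale factors explicit, which is a welcome but not substantively different elaboration.
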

\begin{proof}
This follows from Example \ref{cor:standnorm}, Theorem \ref{prop:domin}, and the fact that scaling does not change Fisher separability.
\end{proof}

\section{Quasiorthogonal sets and Fisher separability of not i.i.d. data}\label{Sec:QOFish}

The technique based on concentration inequalities usually fails if the data are not identically distributed, because, in this case, each $\boldsymbol{x}_i$ may be concentrated in its \emph{own} spherical shell. An alternative approach to prove separation theorems is to use the fact that, in high dimension, almost all vectors are almost orthogonal \cite{bases}, which implies that $(\boldsymbol{x},\boldsymbol{y})$ in (\ref{discriminant}) is typically ``small''. Below we apply this idea to prove Fisher separability of exponentially large families in the {``randomly perturbed''} model described in Example \ref{ex:noisy}.

Consider the {``randomly perturbed''} model from  Example \ref{ex:noisy}. In this model,  Fisher's hyperplane for separation each point $\boldsymbol{x}_i$ will be calculated assuming that coordinate center is the corresponding cluster centre $\boldsymbol{y}_i$.
\begin{theorem}\label{th:noisy}
Let $\{\boldsymbol{x}_1, \ldots , \boldsymbol{x}_M\}$ be a set of $M$ random points in the {``randomly perturbed''} model (see Example \ref{ex:noisy}) with {random perturbation} parameter $\epsilon>0$.
For any $\frac{1}{\sqrt{n}} < \delta < 1$, set $\{\boldsymbol{x}_1, \ldots , \boldsymbol{x}_M\}$ is Fisher-separable with probability at least
$$
1 - \frac{2M^2}{\delta\sqrt{n}}\left(\sqrt{1-\delta^2}\right)^{n+1}-M\left(\frac{2\delta}{\epsilon}\right)^n.
$$
In particular, set $\{\boldsymbol{x}_1, \ldots , \boldsymbol{x}_M\}$ is Fisher-separable with probability at least $1-v$, $v>0$, provided that $M<a b^n$, where $a,b$ are constants depending only on $v$ and $\epsilon$.
\end{theorem}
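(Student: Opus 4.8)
The plan is to verify Fisher's inequality (\ref{discriminant}) \emph{directly} for every ordered pair of data points, after translating the origin to the relevant cluster centre, and then to control the (few) ways this inequality can fail by a union bound over a handful of elementary ``bad'' events. Write $\boldsymbol{x}_i=\boldsymbol{y}_i+\epsilon\boldsymbol{u}_i$, where $\boldsymbol{u}_1,\dots,\boldsymbol{u}_M$ are i.i.d.\ uniform in $\mathbb{B}_n$. Since the discriminant for $\boldsymbol{x}_i$ is built with centre $\boldsymbol{y}_i$, the point $\boldsymbol{x}_i$ is Fisher-separated from $\boldsymbol{x}_j$ with threshold $\alpha$ exactly when $(\boldsymbol{x}_i-\boldsymbol{y}_i,\boldsymbol{x}_j-\boldsymbol{y}_i)\le\alpha(\boldsymbol{x}_i-\boldsymbol{y}_i,\boldsymbol{x}_i-\boldsymbol{y}_i)$, i.e., substituting $\boldsymbol{x}_i-\boldsymbol{y}_i=\epsilon\boldsymbol{u}_i$, $\boldsymbol{x}_j-\boldsymbol{y}_i=(\boldsymbol{y}_j-\boldsymbol{y}_i)+\epsilon\boldsymbol{u}_j$ and dividing by $\epsilon$,
\[
(\boldsymbol{u}_i,\boldsymbol{y}_j-\boldsymbol{y}_i)+\epsilon(\boldsymbol{u}_i,\boldsymbol{u}_j)\ \le\ \alpha\,\epsilon\,\|\boldsymbol{u}_i\|^2 .\qquad(\star)
\]
So it suffices to fix a threshold $\alpha\in(0,1)$ and show that, with the claimed probability, $(\star)$ holds for all ordered pairs $i\neq j$.

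I would then record two elementary facts about a uniform point $\boldsymbol{u}$ in $\mathbb{B}_n$. First, anti-concentration near the centre: $\mathbf{P}(\|\boldsymbol{u}\|\le t)=t^{n}$ for $0<t\le 1$. Second, quasi-orthogonality: for any fixed unit vector $\boldsymbol{e}$, the cap $\{\boldsymbol{z}\in\mathbb{B}_n:(\boldsymbol{z},\boldsymbol{e})>\delta\}$ is contained in a ball of radius $\sqrt{1-\delta^2}$, and evaluating its $(n-1)$-dimensional cross-sections together with the ratio $V_{n-1}(\mathbb{B}_{n-1})/V_n(\mathbb{B}_n)\sim\sqrt{n/(2\pi)}$ yields $\mathbf{P}((\boldsymbol{u},\boldsymbol{e})>\delta)\le(\sqrt{1-\delta^2})^{\,n+1}/(\delta\sqrt{n})$. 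Both apply here because $\boldsymbol{u}_i$ is uniform in $\mathbb{B}_n$ and independent of $\{\boldsymbol{u}_j:j\neq i\}$: conditioning on $\boldsymbol{u}_j$ fixes the directions $\widehat{\boldsymbol{y}_j-\boldsymbol{y}_i}$ and $\widehat{\boldsymbol{u}_j}$, so the second fact bounds $\mathbf{P}((\boldsymbol{u}_i,\widehat{\boldsymbol{y}_j-\boldsymbol{y}_i})>\delta)$ and $\mathbf{P}((\boldsymbol{u}_i,\widehat{\boldsymbol{u}_j})>\delta)$, while the first controls $\|\boldsymbol{u}_i\|$.

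Next comes the union bound and the deterministic core. Take as bad events $\{\|\boldsymbol{u}_i\|\le 2\delta/\epsilon\}$ for each $i$ (total probability $\le M(2\delta/\epsilon)^{n}$) and the two quasi-orthogonality failures for every ordered pair $i\neq j$ (total probability $\le 2M(M-1)\,(\sqrt{1-\delta^2})^{n+1}/(\delta\sqrt n)\le \tfrac{2M^2}{\delta\sqrt n}(\sqrt{1-\delta^2})^{n+1}$). Outside the union of all these events, for every $i\neq j$ we have $(\boldsymbol{u}_i,\boldsymbol{y}_j-\boldsymbol{y}_i)\le\delta\|\boldsymbol{y}_j-\boldsymbol{y}_i\|\le 2(1-\epsilon)\delta$ and $\epsilon(\boldsymbol{u}_i,\boldsymbol{u}_j)\le\epsilon\delta\|\boldsymbol{u}_j\|\le\epsilon\delta$, so the left side of $(\star)$ is at most $(2-\epsilon)\delta$, while its right side is at least $\alpha\epsilon(2\delta/\epsilon)^2=4\alpha\delta^2/\epsilon$; choosing the threshold $\alpha$ close to $1$ (the value $\alpha=\epsilon(2-\epsilon)/(4\delta)$ works in the relevant range of $\delta$) makes $(\star)$ hold. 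Hence, off a set of probability $\le\tfrac{2M^2}{\delta\sqrt n}(\sqrt{1-\delta^2})^{n+1}+M(2\delta/\epsilon)^{n}$, the whole family is Fisher-separable, which is the first assertion. For the ``in particular'' part, fix a constant $\delta=\delta(\epsilon)\in(0,1)$ small enough that $2\delta/\epsilon<1$; then both error terms decay geometrically, $b:=\min\{1/\sqrt{1-\delta^2},\,\epsilon/(2\delta)\}>1$, and an appropriate $a=a(v,\epsilon)>0$ gives $M\le ab^n\Rightarrow$ probability $\ge 1-v$ (for small $n$ the bound is vacuous since $ab^n<1$, and $\delta>1/\sqrt n$ then holds automatically).

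The routine part is the independence bookkeeping (conditioning on the index-$\neq i$ variables), the triangle inequality $\|\boldsymbol{y}_j-\boldsymbol{y}_i\|\le 2(1-\epsilon)$, and passing from ``$(\star)$ for all pairs'' to Fisher-separability. The delicate part is quantitative: obtaining the quasi-orthogonality estimate with \emph{exactly} the constant $(\sqrt{1-\delta^2})^{n+1}/(\delta\sqrt n)$ — mere containment in a ball of radius $\sqrt{1-\delta^2}$ only gives $(\sqrt{1-\delta^2})^{n}$, so one must compute the cap volume more carefully, e.g.\ via $\int_\delta^1(1-t^2)^{(n-1)/2}dt\le(1-\delta^2)^{(n+1)/2}/(\delta(n+1))$ — and calibrating $\delta$ and $\alpha$ so that the deterministic implication in $(\star)$ closes while the two competing error terms stay as advertised. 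This trade-off (decreasing $\delta$ shrinks the quasi-orthogonality term but forces $\|\boldsymbol{u}_i\|$ to be bounded below more strongly, enlarging the small-ball term) is the crux of the argument.
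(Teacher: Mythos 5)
Your overall strategy --- recentre at $\boldsymbol{y}_i$, reduce Fisher separability to the per-pair inequality $(\star)$, and union-bound over quasi-orthogonality failures (cost $\tfrac{2M^2}{\delta\sqrt n}(\sqrt{1-\delta^2})^{n+1}$) and the small-ball events $\|\boldsymbol{x}_i-\boldsymbol{y}_i\|\le 2\delta$ (cost $M(2\delta/\epsilon)^n$) --- is the same as the paper's, and your event counts reproduce the stated probability. The gap is in the deterministic step that closes $(\star)$. You condition on the \emph{cap} events $(\boldsymbol{u}_i,\boldsymbol{e})\le\delta$ (un-normalized inner product), so the left-hand side of $(\star)$ is only bounded by $(2-\epsilon)\delta$ with no factor of $\|\boldsymbol{u}_i\|$, and you must recover two factors of $\|\boldsymbol{u}_i\|$ on the right from the crude lower bound $\|\boldsymbol{u}_i\|>2\delta/\epsilon$. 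The threshold you are then forced to take, $\alpha=\epsilon(2-\epsilon)/(4\delta)$, exceeds $1$ as soon as $\delta<\epsilon(2-\epsilon)/4$, and Fisher separability requires $\alpha\in[0,1)$. Hence your argument does not establish the first assertion, which is claimed for \emph{every} $\delta\in(1/\sqrt n,1)$: e.g.\ for $\epsilon=1/2$ and $\delta$ of order $1/\sqrt n$ the implication fails for all large $n$. The hedge ``in the relevant range of $\delta$'' is doing real work and cannot be discharged as written.

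The fix --- and this is what the paper does --- is to phrase the quasi-orthogonality events in terms of cosines: condition on $|(\widehat{\boldsymbol{u}_i},\boldsymbol{e})|\le\delta$ with $\widehat{\boldsymbol{u}_i}=\boldsymbol{u}_i/\|\boldsymbol{u}_i\|$, which is exactly the event bounded by \cite[Lemma 4.1]{Lovasz} and has the same probability cost (your solid cap is a strict subset of this cone). Then every term on the left of $(\star)$ carries a factor $\|\boldsymbol{u}_i\|$, namely $(\boldsymbol{u}_i,\boldsymbol{y}_j-\boldsymbol{y}_i)+\epsilon(\boldsymbol{u}_i,\boldsymbol{u}_j)\le(2-\epsilon)\delta\|\boldsymbol{u}_i\|$, while the right side satisfies $\alpha\epsilon\|\boldsymbol{u}_i\|^2>2\alpha\delta\|\boldsymbol{u}_i\|$ on the good event; the factors of $\|\boldsymbol{u}_i\|$ cancel and $(\star)$ closes with the $\delta$-independent threshold $\alpha=1-\epsilon/2\in(0,1)$. (The paper's formulation of this step is to compare projections onto $\widehat{\boldsymbol{u}}$ of the whole vectors $\boldsymbol{x}_j$, of norm at most $1$ --- hence at most $\delta$ --- with the projection of $\boldsymbol{x}_i$, which exceeds $\delta$.) Your ``in particular'' derivation survives, since there you may fix $\delta\in\left(\epsilon(2-\epsilon)/4,\ \epsilon/2\right)$, a nonempty interval for every $\epsilon\in(0,1)$; but the uniform-in-$\delta$ statement needs the normalized events.
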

\begin{proof}
Let $\boldsymbol{x} \in {\mathbb R}^n$ be an arbitrary non-zero vector, and let $\boldsymbol{u}$ be a vector selected uniformly at random from a unit ball. Then, for any $\frac{1}{\sqrt{n}}<\delta<1$,
\begin{equation}\label{eq:almostort}
P\left(\left|\left(\frac{\boldsymbol{x}}{||\boldsymbol{x}||}, \frac{\boldsymbol{u}}{||\boldsymbol{u}||}\right)\right| \geq \delta \right) \leq \frac{2}{\delta\sqrt{n}}\left(\sqrt{1-\delta^2}\right)^{n+1},
\end{equation}
see  \cite[Lemma 4.1]{Lovasz}.

Applying \eqref{eq:almostort} to $\boldsymbol{u}=\boldsymbol{x}_i-\boldsymbol{y}_i$, we get
$$
P\left(\left|\left(\frac{\boldsymbol{x}_j}{||\boldsymbol{x}_j||}, \frac{\boldsymbol{u}}{||\boldsymbol{u}||}\right)\right| \geq \delta \right) \leq \frac{2}{\delta\sqrt{n}}\left(\sqrt{1-\delta^2}\right)^{n+1}, \;  j\neq i,
$$
and also
$$
P\left(\left|\left(\frac{\boldsymbol{y}_i}{||\boldsymbol{y}_i||}, \frac{\boldsymbol{u}}{||\boldsymbol{u}||}\right)\right| \geq \delta \right) \leq \frac{2}{\delta\sqrt{n}}\left(\sqrt{1-\delta^2}\right)^{n+1}.
$$
On the other hand
$$
P\left(||\boldsymbol{x}_i-\boldsymbol{y}_i|| \leq 2\delta\right) = \left(\frac{2\delta}{\epsilon}\right)^n.
$$

If none of the listed events happen, then projections of all points $\boldsymbol{x}_j$, $j \neq i$, on $\boldsymbol{u}$ have length at most $\delta$ (because $||\boldsymbol{x}_j|| \leq 1, \forall j$), while the length of projection of $\boldsymbol{x}_i$ on $\boldsymbol{u}$ is greater than $\delta$, hence $\boldsymbol{x}_i$ is separable from other points by Fisher discriminant (with center $\boldsymbol{y}_i$). Hence, the probability that $\boldsymbol{x}_i$ is not separable is at most
$$
\frac{2M}{\delta\sqrt{n}}\left(\sqrt{1-\delta^2}\right)^{n+1}+\left(\frac{2\delta}{\epsilon}\right)^n
$$
The probability that there exist some index $i$ such that $\boldsymbol{x}_i$ is not separable is at most the same expression multiplied by $M$.
\end{proof}

{Theorem \ref{th:noisy} is yet another illustration of why randomization and randomized approaches to learning may improve performance of AI systems (see e.g. works by \citet{Wang2017} and \citet{Wang2017a} for more detailed discussion on the randomized approaches and supervisory mechanisms for random parameter assignment).}

{Moreover,} Theorem \ref{th:noisy} shows that  the cluster structure of data is not an insurmountable obstacle for separation theorems. The practical experience ensures us that combination of cluster analysis with stochastic separation theorems works much better than the stochastic separation theorems directly, if there exists a pronounced cluster structure in data. The preferable way of action is:
\begin{itemize}
\item Find clusters in data clouds;
\item Create classifiers for distribution of newly coming data between clusters;
\item Apply stochastic separation theorems with discriminant (\ref{discriminant}) for each cluster separately.
\end{itemize}

This is a particular case of the general rule about complementarity between low-dimensional non-linear structures and high-dimensional stochastic separation \cite{GorTyukPhil2018}.

\section{Stochastic separability in a real database: A LFW case study}\label{Sec:test}

\paragraph*{Data} LFW (Labeled Faces in the Wild) is a data set of faces of famous people like politicians, actors and singers \cite{LFWSurvey2016}. LFW includes 13,233 photos of 5,749 people. There are 10,258 males and 2,975 females in the database. The data are available online \cite{LFW}.

\paragraph*{Preprocessing} Every photo from the data set was {\em cropped and aligned}. The 128-dimensional {\em feature vector was calculated} by FaceNet for the cropped and aligned images. FaceNet learns a mapping from face images to a 128-dimensional Euclidean space where distances reflect face similarity \cite{FaceNet2015}. After that, we worked with 128-dimensional data vectors instead of initial images.

The next essential dimension decrease was provided by the standard {\em Principal Component analysis} (PCA) preceded by {\em centering} (to zero mean) and {\em normalization} (to unit standard deviation). Figure~\ref{variance} presents dependence of the eigenvalue $\lambda_k$ on $k$ (Fig.~\ref{variance} a))   and the cumulative plot of variance explained by $k$ first principal components as a function of $k$ (Fig.~\ref{variance} b)). There exist various heuristics for selection of main components \cite{Cangelosi2007}. It is obvious from the plot that for any reasonable estimate not more than 63 principal components are needed, which explain practically 100\% of data variance. We used the {\em multicollinearity control}. Multicollinearity means strong linear dependence between input attributes. This property makes the model very sensitive to fluctuations in data and is an important source of instability. The standard measure of multicollinearity  is the condition number of the correlation matrix, that is the ratio $\kappa=\lambda_{\max}/\lambda_{\min}$, where $\lambda_{\max}$ and $\lambda_{\min}$ are the maximal and the minimal eigenvalues of this matrix. Collinearity with $\kappa<10$ is considered as `modest' collinearity  \cite{Dormann2013}. Therefore, we selected the principal components with the eigenvalues of covariance matrix $\lambda\leq 0.1\lambda_{\max}$. There are 57 such components.

\begin{figure}
\centering
a)\includegraphics[width=0.45\textwidth]{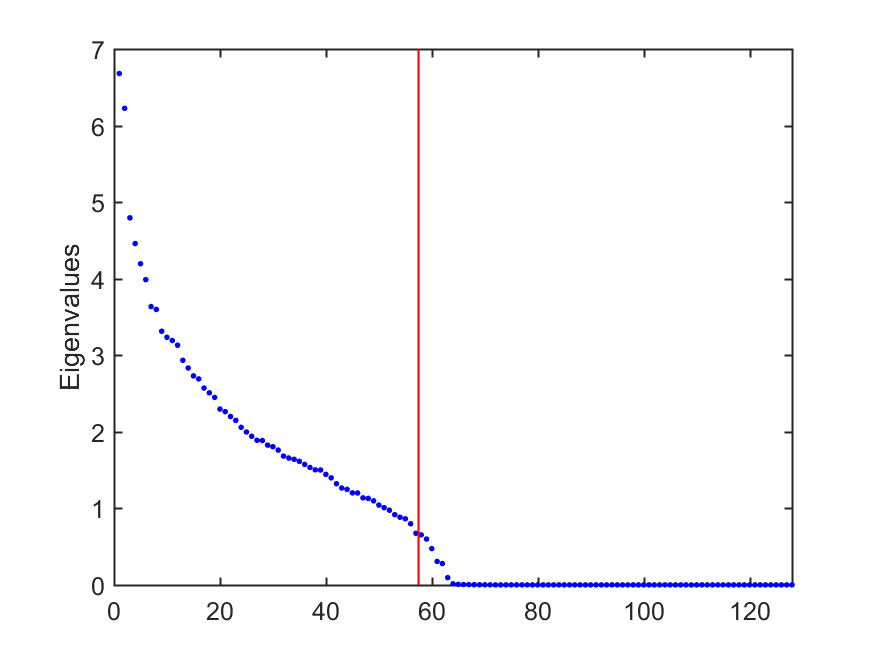} \;\; b)\includegraphics[width=0.45\textwidth]{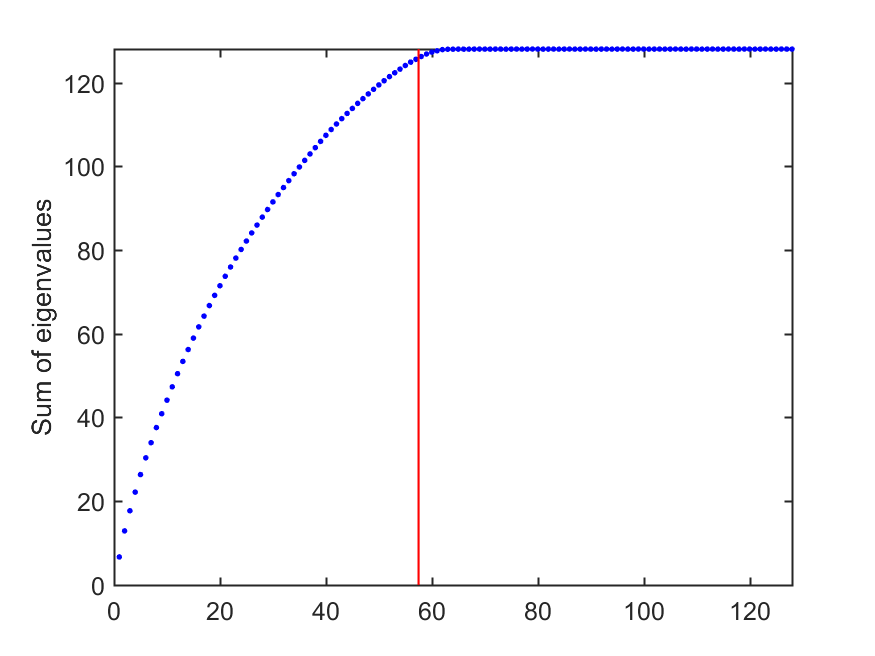}
\caption{Eigenvalue of the correlation matrix as a function of the number of principal component (a). Cumulative plot of the sum of the eigenvalues (variance explained) as a function of their number (b). The verticaL line separates first 57 eigenvalues.}
\label{variance}
\end{figure}

After projection on these 57 principal components,  {\em whitening} was applied in the 57-dimensional space. After whitening, the covariance matrix became identity matrix. In the principal component basis, whitening is just a coordinate transformation: $x_i \to x_i/\sqrt{\lambda_i}$, where $x_i $ is the projection of the data vector on $i$th principal component, and $\lambda_i$ is the eigenvalue  of the covariance matrix, correspondent to the $i$th principal component.

Optionally, additional preprocessing operation was applied after whitening, the normalization of each data vector to unit length (just take $x/\|x\|$ instead of $x$). This operation can be called the projection on the unit sphere.

\paragraph*{Separability analysis} The results of computation are presented in Table~\ref{tab:separability}. Here,  $N_{\alpha}$ is the number of points, which cannot be separated from the rest of the data set by Fisher's discriminant (\ref{discriminant}) with given $\alpha$, $\nu_{\alpha}$ is the fraction of these points in the data set, and $\bar{p_y}$ is the mean value of $p_y$.

\begin{table}[!ht]
\centering
\caption{Separability of data points by Fisher's linear discriminant in the preprocessed LFW data set.}
\label{tab:separability}
\begin{tabular}{|l|c|c|c|c|c|}
\cline{1-6}
$\alpha$ &	0.8&0.9&0.95&0.98&0.99  \\ \hline
\multicolumn{6}{|c|}{Separability from all data} \\ \hline
$N_{\alpha}$   &4058&751&123&26&10 \\\hline
$\nu_{\alpha}$  &0.3067&0.0568&0.0093&0.0020&0.0008\\\hline
$\bar{p_y}$  &9.13E-05&7.61E-06&8.91E-07&1.48E-07&5.71E-08 \\\hline
	\multicolumn{6}{|c|}{Separability from points of different classes}\\\hline
$N_{\alpha}^*$   &55&13&6&5&5 \\\hline
$\nu_{\alpha}^*$  &0.0042&0.0010&0.0005&0.0004&0.0004\\\hline
$\bar{p_y}^*$  &3.71E-07&7.42E-08&3.43E-08&2.86E-08&2.86E-08 \\\hline
\multicolumn{6}{|c|}{Separability from all data on unit sphere} \\ \hline
$N_{\alpha}$   &3826&475&64&12&4 \\\hline
$\nu_{\alpha}$  &0.2891&0.0359&0.0048&0.0009&0.0003\\\hline
$\bar{p_y}$  &7.58E-05&4.08E-06&3.66E-07&6.85E-08&2.28E-08 \\\hline
	\multicolumn{6}{|c|}{Separability from points of different classes on unit sphere}\\\hline
$N_{\alpha}^*$   &37&12&8&4&4 \\\hline
$\nu_{\alpha}^*$  &0.0028&0.0009&0.0006&0.0003&0.0003\\\hline
$\bar{p_y}^*$  &2.28E-07&6.85E-08&4.57E-08&2.28E-08&2.28E-08 \\\hline
\end{tabular}
\end{table}

We cannot expect an i.i.d. data sampling for a `good' distribution to be an appropriate model for the LFW data set. Images of the same person are  expected to have more similarity between them than images of different persons. It is expected that this set of data in FaceNet coordinates will be even further from the reasonable representation in the form of i.i.d. sampling of data, because it is prepared to group images of the same person together and `repel' images of different persons. Consider the property of image to be `separable from images of other persons'. Statistics of this `separability from all points of different classes' by Fisher's discriminant is also presented in Table~\ref{tab:separability}. We call  the point $\boldsymbol{x}$ inseparable from points of different classes if at least for one point
$\boldsymbol{y}$ of a different class (image of a different person)
$(\boldsymbol{x},\boldsymbol{y})> \alpha (\boldsymbol{x},\boldsymbol{x})$.
We use stars in Table for statistical data about separability  from points of different classes (i.e., use the notations $N_{\alpha}^*$, $\nu_{\alpha}^*$, and $\bar{p_y}^*$).

It is not surprising that this separability is much more efficient, with less inseparable points. Projection on the unit sphere also improves separability.

It is useful to introduce some baselines for comparison: what values of  $\bar{p_y}$ should be considered as small or large? Two levels are obvious: for the experiments, where all the data points are counted in the excluded volume, we consider the level $p=1/N_{\rm persons}$ as the critical one, where $N_{\rm persons}$ is the number  of different persons in the database. For the experiments, where only images of different persons are counted in the excluded volume, the value $1/N$ seems to be a good candidate to separate the `large' values of  $\bar{p_y}^*$ from the `small' ones. For LFW  data set, $1/N_{\rm persons}=$1.739E-04 and $1/N=$7.557E-05. Both levels have achieved already for $\alpha=0.8$. The parameter $\nu_{\alpha}^*$ can be considered for experiments with separability from points of different classes as an error of separability. This error is impressively small: already for $\alpha=0.8$ it is less than 0.5\% without projection on unit sphere and less than 0.3\% with this projection. The ratio $(N_{\alpha}-N_{\alpha}^*)/N_{\alpha}^*=(\nu_{\alpha}-\nu_{\alpha}^*)/\nu_{\alpha}^*$ can be used to evaluate   the generalisation ability of the Fisher's linear classifier. The nominator is the number of data points (images) inseparable from some points of the same class (the person) but separated from the points of all other classes. For these images we can say that Fisher's discriminant makes some generalizations. The denominator is the number of data points inseparable from some points of other classes (persons). According to the proposed indicator, the generalization ability is impressively high for $\alpha=0.8$ (it is 72.8 for preprocessing without projection onto unit sphere and 102.4 for prerpocessing with such projection. For $\alpha=0.9$ it is smaller (56.8 and 38.6 correspondingly) and the decays fast with growing of $\alpha$. When $\alpha$ approaches 1, both $N_{\alpha}$ and $N_{\alpha}^*$ decay, and the separability of a point from all other points becomes better.  

\begin{figure}
\centering
a)\includegraphics[width=0.45\textwidth]{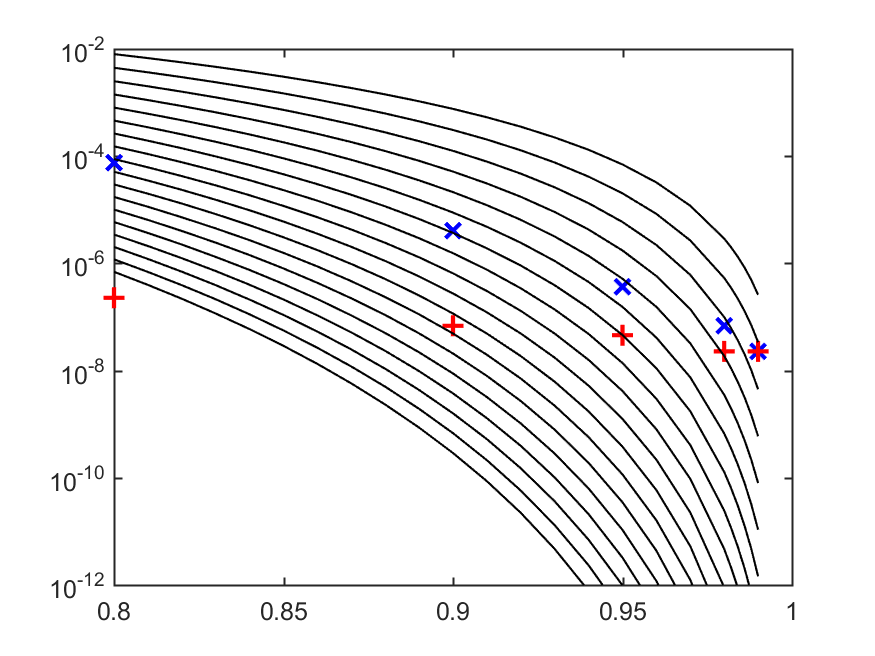} \;\; b)\includegraphics[width=0.45\textwidth]{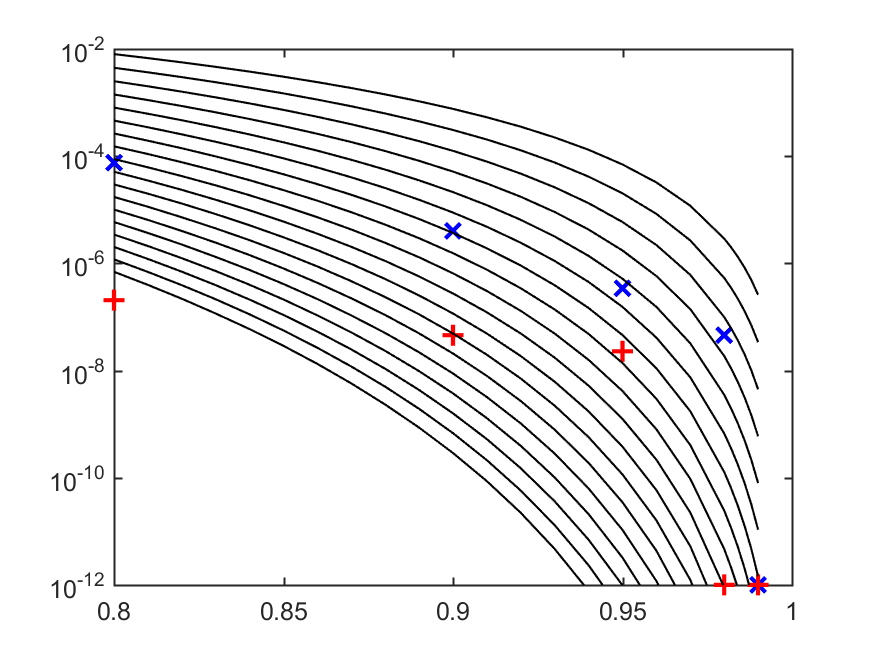}
\caption{Dependence of $\bar{p_y}$ (pluses, ${\mathbf +}$) and  $\bar{p_y}^*$ (skew crosses, ${\mathbf \times}$) on $\alpha$ for the preprocessed LWF data set projected on the unit sphere (data from Table~\ref{tab:separability}). Solid lines -- dependencies of $\bar{p_y}$ on $\alpha$ for equidisrtibutions on the unite sphere  in different dimensions $n$ (\ref{p_y on sphere}, from $n=8$ to $n=25$, from the top down. (a) Original database with saturation at $\alpha=0.99$; (b) The database after fixing two labeling mistakes. Pluses and crosses on the bottom line correspond to zero values.}
\label{Fig:pwithoutlier}
\end{figure}

In the projection on the unit sphere, $N_{\alpha},N_{\alpha}^*,\bar{p_y}, \bar{p_y}^*\to  0$ when $\alpha \to 1$. This is a trivial separability on the border of a strongly convex body where each point is an extreme one. The rate of this tendency to 0 and the separability for $\alpha<1$ depend on the dimension and on the intrinsic structures in data (multicluster structure, etc.). Compare the observed behaviour of $\bar{p_y}$ and $\bar{p_y}^*$ for the LWF data set projected on the unit sphere to $p_y$ for equidistribution on the unit spere (\ref{p_y on sphere}). We can see in Fig.~\ref{Fig:pwithoutlier} (a) that the values of $\bar{p_y}$ for the preprocessed LWF database projected on the unit sphere correspond approximately to the equidistribution on the sphere in dimension 16, and for $\alpha=0.9$ this effective dimension decreases to 14. For $\bar{p_y}^*$ we observe higher effective dimensions: it is approximately 27 for $\alpha=0.8$ and 19 for $\alpha=0.9$.  There is a qualitative difference between behaviour of $\bar{p_y}$ for empirical database and for the equidistribution. For the equidistribution on the sphere, $\bar{p_y}$ decreases approaching the point $\alpha=1$  from below like $const\times (1-\alpha)^{(n-1)/2}$. In the logarithmic coordinates it should look like $const+0.5(n-1)\ln(1-\alpha)$, exactly as we can see in Fig.~\ref{Fig:pwithoutlier}. The behavior of $\bar{p_y}$ for  the LWF database is different and demonstrates some saturation rather than decay to zero. The human inspection of the inseparability at $\alpha=0.99$ shows that there are two pairs of images, which completely coincide but are labeled differently. This is an error in labeling. After fixing of this error, the saturation vanished (Fig.~\ref{Fig:pwithoutlier} (b)) but the obvious difference between curves for the empirical data and the data for equidistribution on the sphere remained. We assume that this difference is caused by multicluster structure of the preprocessed LWF database.

\section{Summary}

\begin{itemize}
\item The problem of correctors  is formulated for AI systems in a multidimensional world. The proposed ideal structure of such correctors consists of two parts: (i) a classifier that separates  situations with diagnosed errors from  situations without diagnosed errors and (ii) a modified decision rule for the situations with high risk of errors.  The technical requirements to the `ideal' correctors are formulated.
\item If a data set is `essentially high-dimensional' then each point can be  separated  with high probability from the rest of the set by a simple Fisher's linear discriminant, even for exponentially large data sets. This is a manifestation of the `blessing of dimensionality' phenomena. In the previous works this statement was proven for equidistributions in a multidimensional ball or for products of measures in a cube (independent components of `Naive Bayes' approximation with bounded support) \cite{GorbTyu2017,GorTyukPhil2018}, but real data rarely have such distribution. The problem of characterising of essentially high-dimensional distributions remained open.
\item In this paper we solve the problem of characterizing `essentially multidimensional distributions'. Roughly speaking, all such distributions are constructed using the following property: the probability of a point to belong to a set of small volume cannot be large (with various specifications  of `small' volume and `large'  probability for this context in spaces of   growing dimension,  see inequality  (\ref{bounded}) and Theorem \ref{Theorem:ExclVol2} as the first examples). We introduced a wide class of distribution with SmAC  property defined through relations between the volume and probability of sets of vanishing volume and proved stochastic separation theorem for these measures (Definition \ref{Def:SmAC}). For SmAC distributions, the general stochastic separation theorem holds. According to this theorem, it is possible to separate every data point from all other points by linear functionals with high probability and even for exponentially large data sets (Theorem \ref{th:separation}).
\item Separation by Fisher's linear discriminants is more attractive for applications than just linear separability. We found a series of conditions for Fisher's separability in high dimensions, from conditions (\ref{bounded}), which can be considered as the reinforced SmAC condition, to log-concavity of distributions.  The areas with relatively low volume but high probability do not exist for all these distributions. The {\em comparison theorem} (Theorem~\ref{prop:domin}) enables us to obtain new stochastic separation conditions by comparing the distributions under consideration with distributions from known separability conditions.
\item Real life data sets and streams of data are not i.i.d samples. The complicated correlations and multi-cluster structures are very common in real life data. We formulated a series of statements about stochastic separation  without i.i.d. hypothesis, for example, Theorems~\ref{Theorem:ExclVol2}, \ref{th:noisy}, and Example \ref{ex:noisy}. This work should be continued to make the theoretical foundations of machine learning closer to real life. 
\item The LFW case study (Sec.~\ref{Sec:test}) demonstrates that (i) Fisher's separability exists in the real life data sets, indeed, (ii) the real life data set  can be significantly different from simple distributions, such as equidistribution on a sphere,  (iii) the separability statistical property of the real data sets might correspond to the separability properties of simple equidistribution on the sphere in lower dimensions (in the example it varies between  14 and 27 instead of original 57 principal components), and (iv) analysis of separability can help in handling errors and outliers.
\item Stochastic separation theorems provide us with the theoretical foundations of fast non-iterative correction of legacy AI systems. These correctors should be constructed without damaging of the skills of the systems in situations, where they are working correctly, and with the ability to correct the new error without destroying the previous fixes. There are some  analogies with the classical cascade correlation \cite{fahlman1990cascade} and greedy approximation \cite{Barron} and with some recent works like neurogenesis deep learning \cite{draelos2016neurogenesis} and deep stochastic configuration networks \cite{Wang2017}. All these methods rely upon new nodes to fix errors. Idea of cascade is also very useful. The corrector technology differs from these prototypes in that it  does not need computationally expensive training or pre-conditioning, and  can be set up as a non-iterative one-off procedure.
\end{itemize}

\section{Conclusion and outlook}

It is necessary to correct  errors of AI systems. The future development of sustainable large AI systems for mining of big data requires creation of technology and methods for fast non-iterative, non-destructive, and reversible corrections of Big Data analytic systems and for fast assimilation of new skills by the networks of AI. This process should exclude human expertise as far as it is possible.

In this paper, we present  a series of theorems, which aims to be a probabilistic foundation of the AI corrector technology. We demonstrate  that classical, simple and robust linear Fisher's discriminant can be used for development of correctors if the data clouds are essentially high-dimensional. We present wide classes of data distributions for which the data sets are linearly separable and Fisher-separable in high dimensions. New stochastic separation theorems demonstrate that the corrector technology can be used to handle errors in data flows with very general probability distributions and far away from the classical i.i.d. hypothesis.

These theorems give the theoretical probabilistic background of the blessing of dimensionality and correction of AI systems by linear discriminants. The cascades of independent linear discriminants are also very simple and even more efficient \cite{GorTyuRom2016,GorbanRomBurtTyu2016}. We have systematically tested linear and cascade correctors with simulated data and on the processing of  real videostream data \cite{GorbanRomBurtTyu2016}. The combination of low-dimensional non-linear decision rules with the  high-dimensional simple linear discriminants is a promising direction of the future development of algorithms.

Combination of legacy AI systems with cascades of simple correctors can modify the paradigm of AI systems. There are {\em core system}. Development of core systems requires time, computational resources and large data collections. These systems are higly integrated. Various special methods are developed for their creation and learning from data including very popular now deep learning \cite{Goodfellow-et-al-2016}. There are also methods for simplification of the core systems, from linear and non-linear principal component analysis \cite{GorbanZinovyev2009} to special algorithms for reduction of deep neural networks \cite{GorbanMirkesTukin2018,Iandola2016}. Core systems could be also rule based systems or hybrid systems which employ  combination of methods from different AI fields.

Functioning of a core system in  realistic operational evironment with concepts drift, non i.i.d. data and other problems requires their regular corrections.  A cloud of `ad hoc' correctors will protect the core system from errors. This protection has two aspects: (i) it is aimed at improving the quality of the functioning of AI and (ii) prolongs its service life in a changing world. Reengineering with the interiorisation of  cloud of correctors skills into the core AI is also possible \cite{GorbanGrechukTyukin2018}. These clouds of correctors help also transfer skills between AI systems \cite{Tyukin2017a}. Mutual corrections in AI networks allow them to adapt to a changing world \cite{GorbanGrechukTyukin2018}. 

This vision of the future augmented artificial intelligence is supported by some  recent achievements of modern neurophysiology.  One of the brain mystery is the well-documented phenomenon of Grandmother Cells and the so-called Concept Cells \cite{QuianQuiroga2012}. Some neurons respond
unexpectedly selective to particular persons or objects. The small ensembles of neurons enable brain to respond selectively to rare  individual stimuli and such selectivity can be learned very rapidly from limited number of experiences.  The system of  small selective ensembles resembles the clouds of correctors. The model based on this  analogy  is capable of explaining: (i) the extreme selectivity of single neurons to the information content, (ii) simultaneous separation of several uncorrelated stimuli or informational items from a large set, and (iii) dynamic learning of new items by associating them with already known ones \cite{TyukinBrain2017}.

We expect that the future development of the paradigm of augmented artificial intelligence based on the hierarchy: core AI systems - clouds of simple correctors - networks of interacting AI systems with mutual corrections will help in solution of the real life problems and in the development of interdisciplinary neuroscience. These expectations are already supported by applications to  several industrial and interdisciplinary research projects. 

\section*{Acknowledgment}

ANG and IYT were Supported by Innovate UK (KTP009890 and KTP010522) and Ministry of science and education, Russia (Project 14.Y26.31.0022). BG  thanks the University of Leicester for granting him academic study leave to do this research.

The proposed corrector methodology was implemented and successfully tested with videostream data and security tasks in collaboration with industrial partners: Apical, ARM, and VMS under support of InnovateUK. We are grateful to them and personally to I. Romanenko, R. Burton, and  K. Sofeikov.

We are grateful to M.~Gromov, who attracted our attention to the seminal question about product distributions in a multidimensional cube, and to G.~Hinton for the important remark that the typical situation with the real data flow is far from an i.i.d. sample (the points we care about seem to be from different distributions).


\begin{thebibliography}{99}

\bibitem[Anderson et~al.(2014)]{AndersonEtAl2014}J. Anderson, M. Belkin, N. Goyal, L.  Rademacher, J. Voss,  The more, the merrier: the blessing of dimensionality for learning large Gaussian mixtures, Journal of Machine Learning Research: Workshop and Conference
Proceedings {35} (2014) 1--30.

\bibitem[B{\'a}r{\'a}ny and F\"{u}redi(1988)]{convhull}I. B{\'a}r{\'a}ny, Z.  F\"{u}redi, On the shape of the convex hull of random points.  Probability theory and related fields  {77}(2) (1988) 231--240.

\bibitem[[B{\'a}r{\'a}ny and F\"{u}redi(1988a)]{volume}I. B{\'a}r{\'a}ny, Z. F\"{u}redi, Approximation of the sphere by polytopes having few vertices, Proceedings of the American Mathematical Society  {102}(3) (1988) 651--659.

\bibitem[{Barron(1993)}]{Barron}{A.R. Barron}, Universal Approximation Bounds
  for Superposition of a Sigmoidal Function, IEEE Trans. on    Information Theory 39(3)  (1993) {930--945}.

\bibitem{Bellman1957}R.E.  Bellman, Dynamic Programming. Princeton University Press. 1957.

\bibitem[Bobkov and Ledoux(2000)]{Bobkov}S. Bobkov, M. Ledoux,  From Brunn-Minkowski to Brascamp-Lieb and to logarithmic Sobolev inequalities, Geometric \& Functional Analysis  {10}(5) (2000) 1028--1052.

\bibitem[Bobkov(1999)]{Bobkov2}S. Bobkov,  Isoperimetric and analytic inequalities for log-concave probability measures, The Annals of Probability {27}(4) (1999) 1903--1921.


\bibitem[{Bordes et~al.(2005)Bordes, Ertekin, Weston, and  Bottou}]{bordes2005fast} A. Bordes, S. Ertekin,  J. Weston,  L. Bottou,  Fast kernel classifiers with online and active learning. Journal of Machine Learning Research 6 (2005), 1579--1619.

\bibitem[Brazitikos et al.(2014)]{Brazitikos}S. Brazitikos, A. Giannopoulos, P. Valettas, B. Vritsiou, Geometry of Isotropic Convex Bodies. Mathematical Surveys and Monographs, vol. 196. American Mathematical Soc., 2014.

\bibitem{Cangelosi2007}R. Cangelosi, A. Goriely, Component retention in principal component analysis with application to cDNA microarray data,
Biology Direct 2 (2007), 2. \url{https://doi.org/10.1186/1745-6150-2-2}.


 

\bibitem[{Chapelle(2007)}]{chapelle2007training} O. Chapelle,  Training a support vector machine in the primal, Neural computation 19 (2007), 1155—1178.

\bibitem{ChenEtAl2013}D. Chen, X. Cao, F. Wen, J. Sun, Blessing of dimensionality: High-dimensional feature and its efficient compression for face verification. In Computer Vision and Pattern Recognition (CVPR), 2013 IEEE Conference on 2013 Jun 23-28 (pp. 3025-3032). IEEE.

\bibitem[Cucker and Smale S.(2002)]{Cucker2002}F. Cucker, S. Smale, On the mathematical foundations of learning. {Bull. Amer. Math. Soc.}  {39} (2002) 1--49.

\bibitem[Donoho(2000)]{Donoho2000}D.L. Donoho,   High-dimensional data analysis: The curses and blessings of dimensionality. {AMS Math Challenges Lecture}, 1, 32 pp.,  2000.  \url{http://statweb.stanford.edu/~donoho/Lectures/AMS2000/Curses.pdf}

\bibitem[Donoho and Tanner(2009)]{DonohoTanner2009}D. Donoho, J. Tanner,   Observed universality of phase transitions in high-dimensional geometry, with implications for modern data analysis and signal processing.  Phil. Trans. R. Soc. A  {367} (2009) 4273--4293.

\bibitem{Dormann2013}Dormann, C.F., Elith, J., Bacher, S., Buchmann, C., Carl, G., Carr\'{e}, G., Marqu\'{e}z, J.R., Gruber, B., Lafourcade, B., Leit\~{a}o, P.J., M\"{u}nkem\"{u}ller, T.: Collinearity: a review of methods to deal with it and a simulation study evaluating their performance. Ecography  36(1)  (2013), 27--46.

\bibitem[Draelos et~al.(2016)]{draelos2016neurogenesis}T.J. Draelos, N.E. Miner, C.C. Lamb, C.M. Vineyard, K.D. Carlson, C.D. James, J.B. Aimone,  Neurogenesis Deep Learning, (2016)  	arXiv:1612.03770 [cs.NE], \url{https://arxiv.org/abs/1612.03770}.

\bibitem{face2018}Face recognition systems and error rates - is this a concern? Biometrics Institute,  Thu 24 May 2018, \url{https://www.biometricsinstitute.org/blogs/face-recognition-systems-and-error-rates-is-this-a-concern-}

\bibitem[{Fahlman and Lebiere(1990)}]{fahlman1990cascade}{S.E. Fahlman}, {C. Lebiere},
{The cascade-correlation learning architecture}, In Advances in neural information processing systems 3 (NIPS 1990), R.P. Lippmann and J.E. Moody and D.S. Touretzky (eds.), IEEE press, {1990}, pp. {524--532}.

\bibitem[Foxx(2018)]{Foxx2018}C. Foxx,  Face recognition police tools `staggeringly inaccurate', BBC News, Technology, 15 May 2018. \url{http://www.bbc.co.uk/news/technology-44089161}

\bibitem{GiannopoulosMilman2000}A.A. Giannopoulos, V.D. Milman, Concentration property on probability spaces. Adv. Math. 156 (2000), 77--106.

\bibitem{Goodfellow-et-al-2016}I. Goodfellow, Y. Bengio, A. Courville, Deep Learning (Adaptive Computation and Machine Learning series) , MIT Press, 2016.

\bibitem[Gorban, Romanenko et al.(2016)]{GorbanRomBurtTyu2016}A.N. Gorban, I. Romanenko, R. Burton, I.Y. Tyukin,  One-trial correction of legacy {AI} systems and stochastic separation theorems, (2016)	arXiv:1610.00494 [stat.ML]. \url{https://arxiv.org/abs/1610.00494}

\bibitem[Gorban and Tyukin(2017)]{GorbTyu2017}A.N. Gorban, I.Y. Tyukin,   Stochastic separation theorems,  { Neural Netw.} {94} (2017)  255--259.

\bibitem{GorbanGrechukTyukin2018}A.N. Gorban, B. Grechuk, I.Y. Tyukin, Augmented Artificial Intelligence: a Conceptual Framework, (2018)  arXiv:1802.02172, \url{https://arxiv.org/abs/1802.02172}

\bibitem{GorbanMirkesTukin2018}A.N. Gorban, E.M. Mirkes, I.Y. Tukin, How deep should be the depth of convolutional neural networks: a backyard dog case study, (2018) arXiv preprint arXiv:1805.01516  [cs.NE], \url{https://arxiv.org/abs/1805.01516}.

\bibitem[Gorban and Tyukin(2018)]{GorTyukPhil2018}A.N. Gorban, I.Y. Tyukin,  Blessing of dimensionality: mathematical foundations of the statistical physics of data, Phil. Trans. R. Soc. A  376 (2018) 20170237.

\bibitem[Gorban, Tyukin et al.(2016)]{GorTyuRom2016}A.N. Gorban, I.Y. Tyukin, I. Romanenko,  The blessing of dimensionality: separation theorems in the thermodynamic limit,  IFAC-PapersOnLine { 49-24} (2016) 064--069.

\bibitem[Gorban, Tyukin et al.(2016a)]{bases}A.N. Gorban, I.Y. Tyukin, D.V. Prokhorov, K.I. Sofeikov,  Approximation with random bases: Pro et contra, Information Sciences  {364} (2016), 129--145.

\bibitem{GorbanZinovyev2009}A.N. Gorban, A.Y. Zinovyev, Principal Graphs and Manifolds, In Handbook of Research on Machine Learning Applications and Trends: Algorithms, Methods, and Techniques, Emilio Soria Olivas et al. (eds), IGI Global,  2009, pp. 28--59.

\bibitem[Gu{\'e}don and Milman(2011)]{Guedon}O. Gu{\'e}don, E. Milman,   Interpolating thin-shell and sharp large-deviation estimates for lsotropic log-concave measures, Geometric and Functional Analysis,  {21}(5) 2011) 1043--1068.

\bibitem{Iandola2016}F.N. Iandola, S. Han, M.W.  Moskewicz, K. Ashraf,  W.J. Dally, K. Keutzer,  SqueezeNet: AlexNet-level accuracy with 50x fewer parameters and< 0.5 MB model size. arXiv preprint, (2016) arXiv:1602.07360 [cs.CV], \url{https://arxiv.org/abs/1602.07360}.

\bibitem{Jiang2018}H. Jiang, B. Kim, M. Gupta, To trust or not to trust a classifier, (2018), 	arXiv:1805.11783 [stat.ML], \url{https://arxiv.org/abs/1805.11783}.

\bibitem[Kainen(1997)]{Kainen1997}P.C. Kainen,  Utilizing geometric anomalies of high dimension: when complexity makes computation easier. In { Computer-Intensive Methods in Control and Signal Processing: The Curse of Dimensionality}, Springer, 1997, pp. 283--294.


\bibitem[Kainen and K{\r{u}}rkov{\'a}(1993)]{Kurkova1993}P. Kainen,  V. K{\r{u}}rkov{\'a}, Quasiorthogonal dimension of Euclidian spaces,  Appl. Math. Lett.  {6} (1993), 7--10.

 \bibitem[K{\r{u}}rkov{\'a} and Sanguineti(2017)]{Kurkova:2017}V. K{\r{u}}rkov{\'a}, M.  Sanguineti,  Probabilistic lower bounds for approximation by shallow perceptron networks, Neural Netw.  {91} (2017)  34--41.

\bibitem{LFWSurvey2016}E. Learned-Miller, G.B. Huang, A. RoyChowdhury, H. Li, G. Hua, Labeled Faces in the Wild: A Survey.
 In Advances in Face Detection and Facial Image Analysis, ed. by M. Kawulok, M.E. Celebi, B. Smolka, Springer,  2016, pp. 189--248,.

\bibitem[Ledoux(2005)]{Ledoux2005}M. Ledoux, {  The Concentration of Measure Phenomenon,} (Mathematical Surveys \& Monographs No. 89), AMS, 2005.

\bibitem{LFW}LFW Database: Main \url{http://vis-www.cs.umass.edu/lfw/}, (accessed 16 May 2018).

\bibitem[Lov{\'a}sz and Vempala(2007)]{Lovasz}L. Lov{\'a}sz, S. Vempala,  S.,  The geometry of logconcave functions and sampling algorithms,  Random Structures \& Algorithms  {30}(3) (2007)  307--358.

\bibitem[Paouris(2012)]{Paouris}G. Paouris,   Small ball probability estimates for log-concave measures. Transactions of the American Mathematical Society  {364}(1) (2012) 287--308.

\bibitem{Pestov2013}V. Pestov, Is the k-NN classifier in high dimensions affected by the curse of
dimensionality? Comput. Math. Appl. 65 (2013) 1427--1437.

\bibitem{QuianQuiroga2012}R. Quian Quiroga,  Concept cells: the building blocks of declarative memory functions. Nature Reviews Neuroscience 13(8) (2012.), 587--597.


\bibitem{Ribeiro2016}M.T. Ribeiro, S. Singh, C. Guestrin, Why should I trust you?: Explaining the predictions of any classifier. In Proceedings KDD '16 The 22nd ACM SIGKDD International Conference on Knowledge Discovery and Data Mining, San Francisco, CA, USA -- August 13--17, 2016  (pp. 1135-1144). ACM, NY.

\bibitem{PatentRomanenkoGorTyu}I. Romanenko, A. Gorban, I. Tyukin - Image processing, US Patent App. 15/716, 220, 2018. \url{https://patents.google.com/patent/US20180089497A1/en}

\bibitem[Scardapane and Wang(2017)]{Wang2017a}S. Scardapane, D. Wang, Randomness in neural networks: an overview. Wiley Interdisciplinary Reviews: Data Mining and Knowledge Discovery  {7}(2) (2017).

\bibitem{FaceNet2015}F. Schroff, D. Kalenichenko, J. Philbin, FaceNet: A Unified Embedding for Face Recognition and Clustering.
In Proceedings of the IEEE conference on computer vision and pattern recognition CVPR2015, IEEE,2015, pp. 815--823,

\bibitem[Simon(2011)]{Simon2011}B. Simon,
Convexity, An Analytic Viewpoint, Cambridge Tracts in   Mathematics, V. 187. Cambridge University Press, 2011.

\bibitem[Stavrakakis and Valettas(2013)]{Stavrakakis}P. Stavrakakis,  P.  Valettas,  On the geometry of log-concave probability measures with bounded log-Sobolev constant.  In: Ludwig M., Milman V., Pestov V., Tomczak-Jaegermann N. (eds) Asymptotic Geometric Analysis. Fields Institute Communications, vol. 68, 2013, pp. 359--380.

\bibitem[Talagrand(1995)]{Talagrand1995}M. Talagrand,  Concentration of measure and isoperimetric inequalities in product spaces, {Publications Mathematiques de l'IHES} 81 (1995) 73--205.

\bibitem{Trunk1979}G.V. Trunk,  A Problem of Dimensionality: A Simple Example, IEEE Transactions on Pattern Analysis and Machine Intelligence  PAMI-1 (3) (1979)  306--307.

\bibitem[Tyukin, Gorban et al.(2017)]{Tyukin2017a}I.Y. Tyukin, A.N. Gorban, K. Sofeikov, I.  Romanenko, Knowledge transfer between artificial intelligence systems, (2017) arXiv:1709.01547 [cs.AI] ,  \url{https://arxiv.org/abs/1709.01547}.




\bibitem[Tyukin, Gorban et al.(2018)]{TyukinBrain2017}I.Y. Tyukin, A.N. Gorban, C. Calvo, J. Makarova, V.A. Makarov,  High-dimensional brain. A tool for encoding and rapid learning of memories by single neurons, Bull. Math. Biol., (2018), \url{https://doi.org/10.1007/s11538-018-0415-5}

\bibitem[Vapnik(2000)]{Vapnik2000}V. Vapnik,  The Nature of Statistical Learning Theory, Springer, New York, 2000.

\bibitem[Wang and Li(2017)]{Wang2017}D. Wang, M.  Li, Stochastic configuration networks: Fundamentals and algorithms, IEEE Trans. On Cybernetics {47}(10) (2017) 3466--3479.
\end{thebibliography}
\end{document}